\newtheorem{theorem}{Theorem}
\newtheorem{definition}{Definition}
\newtheorem{lemma}{Lemma}
\newcommand{\framework}{ObjectSeeker\xspace}
\newcommand{\cmark}{\ding{51}}%
\newcommand{\xmark}{\ding{55}}%
\algnewcommand{\LeftCommenta}[1]{\Statex \hspace{1.3em} \(\triangleright\) #1}
\algnewcommand{\LeftCommentb}[1]{\Statex \hspace{2.7em} \(\triangleright\) #1}
\definecolor{darkgreen}{RGB}{25, 136, 46}
\newcommand{\st}{\mathrm{~~s.t.~~}}
\newcommand{\cA}{{\mathcal A}}
\newcommand{\cB}{{\mathcal B}}
\newcommand{\cM}{{\mathcal M}}
\newcommand{\cO}{{\mathcal O}}
\newcommand{\cP}{{\mathcal R}}
\newcommand{\cX}{{\mathcal X}}
\newcommand{\cY}{{\mathcal Y}}
\newcommand{\bfb}{\mathbf{b}}
\newcommand{\bfm}{\mathbf{m}}
\newcommand{\bfp}{\mathbf{r}}
\newcommand{\bfx}{\mathbf{x}}
\newcommand{\bfbgt}{\mathbf{b}_{\text{gt}}}
\newcommand{\bfbm}{\mathbf{b}_{\text{m}}}
\newcommand{\bfbb}{\mathbf{b}_{\text{b}}}
\newcommand{\gammab}{\gamma_{\text{b}}}
\newcommand{\gammam}{\gamma_{\text{m}}}
\begin{document}

\pagestyle{plain}

\title{\framework: Certifiably Robust Object Detection against Patch Hiding Attacks via Patch-agnostic Masking}
%
\author{\IEEEauthorblockN{Chong Xiang}
\IEEEauthorblockA{
Princeton University\\
cxiang@princeton.edu}
\and
\IEEEauthorblockN{Alexander Valtchanov}
\IEEEauthorblockA{
Princeton University\\
alexvaltchanov@princeton.edu}
\and
\IEEEauthorblockN{Saeed Mahloujifar}
\IEEEauthorblockA{
Princeton University\\
sfar@princeton.edu}
\and
\IEEEauthorblockN{Prateek Mittal}
\IEEEauthorblockA{
Princeton University\\
pmittal@princeton.edu}
\and
}

\maketitle

\begin{abstract}

Object detectors, which are widely deployed in security-critical systems such as autonomous vehicles, have been found vulnerable to \textit{patch hiding attacks}. An attacker can use a single physically-realizable adversarial patch to make the object detector miss the detection of victim objects and undermine the functionality of object detection applications. 
In this paper, we propose \framework for certifiably robust object detection against patch hiding attacks. The key insight in \framework is \textit{patch-agnostic masking}: we aim to mask out the entire adversarial patch without knowing the shape, size, and location of the patch. This masking operation neutralizes the adversarial effect and allows \emph{any} vanilla object detector to safely detect objects on the masked images. 
Remarkably, we can evaluate \framework's robustness in a certifiable manner: we develop a certification procedure to formally determine if \framework can detect certain objects against \textit{any white-box adaptive} attack within the threat model, achieving certifiable robustness. Our experiments demonstrate a significant ($\sim$10\%-40\% absolute and $\sim$2-6$\times$ relative) improvement in certifiable robustness over the prior work, as well as high clean performance ($\sim$1\% drop compared with undefended models).\footnote{Our source code is available at \url{https://github.com/inspire-group/ObjectSeeker}.}

\end{abstract}

\section{Introduction}\label{sec-introduction}

\begin{figure*}[t]
    \centering
    \includegraphics[width=\linewidth]{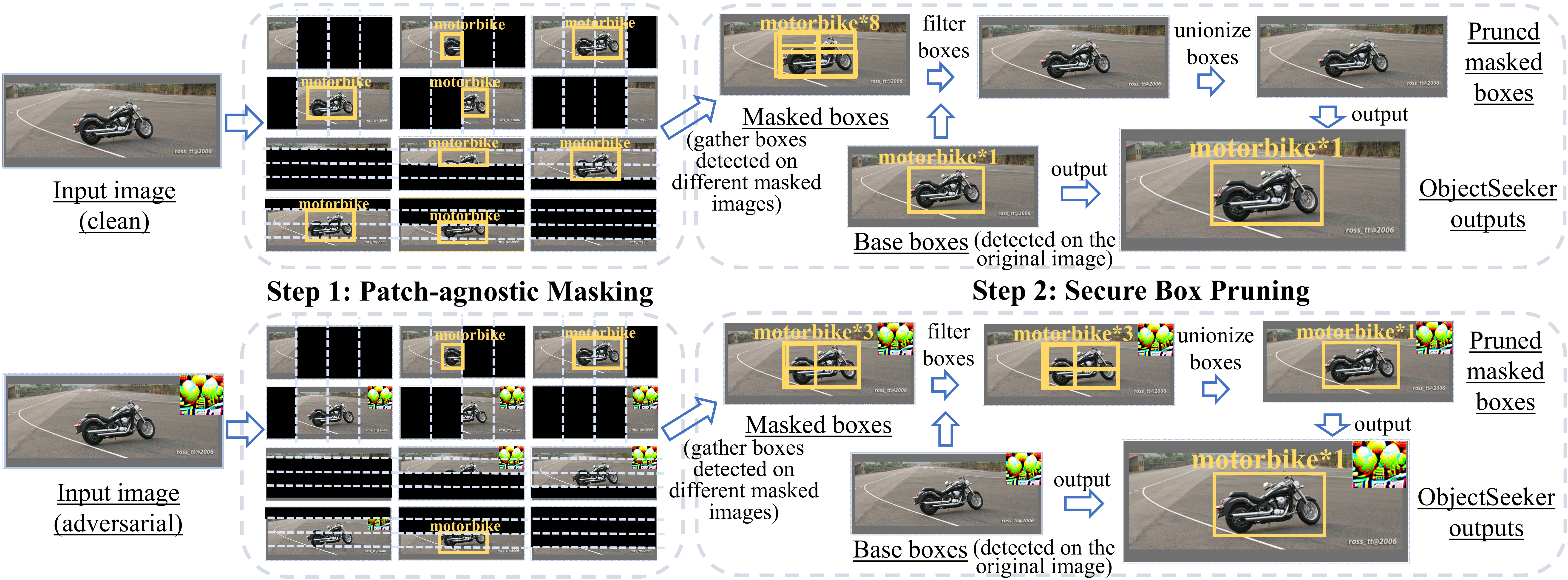}
    \vspace{-2em}
    \caption{\textbf{\framework Overview.}\textmd{ \textit{Step 1: patch-agnostic masking} -- we mask out image halves divided by a set of vertical and horizontal lines, and perform vanilla object detection on the remaining halves. This masking operation does not need to know the patch shape, size, or location. \textit{Step 2: secure box pruning} -- we gather boxes detected on different masked images (\textit{masked boxes}), use a ``robust" similarity score to \textit{filter} redundant masked boxes that are duplicates of boxes detected on the original image (\textit{base boxes}), and then \textit{unionize} the remaining masked boxes (if any). Finally, we combine pruned masked boxes and base boxes as the output. In the \textit{clean} setting (top of the figure), the motorbike is detected by one base box and eight masked boxes. All eight masked boxes are pruned, and we finally output the base box. In the \textit{adversarial} setting (bottom of the figure), the motorbike is only detected by three masked boxes (but not base box). No masked box is filtered; we unionize three masked boxes into one and output it as the robust prediction.}}
    \label{fig-overview}
\end{figure*}

Object detectors, which aim to output a list of bounding boxes detecting every object in a given image, have been shown vulnerable to\textit{ patch hiding attacks}~\cite{zhao2019seeing,thys2019fooling,xu2020adversarial,wu2019making,hu2021naturalistic,tan2021legitimate}. The patch hiding attack aims to make object detectors miss the detection of victim objects (e.g., making an autonomous vehicle miss a pedestrian) using an adversarial \textit{patch}~\cite{brown2017adversarial}. This attack can be realized in the physical world by attaching the adversarial patch to the real-world scene and thus imposes a notable threat to object detection applications like autonomous driving, augmented reality, and face recognition.

Unfortunately, strong robustness against patch hiding attacks has been hard to obtain. Most existing defenses~\cite{saha2020role,metzen2021meta,ji2021adversarial,liang2021we,chiang2021adversarial} are based on heuristics and lack formal security guarantees; their claimed robustness could be violated by stronger adaptive attacks. Xiang et al.~\cite{xiang2021detectorguard} designed the only certifiably robust defense against patch hiding attacks; nevertheless, the proposed DetectorGuard~\cite{xiang2021detectorguard} defense can only provide certifiable robustness for a small number of objects (e.g., $\sim$30\% of the objects from the VOC~\cite{voc} dataset and $\sim$10\% of the COCO~\cite{coco} objects even when the attacker is restricted to place a single 1\%-pixel patch \textit{far away} from the victim objects). In this paper, we propose a new approach called \framework, which can provide strong certifiable robustness for a significantly larger number of objects (e.g., $\sim$2-6$\times$ improvements over DetectorGuard in our experiments).

\textbf{\framework overview.} 
The core idea of \framework is to
apply pixel masks to the input image and perform object detection with a vanilla undefended model on the masked images. If the adversarial patch is masked, the attacker has no malicious influence over the model, and any vanilla object detector can safely make predictions. 
We focus on the setting that the patch hiding attacker can use one adversarial patch whose shape, size, and location are unknown to the defender. We aim to solve two major research questions: (1) How can we generate masks that can remove the patch without knowing the patch shape, size, and location as a priori (i.e., patch-agnostic)? (2) How can we achieve certifiable robustness while maintaining high clean performance? Our \framework solution involves two steps of \textit{patch-agnostic masking} and \textit{secure box pruning}; we provide a defense overview in Figure~\ref{fig-overview}.

\textbf{Step 1: patch-agnostic masking.} First, \framework applies a set of pixel masks to the input image and performs object detection on different masked images (left of Figure~\ref{fig-overview}). The mask set needs to satisfy two properties. First, some masks from the mask set need to remove the entire adversarial patch (i.e., patch-removing). Second, the mask set generation should not depend on any information on patch shape, size, and location (i.e., patch-agnostic). Once the mask set is generated, this fixed mask set should satisfy the patch-removing property for a patch of different shapes, sizes, and locations.

To attain these two properties, we propose to mask out \textit{image halves}: we divide the image into halves using a fixed set of vertical and horizontal lines, mask out each image half, and perform object detection on the other half.
Then, for a patch of any shape, size, and location, we can find lines that do not intersect with the patch to generate image halves that can mask out the entire patch. Intuitively, a vanilla object detector is likely to detect the object from the remaining benign pixels, as long as the patch is reasonably sized and does not corrupt the entire victim object. This lays a foundation for robust object detection.

\textbf{Step 2: secure box pruning.} Despite its robustness property, the pixel masking operation can introduce duplicate boxes and downgrade the precision of the model prediction; this is because an object can be detected multiple times on different masked images (e.g., the motorbike in Figure~\ref{fig-overview} is detected multiple times). The second step of \framework aims to remove these redundant boxes for precise object detection outputs while preserving the robustness property achieved by the masking operation. Towards this goal, we use a ``robust" box similarity score to identify similar boxes and prune redundant boxes accordingly. This process is illustrated in the right of Figure~\ref{fig-overview} and will be discussed in Section~\ref{sec-defense-merge}.

\textbf{Certifiable robustness evaluation.} Remarkably, our \framework design allows us to develop a certification procedure for formal robustness evaluation. The procedure will certify if \framework can robustly detect a given ground-truth object against any patch attacker within a  given threat model (e.g., any attacker placing a 1\%-pixel square patch anywhere on the image). Our formal proof in Section~\ref{sec-defense-certification} ensures that the certification results hold for any adaptive attack within the same threat model and thus allows us to evaluate robustness with absolute certainty.

\textbf{State-of-the-art certified robustness across datasets.} We implement \framework with two vanilla object detectors: YOLOR~\cite{yolor} and Mask~R-CNN~\cite{he2017mask} with a Swin Transformer backbone~\cite{liu2021swin}, and evaluate it on three object detection datasets: VOC~\cite{voc}, COCO~\cite{coco}, and KITTI~\cite{kitti}. We demonstrate that \framework achieves significant ($\sim$10\%-40\% absolute and $\sim$2-6$\times$ relative) improvements in certified robustness over DetectorGuard~\cite{xiang2021detectorguard}. In the meantime, \framework also has high clean performance similar to that of vanilla models ($\sim$1\% clean performance drops). Our contributions can be summarized as follows.
\begin{itemize}\setlength\itemsep{0em}
    \item We propose \framework as a certifiably robust defense framework for any vanilla object detector via a patch-agnostic masking strategy.
    \item We develop a certification procedure to determine if \framework is provably robust, for a given object, against any attack within a given threat model.
    \item We evaluate \framework on object detection benchmark datasets and demonstrate significant robustness improvements over DetectorGuard~\cite{xiang2021detectorguard}.
\end{itemize}

\section{Background and Problem Formulation}

In this section, we introduce the task of object detection, attack formulation, and defense formulation.

\subsection{Object Detection}\label{sec-formulation-obj-detection}
The task of object detection is to predict a list of bounding boxes that locate and classify each object in a given image. We use $\bfx \in \cX \subset [0,1]^{W\times H\times C}$ to denote the input image with width $W$, height $H$, and $C$ color channels. 
We use $\bfb = (x_{\min},y_{\min},x_{\max},y_{\max},\ell,c)$ to represent a bounding box, where four coordinates $(x_{\min},y_{\min},x_{\max},y_{\max})$ illustrate the box, $\ell$ denotes the class label, and $c\in[0,1]$ denotes the prediction confidence for this box detection. An object detector takes an image $\bfx$ as the input and outputs a list of boxes $\bfb_0,\bfb_1,\cdots,\bfb_n$. We further use an unordered set $\cB=\{\bfb_0,\bfb_1,\cdots,\bfb_n\}$ to denote the detection output and let $\cO$ denote the space of all possible $\cB$ (detection outputs). We can then formally represent an object detector as $\mathbb{F}(\bfx,\gamma):\cX\times[0,1]\rightarrow \cO$, where the detector only outputs boxes with confidence higher than $\gamma$. 
We do not make any assumption on the object detector $\mathbb{F}$, it can be any off-the-shelf model such as YOLO~\cite{redmon2016you,bochkovskiy2020yolov4,yolor}, Faster R-CNN~\cite{ren2015faster}, and Mask R-CNN~\cite{he2017mask}. 

\textbf{Box operation.} In some cases, we abuse the notation $\bfb$  by considering $\bfb$ as a set of image pixels within the box. Then, we can define several important box operations: $|\bfb|$ denotes the \textit{area} of the box $\bfb$; $\bfb_0\cap\bfb_1$ and $\bfb_0\cup\bfb_1$ denote the \textit{intersection} and \textit{union} of two boxes, respectively; $\bfb_0\setminus\bfb_1$ denotes the image region that belongs to $\bfb_0$ but not $\bfb_1$. We only consider operations between boxes when they share the same class labels. When $\bfb_0,\bfb_1$ have different class labels, $\bfb_0\cap\bfb_1=\varnothing$; $\bfb_0\cup\bfb_1$ and $\bfb_0\setminus\bfb_1$ are undefined.

\textbf{Performance evaluation.} To evaluate a conventional object detector, we consider a detected box is correct if (1) the box class label matches the ground-truth box label, and (2) Intersection over Union (IoU)  between the detected box and the ground-truth box, defined as $|\bfb_0\cap\bfb_1|/|\bfb_0\cup\bfb_1|$, exceeds a certain threshold~\cite{voc,coco}. We count a correct detected box as a true-positive (TP). On the other hand, any detected box that is not a TP is a false-positive (FP); any ground-truth box that is not correctly detected is a false-negative (FN). Furthermore, we define \textit{precision} as TP/(TP+FP) and \textit{recall} as TP/(TP+FN). We aim to build object detectors that have both high precision and recall.

\subsection{Attack Formulation}\label{sec-formulation-attack}
We focus on defending against patch hiding attacks.

\textbf{Objective: hiding attacks.}  The hiding attack aims to make the object detector miss the detection of the victim object, which increases FN errors and downgrades the detection recall. This attack can cause serious consequences in real-world scenarios such as an autonomous car failing to detect and consequently hitting a pedestrian.

\textbf{Means: patch attacks.} The patch attack aims to generate adversarial pixels within a local region (forming a patch) to cause mispredictions of the machine learning model. This attack can be realized in the physical world by printing and attaching the patch to the underlying scene and thus imposes an urgent threat to real-world machine learning applications. Formally, we denote the patch region with a binary tensor $\bfp\in\{0,1\}^{W\times H}$ of the same shape as the $W\times H$ input image $\bfx$. We set the elements within the patch region to zeros, and the rest to ones. We further use $\cP$ to denote a set of patch regions $\bfp$ that an attacker can use. Then, the constraint set of the patch attack can be represented as $\cA_\cP(\bfx)=\{\bfx^\prime = \bfx\odot\bfp+\bfx^{\prime\prime}\odot(\mathbf{1}-\bfp)|\bfx\in\cX,\bfp\in\cP\}$. $\odot$ refers to the element-wise product operator. $\bfx^{\prime\prime}\in[0,1]^{W\times H\times C}$ is the patch content \textit{arbitrarily} controlled by the attacker.

\textbf{Threat model.} We note that the patch region set $\cP$ is determined by the number of patches, patch shape, patch size, and patch location. 
In this paper, we primarily focus on the open research question of \textit{one} adversarial patch; nevertheless, we will quantitatively discuss defenses against multiple patches in Section~\ref{sec-discussion-limitation}. 

Furthermore, we allow the patch to have different \textit{shapes} such as square, rectangle, and circle. 
The patch \textit{size} can take any \textit{reasonable} value as long as the patch does not occlude the entire object; otherwise, the defense problem is meaningless since no one (not even a human) can detect a completely invisible object. 

Finally, we divide all possible patch \textit{locations} into three groups: {far-patch}, {close-patch}, and {over-patch} when the patch is far away from, close to, and over the object.  The attacker can pick \textit{any} location within a certain location set. These different location threat models represent different attackers in the physical world: for example, whether being able to place the patch over the victim object or not clearly exhibits different attacker capabilities. 

In our evaluation, we will consider different patch region sets $\cP$ such as a set of single 2\%-pixel square patches at all possible locations over the victim object or single 1\%-pixel rectangle patches anywhere far away from the object.


\subsection{Defense Formulation}\label{sec-formulation-defense}

In this section, we formulate the defense problem.

\textbf{Defender knowledge (patch-agnostic).} As discussed in Section~\ref{sec-formulation-attack}, we consider patch hiding attacks that use one adversarial patch. The defender only knows that there might be one patch on the input image, but does not know anything about the patch shape, size, and location (i.e., patch-agnostic). This requires the defender to build a defense with a \textit{fixed} set of parameters to deal with \textit{different} patch attackers that use different patch region sets $\cP$.

\textit{Note:} The assumption of a \textit{reasonable patch size} discussed in the last subsection will not give an additional advantage to our defense or violate the patch-agnostic property: it only ensures the object visibility that is required by any object detector (including humans).

\textbf{Robustness definition.} Facing a patch hiding attack, our defense aims to robustly predict bounding boxes that \textit{cover at least part of each object and have correct class labels}. To formally discuss the robustness objective, we introduce the following concept of Intersection over Area (IoA)

\begin{definition}[Intersection over Area (IoA)]
The IoA between two boxes $\bfb_0$ and $\bfb_1$ is the ratio of the intersecting area of two regions to the area of the first box $\bfb_0$. Formally, we have: $\textsc{IoA}(\bfb_0,\bfb_1) = {|\bfb_0\cap \bfb_1|}/{|\bfb_0|}$.
\end{definition}

\noindent We note that when two boxes have different class labels, we consider the intersection $\bfb_0\cap\bfb_1$ empty and the IoA is 0. 

With the concept of IoA, we can then reiterate the robustness definition as: given a ground-truth box $\bfbgt$, we aim to detect a box $\bfb^\prime$ such that $\textsc{IoA}(\bfbgt,\bfb^\prime)>T$, where $T\in[0,1]$ determines the strength of robustness. We term this as \textit{IoA robustness}. We choose this robustness objective because it aligns with our goal of mitigating hiding attacks; this objective is also similar to that of DetectorGuard~\cite{xiang2021detectorguard} and enables a fair comparison. In Appendix~\ref{apx-iou} and \ref{apx-taxonomy}, we provide discussions on other possible robustness definitions. 

\textbf{Certifiable robustness evaluation.} More importantly, we target \textit{certifiable robustness} -- evaluating robustness in a certifiable manner. We will develop a certification procedure to determine if \framework can robustly detect a given ground-truth object against a given threat model. We will formally prove that the certification procedure accounts for all possible attackers within the given threat model $\cA_\cP$, including an adaptive attacker with perfect knowledge of our defense setup. In our evaluation, we apply the certification procedure to datasets with annotated ground-truth bounding boxes (objects) and use \textit{certified recall}, the fraction of certified objects among all ground-truth objects, as our robustness metric. 

\textbf{Remark: patch-agnostic inference vs. certification.} We note that our patch-agnostic property is only for inference but not certification. The inference procedure (Algorithm~\ref{alg-inference}) is the defense that will be \textit{deployed in practice}. Patch-agnostic inference allows us to use the {same} inference setting to achieve non-trivial robustness against {different} patch shapes/sizes/locations (different patch region sets $\cP$). In contrast, the certification procedure (Algorithm~\ref{alg-certification}) is for \textit{evaluating robustness} against \textit{one} specific patch threat model $\cA_\cP$, and thus requires patch information. In fact, “patch-agnostic certification” is impossible: we cannot certify robustness against an unknown/unbounded attack capability.



\begin{table}[t]
    \centering
    \caption{Summary of important notation}
    \vspace{-1em}
 \resizebox{\linewidth}{!}
  { \begin{tabular}{l|l|l|l}
    \toprule
    \textbf{Notation} & \textbf{Description} & \textbf{Notation} & \textbf{Description} 
    \\
    \midrule
 $\mathbf{x}\in\mathcal{X}$ & input image &$\bfb\in\cB\in\cO$ & bounding box \\
 $\mathbf{m}\in\mathcal{M}$ & pixel mask &$\bfp\in\cP$ & patch region \\
     $\mathbb{F}:\cX\times[0,1]\rightarrow\cB$ &undefended model& $\gamma\in[0,1]$ & confidence thres.\\
     $\mathbb{S}:\cB\times\cB\rightarrow\mathbb{R}$&similarity score&$\tau\in[0,1]$&filtering thres.\\
     $k\in\mathbb{Z}^+$&\# lines&$T\in[0,1]$ &certification thres.\\
      \bottomrule
    \end{tabular}}
    \label{tab-notation}
\end{table}

\section{\framework Design}\label{sec-defense}
\textbf{Overview.} In this section, we introduce the design of \framework, a defense framework for building certifiably robust object detectors against patch hiding attacks. \framework operates in a two-step manner (recall the defense overview in Figure~\ref{fig-overview}). First, \framework applies a set of pixel masks to the input images and performs vanilla object detection on masked images (Section~\ref{sec-defense-mask}). Our masking strategy ensures that some of the masks remove the entire adversarial patch so that a vanilla object detector can detect victim objects from the unmasked regions of these images. This masking operation lays a foundation for robustness; however, it may introduce redundant boxes when the same object is repeatedly detected on multiple masked images. The second step of \framework aims to remove these redundant boxes via secure box pruning (Section~\ref{sec-defense-merge}); this will improve the detection precision while preserving the robustness property achieved by the masking operation. Notably, with a careful design of pixel masking and box pruning, we can develop a certification procedure to determine if  \framework has provable robustness guarantees for certain objects against any adaptive attacker within a given threat model (Section~\ref{sec-defense-certification}), achieving certifiable robustness.

We provide the pseudocode of \framework in Algorithm~\ref{alg-inference} and a summary of important notation in Table~\ref{tab-notation}. We will discuss the details of each defense module next.

\subsection{Patch-agnostic Masking}\label{sec-defense-mask}

\textbf{Intuition.} 
The objective of our pixel-masking defense is to find masks to mask out the entire patch (but not the entire victim objects) from the input image. Typically, vanilla object detectors can safely detect objects from the masked image once all adversarial pixels are removed.

\textbf{Challenge.} If we have information on patch shapes, sizes, and locations, it is easy to find masks to remove the patch. 
For example, if we know an attacker will use a $32\times32$ \textit{patch}, we can enumerate all possible $32\times32$ \textit{masks} at different locations, and one of the masks must remove the entire $32\times32$ \textit{patch}. However, when we do not have patch information, this enumeration no longer works due to an infinite number of patch regions $\bfp$. In \framework, we propose an alternative masking strategy that does not depend on the patch information (i.e., patch-agnostic).

\textbf{Masking strategy.} The robustness requirement of our masking algorithm is that: for each object, we have at least one mask that removes the entire patch while preserving most of the object pixels (so that we can safely detect each object).\footnote{Note that we can use a different mask for a different object to achieve this goal (i.e., detecting different objects from different masked images).} 
Towards this goal, we propose to mask out \textit{``image halves"} determined by a set of horizontal and vertical lines; we provide visualization for this strategy in Figure~\ref{fig-mask}. First, we select a set of evenly spaced horizontal and vertical lines across the image (top left of Figure~\ref{fig-mask}). Second, observing that each horizontal/vertical line splits the image into two halves, we mask out one half and perform vanilla object detection on the other half (top right of Figure~\ref{fig-mask}).

\begin{figure}[t]
    \centering
    \includegraphics[width=\linewidth]{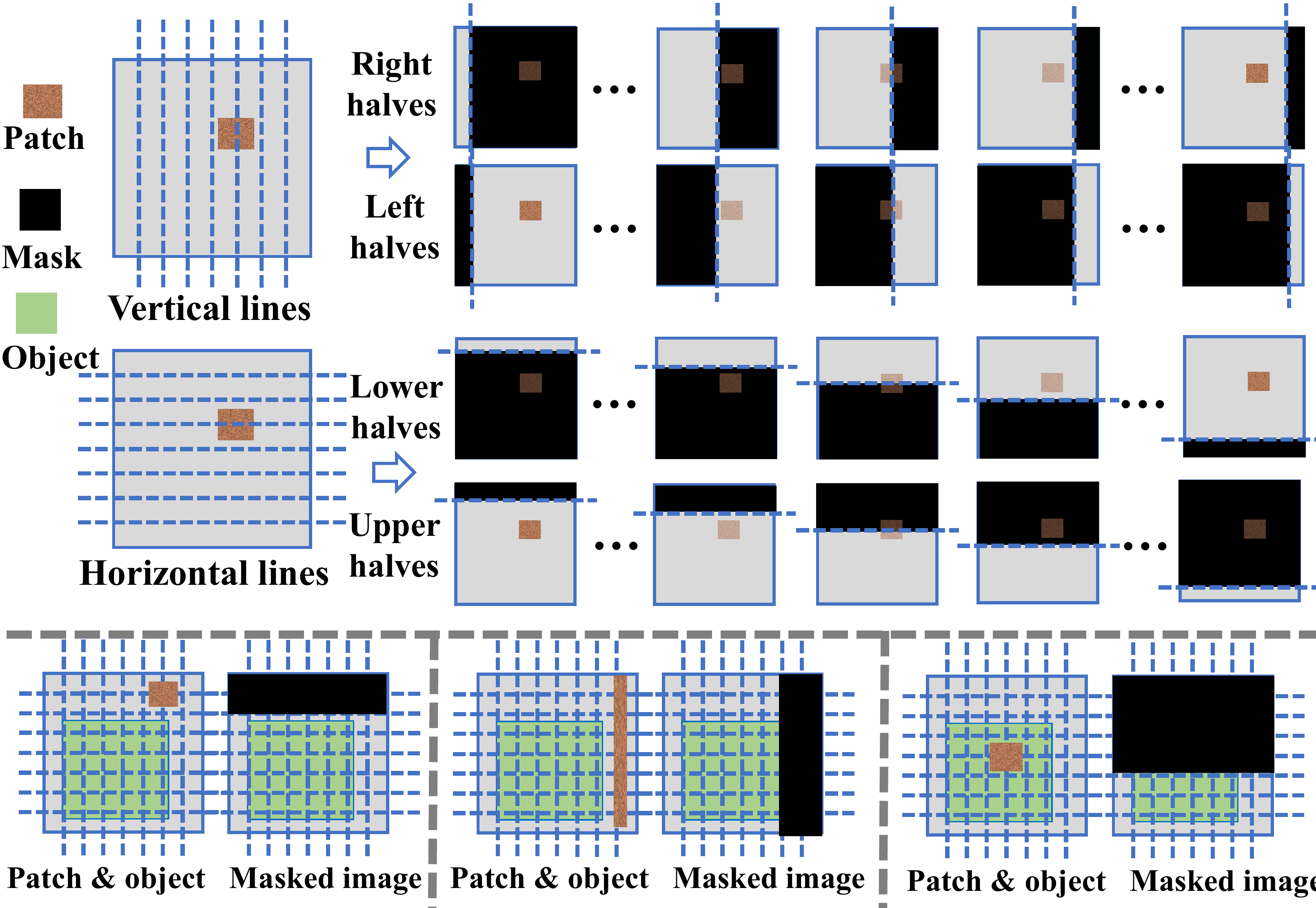}
    \vspace{-2em}
    \caption{\textbf{Visualization for patch-agnostic masking:} the algorithm masks out image halves divided by vertical/horizontal lines (top of the figure); masks from a fixed mask set can remove patches of different shapes, sizes, and locations (bottom of the figure).}
    \label{fig-mask}
\end{figure}

\textbf{The patch-agnostic property.} Clearly, our mask set generation strategy only requires the number of lines as its input, and does not rely on any information of patch shape/size/location (i.e., patch-agnostic). For a patch of any shape, size, and location, vertical/horizontal lines that \textit{do not intersect with the patch} can generate masks that remove the entire patch. At the bottom of Figure~\ref{fig-mask}, we visualize three patches of different shapes, sizes, and locations and the corresponding masks that can remove the patch. We can see that masks from our \textit{fixed} mask set can remove \textit{different} patches while preserving a large portion of object pixels. This lays a foundation for robust object detection. Finally, we highlight that the patch-agnostic property is a significant improvement from all existing masking-based certifiably robust defenses (for image classification)~\cite{mccoyd2020minority,xiang2021patchguard,xiang2021patchguard2,xiang2021patchcleanser}, which require patch information (e.g., sizes, shapes) for their inference procedure to achieve non-trivial certifiable robustness.

\textbf{Masking pseudocode.} Now, we discuss the implementation details of patch-agnostic masking. We present the pseudocode in Line~\ref{ln-inference-gen-mask}-\ref{ln-inference-mask-end} of Algorithm~\ref{alg-inference}. The first step of pixel masking is to generate a mask set (Line~\ref{ln-inference-gen-mask}). Formally, we represent each mask as a binary tensor $\mathbf{m} \in \cM \subset \{0,1\}^{W\times H}$ that has the same shape as the $W \times H$ images. We set the elements within the mask to $0$, and others to $1$. 

We use the sub-procedure $\textsc{MaskSet}(\cdot)$ for the mask set generation. It takes the image size $W\times H$ and the number of horizontal/vertical lines $k$ as inputs and outputs a mask set $\cM$. In this sub-procedure, we first generate two sets of coordinates $\cX,\cY$ that contain $k$ evenly spaced coordinates along two image axes (Line~\ref{ln-maskset-index}). Next, we generate a mask set $ \cM_\cX$, whose elements mask the left halves of the image (Line~\ref{ln-maskset-mx}). Similarly, we generate $\cM_\cY$ that masks the lower halves, $\bar{\cM}_\cX$ that masks the right halves, and $\bar{\cM}_\cY$ that masks the upper halves (Line~\ref{ln-maskset-my}-\ref{ln-maskset-mx-mx-bar}). The final mask set $\cM$ is the union of these four sets (Line~\ref{ln-maskset-union}).

With the generated mask set $\cM$, we iterate over each mask $\bfm \in \cM$ (Line~\ref{ln-inference-each-mask}), perform object detection with an undefended model $\mathbb{F}(\cdot,\gammam)$ on the masked image $\bfx\odot\bfm$ (Line~\ref{ln-inference-apply-mask}), and then gather detected boxes into the box set $\cB_{\text{mask}}$ (Line~\ref{ln-inference-add-detection}). 
After that, \framework will perform secure box pruning on these detected boxes, which will be discussed in the next subsection.


\begin{algorithm}[t]
    \centering
    \caption{\framework inference algorithm}\label{alg-inference}
    \begin{algorithmic}[1]
    \renewcommand{\algorithmicrequire}{\textbf{Input:}}
    \renewcommand{\algorithmicensure}{\textbf{Output:}}
    \Require Image $\mathbf{x}$, image size $(W,H)$, vanilla object detector $\mathbb{F}$, number of lines $k$ (for masking), masked box confidence threshold $\gammam$, base box confidence threshold $\gammab$, similarity score function $\mathbb{S}$, box filtering threshold $\tau$
    \Ensure  Robust detection results $\cB_{\text{robust}}$ 
    \Procedure{\framework}{$\mathbf{x},\mathbb{F},k,W,H,\gammam,\gammab,\mathbb{S},\tau$}
    \State $\cM\gets\textsc{MaskSet}(k,W,H)$\label{ln-inference-gen-mask}
    \State $\cB_{\text{mask}} \gets \varnothing$\label{ln-inference-empty-set}
    \For{$\bfm \in \cM$}\label{ln-inference-each-mask}
    \State $\cB_\bfm \gets \mathbb{F}(\bfx \odot \bfm,\gammam)$\label{ln-inference-apply-mask}
    \State $\cB_{\text{mask}}\gets\cB_{\text{mask}}\cup\cB_\bfm$\label{ln-inference-add-detection}
    \EndFor\label{ln-inference-mask-end}

\State $\cB_{\text{base}} \gets \mathbb{F}(\bfx,\gammab)$\label{ln-inference-base}
\State $\cB_{\text{robust}}\gets\textsc{BoxPrune}(\cB_{\text{mask}},\cB_{\text{base}},\mathbb{S},\tau)$\label{ln-inference-box-prune}
    \State \Return $\cB_{\text{robust}}$ \label{ln-inference-final-output}
    \EndProcedure
    
    \item[]
\Procedure{MaskSet}{$k,W,H$}
\State$\cX=\{\lceil \frac{W}{k+1}\rceil\cdot t | t\in \mathbb{Z}^+_k\};\ \cY=\{\lceil \frac{H}{k+1}\rceil\cdot t | t\in \mathbb{Z}^+_k\}$\label{ln-maskset-index}
\State$\cM_\cX \gets \{\bfm |  \bfm[i,j]=0, i\in[0,x);  \bfm[i,j]=1,i \in [x,W), x\in\cX\}$\label{ln-maskset-mx}
\State $\cM_\cY \gets \{\bfm |  \bfm[i,j]=0, j \in [0,y)  ; \bfm[i,j]=1, j\in [y,H) ,y\in\cY\}$\label{ln-maskset-my}
\State$\bar{\cM}_\cX \gets\{\mathbf{1}-\bfm | \bfm\in\cM_\cX\} ;\ \bar{\cM}_\cY =\{\mathbf{1}-\bfm | \bfm\in\cM_\cY\}$\label{ln-maskset-mx-mx-bar}
\State\Return$\cM_\cX \cup \cM_\cY \cup \bar{\cM}_\cX \cup \bar{\cM}_\cY$  \label{ln-maskset-union}
    \EndProcedure
\item[]
\Procedure{BoxPrune}{$\cB_{\text{mask}},\cB_{\text{base}},\mathbb{S},\tau$}
    \State ${\cB}^{\text{filtered}}_{\text{mask}}\gets\{\bfbm \in \cB_{\text{mask}} | \nexists\ \bfbb\in\cB_{\text{base}}: \mathbb{S}(\bfbm,\bfbb)>\tau\}$\label{ln-inference-filter}
    \State $\cB^{\text{pruned}}_{\text{mask}}\gets\{\textsc{Rep}(\hat{\cB},\mathbb{S}))  \ |\ \hat{\cB} \in \textsc{Cluster}({\cB}^{\text{filtered}}_{\text{mask}},\mathbb{S}) \}$\label{ln-inference-union}
    \State $\cB_{\text{robust}}\gets\cB_{\text{base}}\cup\cB^{\text{pruned}}_{\text{mask}}$\label{ln-inference-combine}
\State\Return$\cB_{\text{robust}}$\label{ln-inference-robust}
\EndProcedure
\end{algorithmic}
\end{algorithm}
\subsection{Secure Box Pruning}\label{sec-defense-merge}
\textbf{Intuition.} Our masking operation allows us to safely see benign pixels/objects from some of the masked images that contain no adversarial pixels.
With this robustness property, a naive defense strategy is to take all boxes detected on masked images as the final output. Since the attacker has no influence over the detection results when the patch is completely masked, we will detect most ground-truth objects and achieve a high robust \textit{recall} against the hiding attack.

\textbf{Challenge.} However, this approach is not ideal: we will have many  duplicate boxes for one object since an object might be detected multiple times in different masked images (see ``masked boxes" in Figure~\ref{fig-overview} for visual examples); redundant duplicate boxes will be considered as false-positive errors and hurt the detection \textit{precision}. Therefore, we need to further identify and prune these redundant boxes in \framework. We note that box pruning is a non-trivial task since duplicate boxes can look very different (e.g., in Figure~\ref{fig-overview}, some boxes detect the left part of the motorbike while some detect the right part). Moreover, the box pruning should be done in a secure manner since an adaptive attacker might introduce malicious boxes in some masked images (where the patch is not completely removed) to interfere with the box pruning. In summary, we need to perform secure box pruning to improve prediction \textit{precision} while preserving the robust \textit{recall} against the patch hiding attack.

\textbf{Box pruning algorithm.} The high-level idea of our box pruning is to use a score function $\mathbb{S}:\cB\times\cB\rightarrow\mathbb{R}$ to robustly measure the similarity between detected boxes so that we can identify and prune redundant boxes. We note that different similarity scores can give different robustness guarantees. Here, we focus on using IoA as the similarity score for achieving IoA robustness (recall Section~\ref{sec-formulation-defense}); we will also discuss an alternative choice in Appendix~\ref{apx-iou}.

We visualize the pruning algorithm in the right of Figure~\ref{fig-overview} and present the pseudocode in Line~\ref{ln-inference-base}-\ref{ln-inference-box-prune} and Line~\ref{ln-inference-filter}-\ref{ln-inference-combine} of Algorithm~\ref{alg-inference}. To start with, we perform one vanilla model prediction on the original unmasked image $\bfx$ to obtain the (potentially vulnerable) base detection results $\cB_{\text{base}}=\mathbb{F}(\bfx,\gammab)$ (Line~\ref{ln-inference-base}). 
We term boxes detected on the original unmasked image \textit{base boxes} and boxes detected on masked images as \textit{masked boxes}. Then, we call the sub-procedure $\textsc{BoxPrune}(\cB_{\text{mask}},\cB_{\text{base}},\mathbb{S},\tau)$ for box pruning, which first \textit{filters} out redundant masked boxes that are duplicates of base boxes and then \textit{unionizes} unfiltered masked boxes for the final prediction output.

\textit{Box filtering.} The first pruning operation aims to filter out redundant masked boxes that are highly similar to the base boxes (measured by $\mathbb{S}$). Specifically, we calculate the pairwise similarity scores between masked boxes and base boxes and remove masked boxes $\bfbm$ whose similarity with a particular base box $\bfbb$ exceeds a filtering threshold $\tau$, i.e., $\mathbb{S}(\bfbm,\bfbb)>\tau$. This filtering operation is illustrated in Line~\ref{ln-inference-filter} of Algorithm~\ref{alg-inference}. In the top right of Figure~\ref{fig-overview}, we can see that duplicate boxes are effectively removed when base boxes also detect the object.

\textit{Box unionizing.} Intuitively, if base box predictions are accurate, most masked boxes will be removed during the box filtering operation, and we will directly output high-quality base boxes (top right of Figure~\ref{fig-overview}). However, when a patch hiding attack happens, the attacker makes base box predictions disappear, and thus no masked boxes will be filtered (bottom right of Figure~\ref{fig-overview}). As a result, we need to further prune/unionize the remaining redundant masked boxes. The high-level idea of box unionizing is to use similarity score $\mathbb{S}$ to cluster similar boxes via $\textsc{Cluster}(\cdot)$ and output one box representative for each cluster via $\textsc{Rep}(\cdot)$. The collection of these box representatives is the final \textit{pruned masked boxes}. This process occurs in Line~\ref{ln-inference-union} of Algorithm~\ref{alg-inference} and is illustrated in the bottom right of Figure~\ref{fig-overview}. 

\textit{\framework output.} After the box filtering and unionizing operations, we combine the pruned masked boxes ${\cB}^{\text{pruned}}_{\text{mask}}$ with base boxes $\cB_{\text{base}}$ as the final output of \framework (Line~\ref{ln-inference-robust} and Line~\ref{ln-inference-final-output}). 

\textit{Instantiation with IoA.} In our implementation when we take IoA as $\mathbb{S}$, we instantiate $\textsc{Cluster}(\cdot)$ using a distance-based clustering algorithm DBSCAN~\cite{ester1996density}; the ``distance" between boxes $\bfb_0,\bfb_1$ is calculated as $1-\max(\textsc{IoA}(\bfb_0,\bfb_1),\textsc{IoA}(\bfb_1,\bfb_0))$. 
For each box cluster, we generate the box representative as $\textsc{Rep}(\hat{\cB}) = \bigcup_{\bfb\in\hat{\cB}} \bfb$, i.e., taking the mathematical union of all boxes as the representative of the cluster. In the rest of this paper, we will refer to this instance of box pruning as \textbf{$\textsc{IoA-BoxPrune(}\cB_{\text{mask}},\cB_{\text{base}},\tau)$} and call the corresponding \framework instance \textbf{$\textsc{IoA-\framework}(\bfx)$}.

\subsection{Robustness Certification}\label{sec-defense-certification}

So far, we have discussed how to use patch-agnostic masking to neutralize the adversarial patch and how to securely perform box pruning to remove duplicate boxes. In this subsection, we develop the robustness certification procedure (Algorithm~\ref{alg-certification}) for provable robustness evaluation. Given a victim object and a specific threat model (e.g., a specific patch size, shape, and location set), the certification procedure aims to determine if \framework can robustly detect the object against all possible attackers within a given threat model. 
Here, we will focus on the case where we implement $\mathbb{S}$ with IoA for certifiable IoA robustness. We note that the high-level idea of certification is similar for different $\mathbb{S}$; we will discuss a different $\mathbb{S}$ and its certification in Appendix~\ref{apx-iou}.

We first formally reiterate our IoA robustness objective.
\begin{definition}[Certifiable IoA robustness]\label{dfn-certifiable-ioa}
We consider \framework is certifiably IoA-robust for a ground-truth object $\bfbgt$ on an image $\bfx$ against a patch attacker $\cA_\cP$ (discussed in Section~\ref{sec-formulation-attack}), if we can always predict a box $\bfb^\prime$ satisfying $\textsc{IoA}(\bfbgt,\bfb^\prime)>T$, where $T\in[0,1]$ is a certification threshold. Formally, we have: $\forall\ \bfx^\prime\in\cA_\cP(\bfx),\exists\ \bfb^\prime\in\cB_{\text{robust}}=\textsc{IoA-\framework}(\bfx^\prime)\st \textsc{IoA}(\bfbgt,\bfb^\prime)>T$.
\end{definition}

\noindent Next, we discuss our certification intuition, present our certification procedure in Algorithm~\ref{alg-certification}, and prove its soundness in Theorem~\ref{thm}.

\textbf{Certification intuition.} In our \framework framework, we aim to detect victim objects on \textit{masked images without adversarial pixels} and combine pruned masked boxes and base boxes as the final output $\cB_{\text{robust}}$. Intuitively, if we have a ``good" masked box in $\cB_{\text{mask}}$, this good masked box is likely to ``survive" the box pruning procedure and become one good box in $\cB_{\text{robust}}$. Our certification aims to search for ``good" boxes on masked images without adversarial pixels. 

\begin{algorithm}[t]
    \centering
    \caption{Certification algorithm for IoA robustness}\label{alg-certification}
    \begin{algorithmic}[1]
    \renewcommand{\algorithmicrequire}{\textbf{Input:}}
    \renewcommand{\algorithmicensure}{\textbf{Output:}}
    \Require Image $\mathbf{x}$, the object (bounding box) $\bfbgt$ to be certified,  valid patch region set $\cP$, defense setup $(\mathbb{F},\cM,\gammam,\tau)$, certification threshold $T$.
    \Ensure Whether \framework has certifiable IoA robustness for $\bfbgt$ against $\cA_\cP$
    \Procedure{Certify}{$\mathbf{x},\bfb,\cP,\mathbb{F},\mathcal{M},\gammam,\tau$}
    \For{every $\bfp \in \cP$}
    \State $f_{\bfp}\gets\texttt{False}$\label{ln-certification-init}
    \For{$\bfm \in \{\bfm \in \cM  |  \bfm[i,j]\leq \bfp[i,j],\forall (i,j)\}$}
        \State $\cB_\bfm \gets \mathbb{F}(\bfx \odot \bfm,\gammam)$\label{ln-certification-predict}
        \If{$\exists \bfb_\bfm \in \cB_\bfm$s.t. $  \frac{|\bfbm|\cdot \tau - |\bfbm\setminus\bfbgt|}{|\bfbgt|}>T$}\label{ln-certification-condition}
       
            \State $f_{\bfp}\gets\texttt{True}$; \textbf{break}\label{ln-certification-condition2}
        \EndIf
    \EndFor\label{ln-certification-endcheck}
    
    \If{$f_{\bfp} = \texttt{False}$}
        \State\Return \texttt{False}\label{ln-certification-vulnerable}
    \EndIf
    \EndFor
    \State \Return \texttt{True}\label{ln-certification-robust}
    \EndProcedure
    \end{algorithmic}
\end{algorithm}

\textbf{Certification algorithm.} We provide the certification pseudocode in Algorithm~\ref{alg-certification}. It determines if \framework has certifiable IoA robustness for the given ground-truth object $\bfbgt$ against the given attack threat model $\cA_\cP$. 

Overall, the certification algorithm will iterate every valid patch region $\bfp\in\cP$ (e.g., every valid patch location) to determine if \framework can certify the robustness for the ground-truth object $\bfbgt$ against every $\bfp$ (Line~\ref{ln-certification-init}-\ref{ln-certification-endcheck}).

For each patch region $\bfp$ that represents a specific patch shape, size, and location, we initialize the robustness flag $f_{\bfp}$ to \texttt{False} (Line~\ref{ln-certification-init}). Next, we will examine every mask $\bfm \in \cM$ that can remove the entire patch (i.e., $\bfm[i,j]\leq \bfp[i,j],\forall (i,j)$). For each mask $\bfm$, we perform object detection on the masked image $\bfx\odot\bfm$ to get masked boxes $\cB_\bfm$ (Line~\ref{ln-certification-predict}). Then, we will search for any ``good" box in $\cB_\bfm$. If a masked box $\bfb_\bfm\in\cB_\bfm$ satisfies $ \frac{|\bfbm|\cdot \tau - |\bfbm\setminus\bfbgt|}{|\bfbgt|}>T$, we consider this box ``certifiably good" for robustly detecting object $\bfbgt$ against patch region $\bfp$ (will be proved in Theorem~\ref{thm}). On the other hand, if we try all possible masks that can remove the patch, and no masked box satisfies this condition, the object $\bfbgt$ might be vulnerable to this patch region $\bfp$ and the procedure returns \texttt{False} (Line~\ref{ln-certification-vulnerable}). 

Finally, if the certification procedure enumerates \textit{every} valid patch $\bfp\in\cP$ and does not return \texttt{False}, it implies that \framework has certified robustness for the object $\bfbgt$ against all possible $\bfp$ within the threat model $\cA_\cP$. The algorithm returns \texttt{True} (Line~\ref{ln-certification-robust}).

\textbf{Soundness of Algorithm~\ref{alg-certification}.} We present Theorem~\ref{alg-certification} below to prove the soundness of our certification.

\begin{theorem}\label{thm}
Given a ground-truth object box $\bfbgt$ in the input image $\bfx$, defense setup $(\mathbb{F},\cM,\gammam,\tau)$, certification threshold $T$, and a set of valid patch regions $\cP$, if Algorithm~\ref{alg-certification} returns \texttt{True}, \textsc{IoA-\framework} has certifiable IoA robustness for the object $\bfbgt$ against any attacker in $\cA_\cP$.
\end{theorem}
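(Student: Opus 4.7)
The plan is to argue that for \emph{any} adversarial image $\bfx' = \bfx \odot \bfp + \bfx'' \odot (\mathbf{1}-\bfp) \in \cA_\cP(\bfx)$ (i.e.\ any choice of patch region $\bfp \in \cP$ and attacker-controlled content $\bfx''$), the defense output $\cB_{\text{robust}}=\textsc{IoA-\framework}(\bfx')$ contains a box $\bfb^\prime$ with $\textsc{IoA}(\bfbgt,\bfb^\prime) > T$. The witness will be the ``certifying'' masked box that Algorithm~\ref{alg-certification} identifies for this particular $\bfp$; I will trace it through the pruning procedure and show that either it, or the base box that filters it, or the cluster representative that absorbs it, inherits enough overlap with $\bfbgt$.

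First I would unpack what ``Algorithm~\ref{alg-certification} returns \texttt{True}'' gives: for every $\bfp \in \cP$ there exist a mask $\bfm^\ast \in \cM$ with $\bfm^\ast[i,j] \le \bfp[i,j]$ for all $(i,j)$, and a box $\bfbm^\ast \in \mathbb{F}(\bfx \odot \bfm^\ast, \gammam)$ satisfying
\[
\frac{|\bfbm^\ast|\cdot\tau - |\bfbm^\ast \setminus \bfbgt|}{|\bfbgt|} \;>\; T.
\]
The key patch-masking identity is that $\bfm^\ast[i,j]\le \bfp[i,j]$ forces $\bfm^\ast$ to zero out every pixel inside the patch region, so $\bfx'\odot\bfm^\ast = \bfx\odot\bfm^\ast$ \emph{independently} of the attacker's content $\bfx''$. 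Consequently, during \framework inference on $\bfx'$, the detector call with mask $\bfm^\ast$ returns exactly $\mathbb{F}(\bfx\odot\bfm^\ast,\gammam)$, so $\bfbm^\ast \in \cB_{\text{mask}}$ at inference time as well.

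Next I would do a case split on whether $\bfbm^\ast$ is pruned away by box filtering. In \emph{Case A}, some base box $\bfbb \in \cB_{\text{base}}$ filters $\bfbm^\ast$, meaning $\textsc{IoA}(\bfbm^\ast, \bfbb) > \tau$, i.e.\ $|\bfbm^\ast \cap \bfbb| > |\bfbm^\ast|\cdot \tau$. Splitting $\bfbm^\ast \cap \bfbb$ over the ground-truth region and using $|\bfbm^\ast \cap \bfbb \setminus \bfbgt| \le |\bfbm^\ast \setminus \bfbgt|$ gives
\[
|\bfbb \cap \bfbgt| \;\ge\; |\bfbm^\ast \cap \bfbb| - |\bfbm^\ast \setminus \bfbgt| \;>\; |\bfbm^\ast|\tau - |\bfbm^\ast\setminus\bfbgt|,
\]
and dividing by $|\bfbgt|$ yields $\textsc{IoA}(\bfbgt,\bfbb) > T$. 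Since $\bfbb\in\cB_{\text{base}}\subseteq\cB_{\text{robust}}$, we are done. In \emph{Case B}, $\bfbm^\ast$ survives filtering and lands in some cluster $\hat{\cB}$, whose representative $\bfb^\ast = \bigcup_{\bfb \in \hat{\cB}} \bfb$ contains $\bfbm^\ast$. Then $|\bfb^\ast \cap \bfbgt| \ge |\bfbm^\ast \cap \bfbgt| = |\bfbm^\ast| - |\bfbm^\ast\setminus\bfbgt| \ge |\bfbm^\ast|\tau - |\bfbm^\ast\setminus\bfbgt|$ (using $\tau \le 1$), so $\textsc{IoA}(\bfbgt,\bfb^\ast) > T$, and $\bfb^\ast \in \cB^{\text{pruned}}_{\text{mask}}\subseteq\cB_{\text{robust}}$.

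The main obstacle is Case~A: because IoA is asymmetric, the filtering threshold $\tau$ is applied in the ``wrong'' direction (normalized by $|\bfbm^\ast|$), and one must transfer that into an IoA bound normalized by $|\bfbgt|$ on the base box. This is exactly what the seemingly-odd certification condition is engineered to absorb: the $|\bfbm^\ast|\tau$ term is the worst-case overlap the filtering base box can steal from $\bfbm^\ast$, and subtracting $|\bfbm^\ast\setminus\bfbgt|$ discounts the part of that overlap that could lie outside $\bfbgt$. One side remark worth including is that the certifying condition implicitly forces the label of $\bfbm^\ast$ to match $\bfbgt$: otherwise $\bfbm^\ast\cap\bfbgt = \varnothing$ by convention, so $|\bfbm^\ast\setminus\bfbgt| = |\bfbm^\ast|$ and the left-hand side becomes $|\bfbm^\ast|(\tau-1)/|\bfbgt| \le 0$, contradicting $T\ge 0$; this legitimizes the class-label-respecting box operations in both cases above.
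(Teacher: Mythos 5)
Your proposal is correct and follows essentially the same route as the paper: the patch-removing mask guarantees the certifying masked box appears in $\cB_{\text{mask}}$ independently of the attacker's content, and the same two-case analysis (filtered by a base box vs.\ surviving into a union representative) transfers the bound $({|\bfbm|\tau - |\bfbm\setminus\bfbgt|})/{|\bfbgt|} > T$ to a box in $\cB_{\text{robust}}$. The paper merely packages the second half as a standalone ``pruning-safe masked box'' definition with supporting lemmas, while you trace the witness directly; your closing remark that the certification condition implicitly forces the class label of $\bfbm^\ast$ to match $\bfbgt$ is a small but worthwhile observation the paper leaves implicit.
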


\begin{proof}
Recall our certification intuition: if we can detect a ``good" masked box $\bfbm\in\cB_{\text{mask}}$ on images without adversarial pixels, this box is likely to ``survive" box pruning and become a good box in $\cB_{\text{robust}}$.
In this proof, we will first define the property of a ``good"  box as \textit{pruning-safe} masked box, and then discuss how to find pruning-safe boxes.

\begin{definition}[pruning-safe masked box]\label{dfn-pruning-safe}
Let $\bfbm$ be a masked box that is part of the box pruning input $\cB_{\text{mask}}$. We call $\bfbm$ a pruning-safe masked box for the ground-truth object $\bfbgt$ and pruning procedure $\textsc{IoA-BoxPrune}(\cdot,\cdot,\tau)$, if there is always a box $\bfb^\prime$ in the box pruning output satisfying $\textsc{IoA}(\bfbgt,\bfb^\prime)>T$, regardless of the rest of box pruning inputs $\cB_{\text{mask}}\setminus \{\bfbm\},\cB_{\text{base}}$. Formally, we have: $    \forall\ \cB_{\text{mask}}\st\bfbm\in\cB_{\text{mask}},\
\forall\ \cB_{\text{base}},\ 
\exists\ \bfb^\prime\in\cB_{\text{robust}}=\textsc{IoA-BoxPrune}(\cB_{\text{mask}},\ \cB_{\text{base}},\tau) \st \textsc{IoA}(\bfbgt,\bfb^\prime)>T.$
\end{definition} 
\noindent We note that this definition considers all possible $\cB_{\text{mask}}$ that contain $\bfbm$ and all possible $\cB_{\text{base}}$. This captures adaptive attackers' capability to maliciously manipulate some of the masked boxes in $\cB_{\text{mask}}$ (when the patch is not removed by the masks) and all base boxes in $\cB_{\text{base}}$ to interfere with the box pruning procedure. With Definition~\ref{dfn-pruning-safe}, we can discuss a sufficient condition of certifiable robustness in Lemma~\ref{lemma-robust}.


\begin{lemma}\label{lemma-robust}
Given a ground-truth box $\bfbgt$, one patch region $\bfp$, if we detect a pruning-safe masked box $\bfb_\bfm$ from a masked image $\bfx\odot\bfm$ with no adversarial pixels (i.e., $\bfm[i,j]\leq \bfp[i,j],\forall (i,j)$), \textsc{IoA-\framework} has certifiable IoA robustness for object $\bfbgt$ against attacker $\cA_{\{\bfp\}}$.
\end{lemma}

\begin{proof}
Since the pruning-safe masked box $\bfb_\bfm$ is detected from a masked image without adversarial pixels, we have $\bfb_\bfm\in\cB_{\text{mask}}$ no matter what a patch attacker does using $\bfp$. From the definition of pruning-safe mask box, we have the guarantee that $\exists\ \bfb^\prime\in\cB_{\text{robust}}\st \textsc{IoA}(\bfbgt,\bfb^\prime)>T$, which implies certifiable IoA robustness. 
\end{proof}

With Lemma~\ref{lemma-robust}, an IoA robustness certification procedure only needs to look for pruning-safe masked boxes detected on masked images without adversarial pixels. Next, we will present two lemmas discussing how to identify pruning-safe masked boxes. 

Recall that the box pruning involves two steps of box filtering (Line~\ref{ln-inference-filter} of Algorithm~\ref{alg-inference}) and box unionizing (Line~\ref{ln-inference-union} of Algorithm~\ref{alg-inference}). Lemma~\ref{lemma-ioa} will give a lower bound of the IoA guarantee for the first step (box filtering). Lemma~\ref{lemma-ioa-prune} will use this lower bound to further derive the sufficient condition of being a pruning-safe masked box for the entire box pruning procedure (box filtering and box unionizing).

\begin{lemma}\label{lemma-ioa}
Given any ground-truth object box $\bfbgt$, if a detected masked box $\bfbm$ is filtered by a box $\bfbb$ during box filtering, i.e., $\textsc{IoA}(\bfbm,\bfbb)>\tau$, we have:
$$
\resizebox{\hsize}{!}{$\textsc{IoA}(\bfbgt,\bfbb) > \frac{|\bfbm|\cdot \tau - |\bfbm\setminus\bfbgt|}{|\bfbgt|},\ \forall\ \bfbb \st \textsc{IoA}(\bfbm,\bfbb)>\tau$}
$$
\end{lemma}
\begin{proof}
From $\textsc{IoA}(\bfbm,\bfbb) = |\bfbm\cap\bfbb|/|\bfbm|>\tau$, we have $|\bfbm\cap\bfbb|>|\bfbm|\cdot \tau$. 
With this condition, we can derive an inequality as follows:
$|\bfbgt\cap\bfbb| = |(\bfbgt\cap\bfbb)\cap\bfbm| + |(\bfbgt\cap\bfbb) \setminus \bfbm| \geq |\bfbm\cap\bfbb\cap\bfbgt| =  |\bfbm\cap\bfbb| - |(\bfbm\cap\bfbb) \setminus \bfbgt| \geq |\bfbm\cap\bfbb| - |\bfbm \setminus \bfbgt|> \bfbm\cdot \tau - |\bfbm \setminus \bfbgt|$ (two equal signs are based on basic set operations). Finally, we have $\textsc{IoA}(\bfbgt,\bfbb) =  |\bfbgt\cap\bfbb| / |\bfbgt| > ({|\bfbm|\cdot \tau - |\bfbm\setminus\bfbgt|})/{|\bfbgt|}$.
\end{proof}

\noindent{We use $ \mathbb{L}_{\textsc{IoA}}(\bfbgt,\bfbm,\tau) = ({|\bfbm|\cdot \tau - |\bfbm\setminus\bfbgt|})/{|\bfbgt|}$ to denote the lower bound. Lemma~\ref{lemma-ioa-prune} will demonstrate that $\mathbb{L}_{\textsc{IoA}}>T$ implies that $\bfbm$ is a pruning-safe masked box.}

\begin{lemma}\label{lemma-ioa-prune}
Given a ground-truth object $\bfbgt$ and the box filtering threshold $\tau$, if there is one masked box $\bfbm\in\cB_{\text{mask}}$ satisfying that $\mathbb{L}_{\textsc{IoA}}(\bfbgt,\bfbm,\tau)>T$, this box is a pruning-safe masked box for object $\bfbgt$ and $\textsc{IoA-BoxPrune}(\cdot,\cdot,\tau)$.
\end{lemma}
\begin{proof}

The detected masked box $\bfbm\in\cB_{\text{mask}}$ will go through box filtering and box unionizing to generate the final output. We will demonstrate that this masked box $\bfbm$ will ``survive" these two operations and become part of the final output, ensuring the IoA robustness.

\textit{Box filtering.} Recall that, in Line~\ref{ln-inference-filter} of Algorithm~\ref{alg-inference}, we remove a box $\bfbm$ if there is another box $\bfbb\in\cB_{\text{base}}$ satisfying $\textsc{IoA}(\bfbm,\bfbb)>\tau$, and the masked box set $\cB_{\text{mask}}$ becomes ${\cB}^{\text{filtered}}_{\text{mask}}$ after the filtering. We can prove that there is always a box $\bfb^* \in{\cB}^{\text{filtered}}_{\text{mask}}\cup\cB_{\text{base}}$  such that $\textsc{IoA}(\bfbgt,\bfb^*)>T$. There are two possible scenarios. 
\begin{enumerate}  \setlength\itemsep{0em}
    \item If $\bfbm$ is not filtered, we will know there is a box $\bfb^* = \bfbm \in{\cB}^{\text{filtered}}_{\text{mask}}$ such that $\textsc{IoA}(\bfbgt,\bfb^*) = |{\bfbgt} \cap \bfbm|/|\bfbgt| = ({|\bfbm| - |\bfbm\setminus\bfbgt|})/{|\bfbgt|} \geq \mathbb{L}_{\textsc{IoA}}(\bfbgt,\bfbm,\tau)>T$.
    \item If $\bfbm$ is filtered, there is a box $\bfb^* \in \cB_{\text{base}}$ such that $\textsc{IoA}(\bfbm,\bfb^*)>\tau$. From Lemma~\ref{lemma-ioa}, we know that this box $\bfb^* \in \cB_{\text{base}}$ satisfies $\textsc{IoA}(\bfbgt,\bfb^*)> \mathbb{L}_{\textsc{IoA}}(\bfbgt,\bfbm,\tau)>T$. 
\end{enumerate} 

\textit{Box unionizing.} Recall that we will perform clustering over filtered masked boxes ${\cB}^{\text{filtered}}_{\text{mask}}$ and take the mathematical union of each cluster of boxes as the representative of each cluster to get ${\cB}^{\text{pruned}}_{\text{mask}}$ (Line~\ref{ln-inference-union} of Algorithm~\ref{alg-inference}). Since the mathematical union operation will not decrease IoA, we know that there is a box $\bfb^{\prime}\in{\cB}^{\text{pruned}}_{\text{mask}}\cup\cB_{\text{base}}=\cB_{\text{robust}}$ such that $\textsc{IoA}(\bfbgt,\bfb^{\prime}) \geq \textsc{IoA}(\bfbgt,\bfb^*)>T$, which implies the certifiable robustness for the object $\bfbgt$.
\end{proof}

\noindent\textbf{Certification.} Lemma~\ref{lemma-robust} and Lemma~\ref{lemma-ioa-prune} together provide a simple way to certify IoA robustness: if we can detect a masked box $\bfbm$ on a masked image with no adversarial pixel ($\bfm[i,j]\leq \bfp[i,j],\forall (i,j)$), and this box is pruning-safe ($\mathbb{L}_{\textsc{IoA}}>T$), we have certifiable IoA robustness for the object $\bfbgt$ against the patch region $\bfp$. Recall that Algorithm~\ref{alg-certification} only returns \texttt{True} when these conditions are satisfied for all possible patch regions $\bfp\in\cP$ (e.g., patches at different locations). Therefore, our certification in Algorithm~\ref{alg-certification} has accounted for all possible attackers within $\cA_\cP$.
\end{proof}

\textbf{Remark: usage of Algorithm~\ref{alg-certification}.} Algorithm~\ref{alg-certification} and Theorem~\ref{thm} allow us to determine the certifiable robustness of \framework for a ground-truth object $\bfbgt$ in a given image $\bfx$ against a given threat model $\cA_\cP$. In our evaluation, we will report the fraction of certified objects in the annotated test set of benchmark datasets as the robustness metric. We note that the certification procedure (Algorithm~\ref{alg-certification}) is only used for robustness \textit{evaluation} and thus requires ground-truth annotations and specific patch information. When we deploy \framework in the wild, we use the inference procedure (Algorithm~\ref{alg-inference}) instead and thus do not need any ground-truth annotation or patch information.

\section{Evaluation}\label{sec-eval}
In this section, we implement \framework with two vanilla object detectors and evaluate the defense performance on two object detection datasets (we include a third dataset in Appendix~\ref{apx-eval}). 
We demonstrate a significant robustness improvement ($\sim$10\%-40\% absolute and $\sim$2-6$\times$ relative) over the prior work DetectorGuard~\cite{xiang2021detectorguard} as well as similarly high clean performance ($\sim$1\% drop compared with vanilla undefended models). 

\subsection{Setup}\label{sec-eval-setup}

In this subsection, we introduce our evaluation setup, including datasets, object detectors, evaluation metrics, robustness evaluation setup, and defense setup.

\noindent\textbf{Datasets.}

\underline{VOC}~\cite{voc}. The detection challenge of the PASCAL Visual Object Classes (VOC) project has annotations for 20 different object classes. We combine the \texttt{trainval2007} set (5k images) and the \texttt{trainval2012} set (11k images) for training and evaluate \framework on the \texttt{test2007} set (5k images), which is a conventional usage of the PASCAL VOC dataset~\cite{liu2016ssd,zhang2019towards}.

\underline{COCO}~\cite{coco}. The Microsoft Common Objects in COntext (COCO) dataset is a challenging object detection dataset with 80 annotated object classes. We use the \texttt{COCO2017} split for training (117k images) and validation (5k images). 

\noindent \textbf{Object detectors.}

\underline{YOLOR}~\cite{yolor} is a popular one-stage object detector that achieves a good balance between inference speed and accuracy. We choose YOLOR-S~\cite{yolor} as the base object detector in \framework. 

\underline{Swin Transformer}~\cite{liu2021swin} adopts the representative two-stage detector Mask R-CNN~\cite{he2017mask} architecture and uses the Swin Transformer~\cite{liu2021swin} as the backbone. Its largest model achieves state-of-the-art detection performance on COCO. We choose Swin-S for our experiments. 

\noindent\textbf{Evaluation metrics.}

\underline{Clean performance: Average Precision (AP).} We use AP as our evaluation metric for clean performance, which follows object detection benchmark competitions~\cite{voc,coco} and relevant research papers~\cite{redmon2017yolo9000,redmon2018yolov3,bochkovskiy2020yolov4,ren2015faster,he2017mask,lin2017focal,tan2020efficientdet,liu2016ssd,he2017mask,yolor}. An object detector will have different precision and recall values as we change its confidence threshold $\gamma$ (recall that $\mathbb{F}(\bfx,\gamma)$ only outputs boxes with confidence values larger than $\gamma$). AP is defined as the average of precision values at different recalls. Intuitively, AP considers model performance at different confidence thresholds, precision values, and recall values; thus, it provides a view of the model's overall performance. We note that AP is high only when \textit{both precision and recall} are high. In our evaluation, we report $\text{AP}_{0.5}$ (AP evaluated with an IoU threshold of 0.5). We provide more details of the AP calculation in Appendix~\ref{apx-setup-ap}.



\begin{table*}[t]
    \centering
    \caption{Performance of vanilla undefended models, \framework, and DetectorGuard~\cite{xiang2021detectorguard}}  \label{tab-main-eval}
    \vspace{-1em}
    \resizebox{\linewidth}{!}
{ \small
\begin{tabular}{c|l|c|c|c|c|c|c|c|c|c|c|c}
    \toprule
     \multicolumn{3}{c|}{Dataset} & \multicolumn{5}{c|}{PASCAL VOC~\cite{voc}}&\multicolumn{5}{c}{MS COCO~\cite{coco}}\\
    \midrule
    
 &\multirow{2}{*}{\diagbox{Model}{Metric}}&Certify &\multirow{2}{*}{$\text{AP}_{0.5}$} & \multirow{2}{*}{FAR}   &  \multicolumn{3}{c|}{Certified recall (@0.8)}&\multirow{2}{*}{$\text{AP}_{0.5}$} & \multirow{2}{*}{FAR}  &  \multicolumn{3}{c}{Certified recall (@0.6)}\\
  &&class?&&&{far-patch}& {close-patch}& {over-patch} & & &{far-patch}& {close-patch}& {over-patch} \\
    \midrule
\multirow{4}{*}{\rotatebox{90}{{\footnotesize YOLOR~\cite{yolor}}}}& Vanilla (undefended) & -- & 94.3\% & -- & -- & -- & -- & 70.3\% & -- & -- & -- & -- \\
 &\framework&{\cmark}&92.9\%&--&58.9\%&46.7\%&18.0\%& 69.3\%&--&41.5\%& 28.8\%& 15.5\%\\
 &\framework&{\xmark}&93.2\%&--&61.9\%& 49.8\%& 21.5\%&69.8\%&--&44.6\%& 31.8\%& 18.1\%\\
  &DetectorGuard~\cite{xiang2021detectorguard}&{\xmark}&93.0\%&5.2\%&32.7\%&25.1\%&12.0\%&69.6\%&2.4\%&13.7\%&8.0\%&3.0\%\\
  
 \midrule
 \multirow{4}{*}{\rotatebox{90}{Swin~\cite{liu2021swin}}}& Vanilla (undefended) & -- & 93.9\% & -- & -- & -- & -- & 69.6\% & -- & -- & -- & -- \\
 &\framework&\cmark&92.6\%&--&68.0\%& 55.2\% & 22.5\%&68.9\%&--&34.9\%& 24.8\%& 11.7\%\\
  
 &\framework& \xmark & 92.9\%&--& 70.8\%& 58.0\%& 26.9\%&69.2\%&--&37.5\%& 27.4\%& 14.2\%\\

 &DetectorGuard~\cite{xiang2021detectorguard}& \xmark &92.8\%&3.6\%&31.2\%&23.0\%&10.4\%&69.2\%&1.4\%& 11.4\%&6.8\%&2.4\%\\

      \bottomrule
    \end{tabular}}
  
\end{table*}

\underline{Robustness performance: Certified Recall (CertR).} We use \textit{certified recall} to evaluate defense robustness against patch hiding attacks. The certified recall is defined as the fraction of ground-truth \textit{objects} whose IoA robustness can be certified by our defense (for which Algorithm~\ref{alg-certification} returns \texttt{True}). We note that the certified recall changes as the \textit{clean} recall of the object detector changes (when we use a different confidence threshold $\gamma$). To enable a fair comparison, we report certified recall at a particular clean recall (CertR@0.x). We report CertR@0.8 for VOC and CertR@0.6 for COCO, which follows DetectorGuard~\cite{xiang2021detectorguard}.

\textbf{Robustness evaluation setup.} To evaluate the certifiable robustness of \framework and to fairly compare with DetectorGuard~\cite{xiang2021detectorguard}, we choose a square patch that takes 1\% of the image pixels and report certified recalls for a certification threshold $T=0$.
We will also analyze defense performance when we use different patch sizes (e.g., 1-100\% pixels), patch shapes (e.g., rectangles), and large certification thresholds $T$ (e.g., 0-0.9) in Section~\ref{sec-eval-detailed} and \ref{sec-eval-shape-size}.

As discussed in Section~\ref{sec-formulation-attack}, we consider three location models. We consider a patch location as a far-patch when the smallest distance along the height (and width) axis between any adversarial pixel and object pixel is larger than 10\% of the image height (and width). We count an over-patch when there are adversarial pixels within the object bounding box. We consider the remaining patch locations as close-patch. We find that certified robustness against \textit{all possible locations} is identical to that for over-patch locations, implying that over-patches are the hardest cases for defenders.

\textit{Note: the patch-agnostic property.} We note that the setup of \framework is agnostic to the shape, size, and location of the adversarial patch; the specified patch information is only used for robustness evaluation/certification. In other words, our defense algorithm and setup do not change when we consider a different patch shape, size, or location, though the robustness performance can change for different patches. 



\textbf{Defense setup.} In our default setting, we set the number of vertical/horizontal lines $k=30$ and the filtering threshold $\tau = 0.6$. We use different confidence thresholds $\gammab,\gammam$ for base boxes and masked boxes to adjust the \textit{clean recall} of \framework for AP and CertR@0.x evaluation; we provide additional details in Appendix~\ref{apx-setup-ap}. We note that we choose these parameters because they can give similarly small clean performance drops ($\sim$1\%) compared with undefended models on a validation set. In Appendix~\ref{apx-more-discussion}, we further demonstrate that using different randomly selected validation sets gives identical parameter selection results, and thus our default parameters are not ``overfitted" to the evaluation setup of this section. In Section~\ref{sec-eval-detailed}, We will further analyze the impact of different defense parameters. 

We will also evaluate the performance of DetectorGuard~\cite{xiang2021detectorguard} using their official open-source code. We use YOLOR and Swin as its vanilla object detectors for a fair performance comparison. 
We further report \underline{false alert rate (FAR)} for DetectorGuard since it is an attack-detection defense. FAR is the fraction of clean images for which DetectorGuard issues a false alert. Note that \framework is alert-free so it always has a zero FAR.

\begin{figure*}
\centering
\begin{minipage}[b]{0.32\linewidth}
\includegraphics[width=\linewidth]{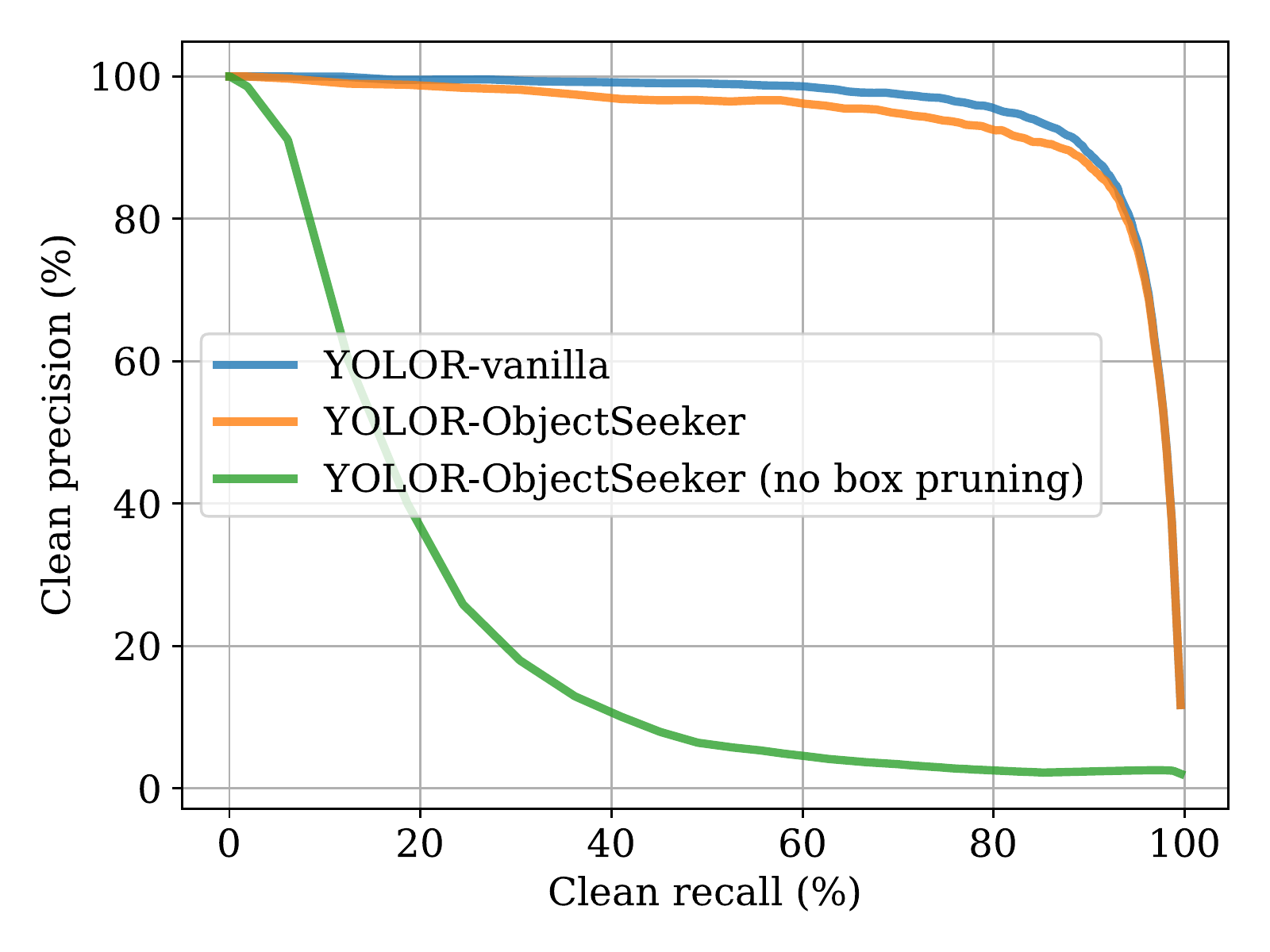}
\vspace{-2em}
    \caption{Clean precision vs. clean recall}
    \label{fig-prec-voc}
\end{minipage}%
\quad
\begin{minipage}[b]{0.32\linewidth}
\includegraphics[width=\linewidth]{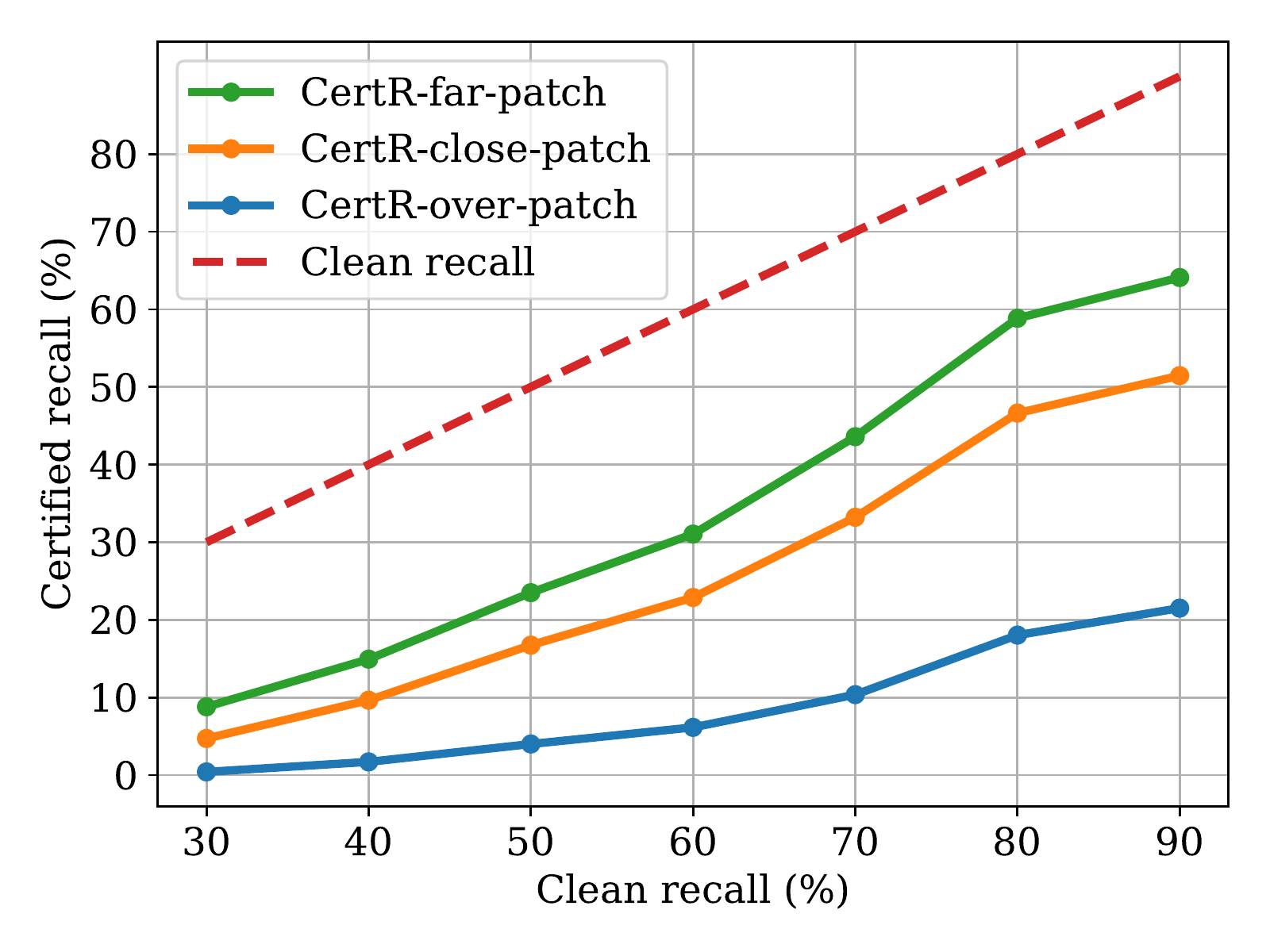}
\vspace{-2em}
    \caption{Certified recall vs. clean recall}
    \label{fig-cr-voc}
\end{minipage}%
\quad
\begin{minipage}[b]{0.32\linewidth}
\includegraphics[width=\linewidth]{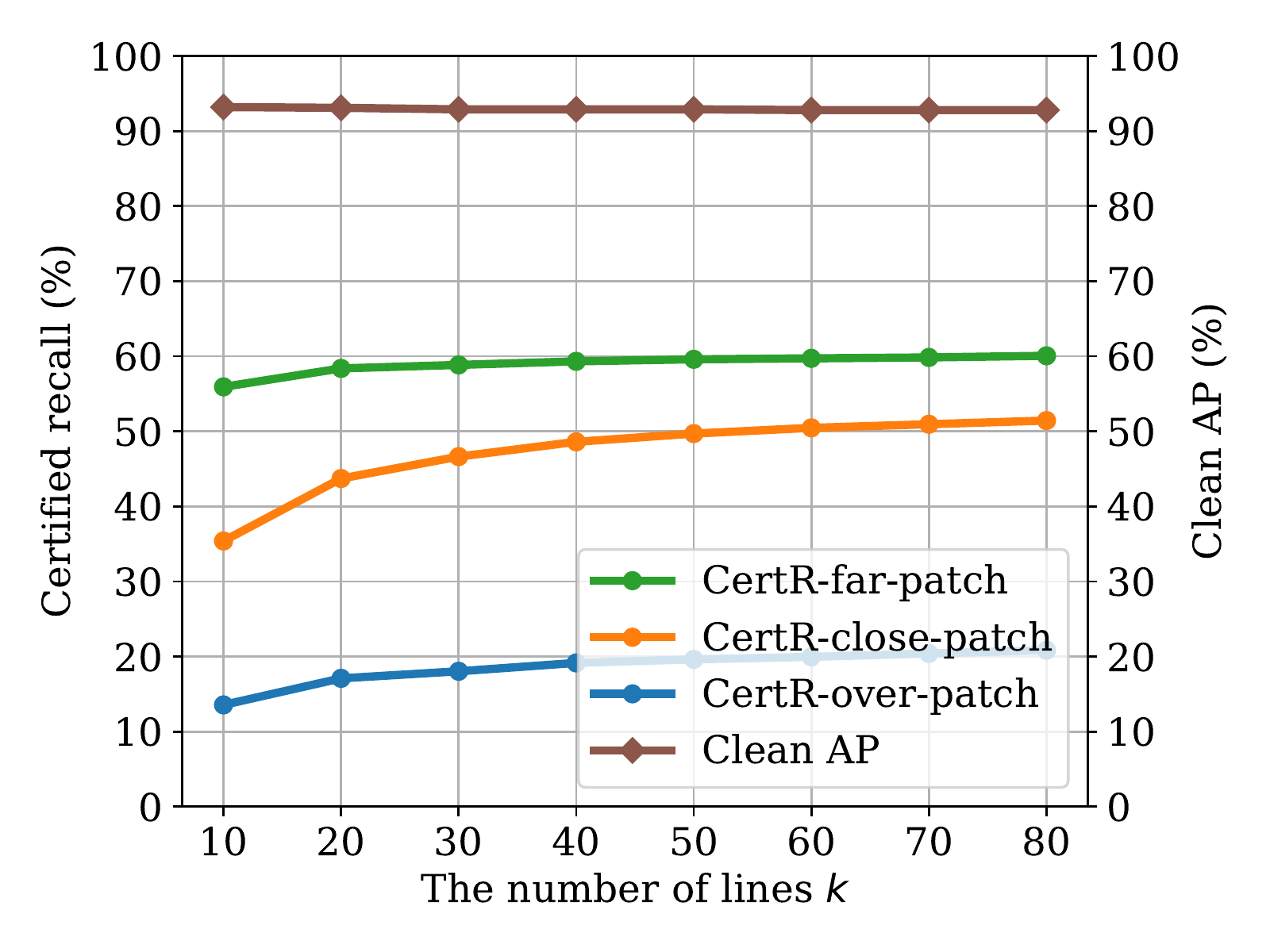}
\vspace{-2em}
    \caption{Effect of the number of lines $k$}
    \label{fig-k-voc}
\end{minipage}%
\end{figure*}

\begin{figure*}
\centering
\begin{minipage}[b]{0.32\linewidth}
\includegraphics[width=\linewidth]{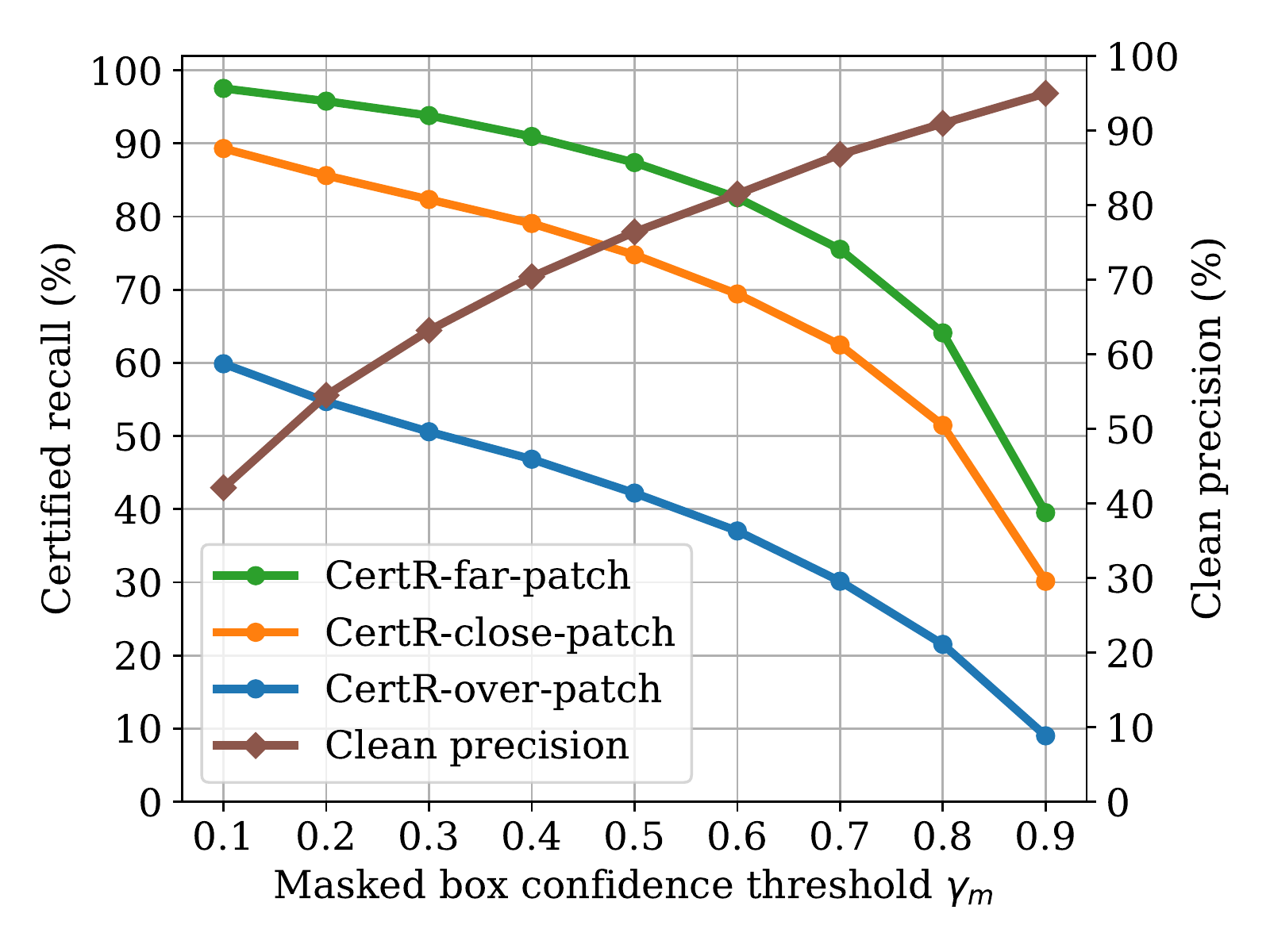}
\vspace{-2em}
    \caption{Effect of masked box confidence threshold $\gammam$}
    \label{fig-mconf-voc}
\end{minipage}%
\quad
\begin{minipage}[b]{0.32\linewidth}
\includegraphics[width=\linewidth]{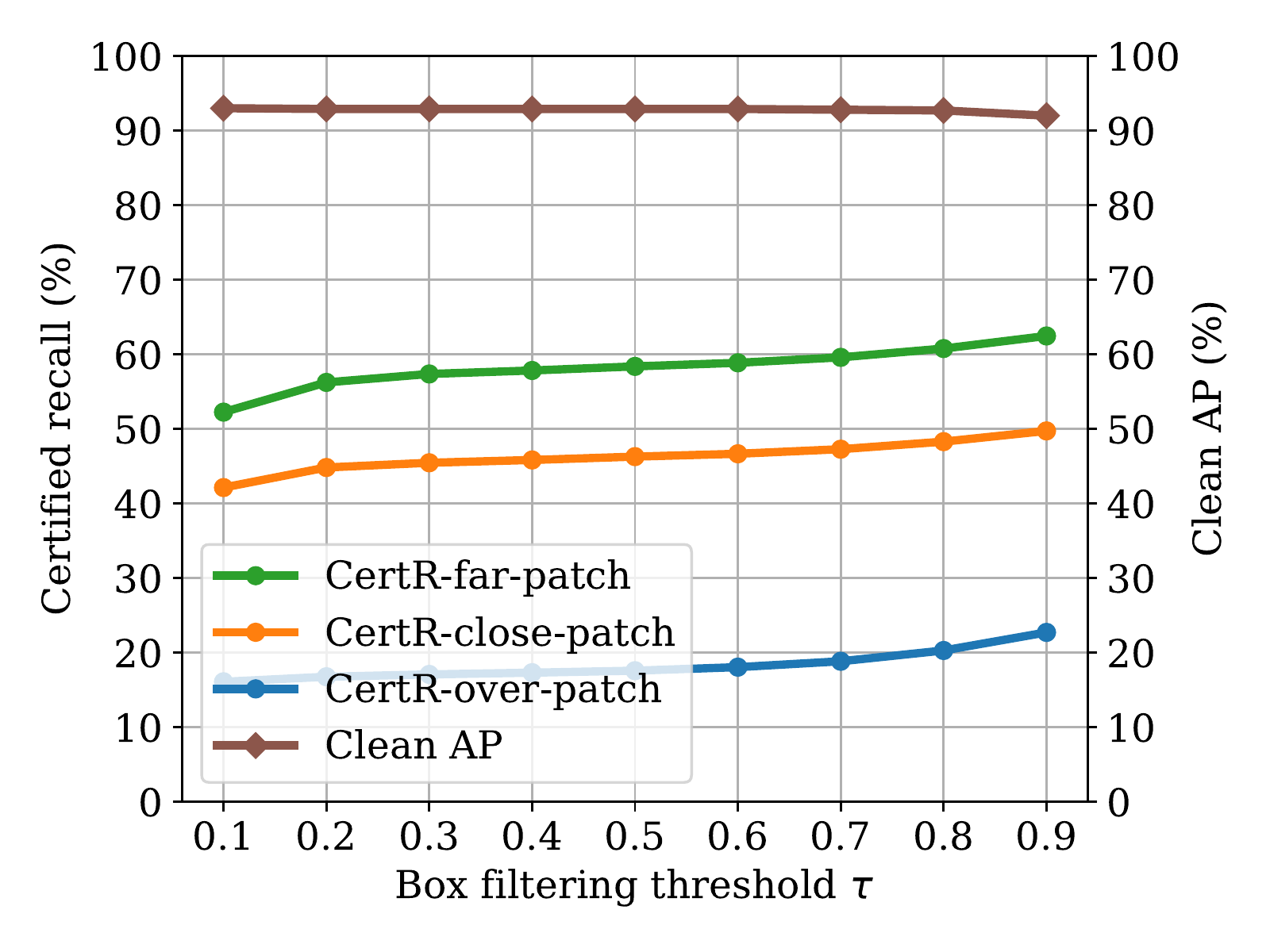}
\vspace{-2em}
    \caption{Effect of box filtering threshold $\tau$}
    \label{fig-tau-voc}
\end{minipage}%
\quad
\begin{minipage}[b]{0.32\linewidth}
\includegraphics[width=\linewidth]{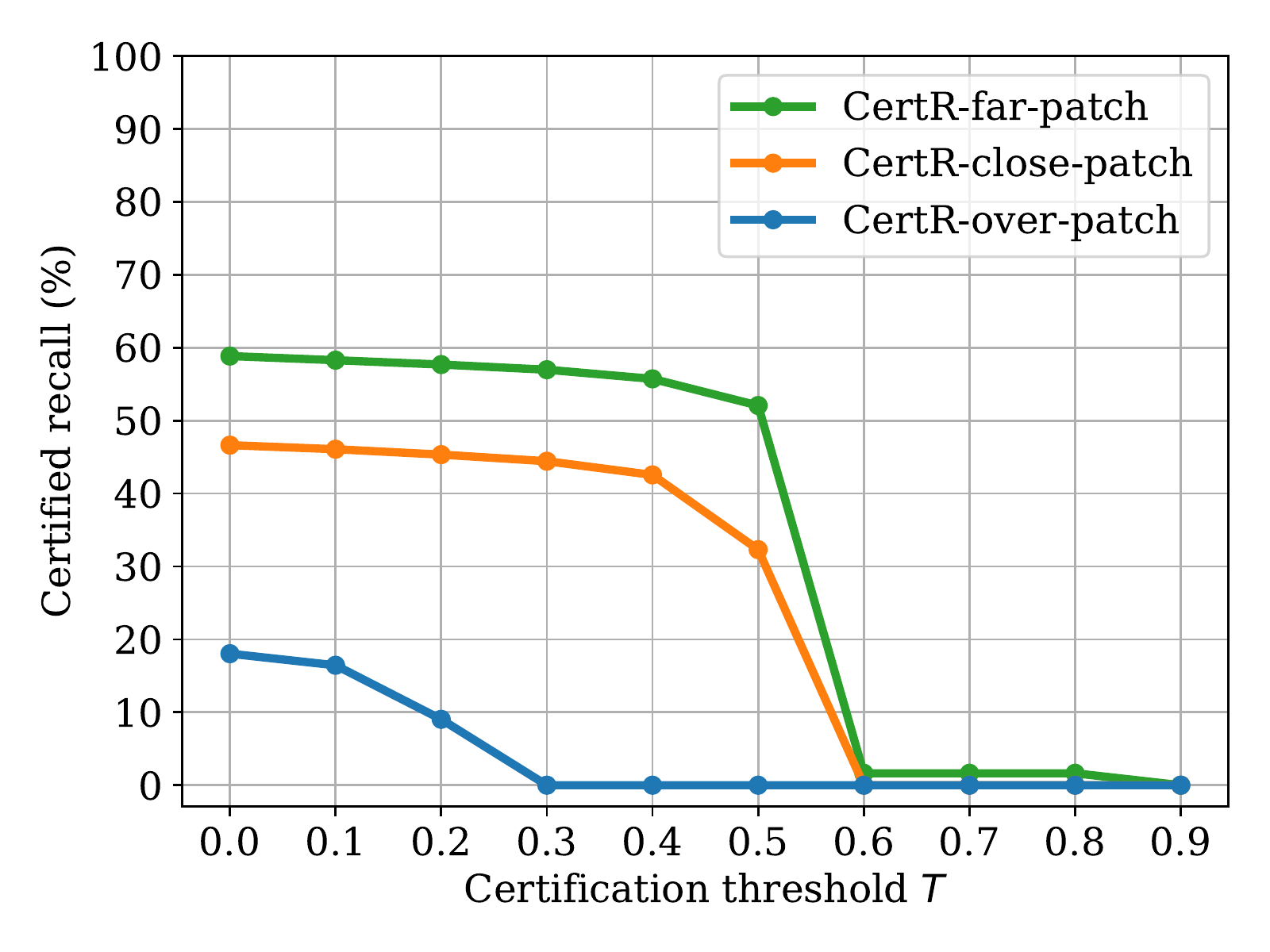}
\vspace{-2em}
    \caption{\framework performance for different certification thresholds $T$}
    \label{fig-certify-voc}
\end{minipage}%
\end{figure*}

\subsection{State-of-the-art Performance of \framework}\label{sec-eval-main}

In Table~\ref{tab-main-eval}, we report the defense performance of \framework and compare it with DetectorGuard~\cite{xiang2021detectorguard}. We note that \framework is able to certify the correct class label of the detected box (in addition to detecting the object); in contrast, DetectorGuard has no guarantee for the label. To enable a fair comparison, we additionally report performance for a variant of \framework that ignores the class labels in the step of secure box pruning. We also include the AP metric of vanilla undefended models to understand the clean performance of defenses.

\textbf{\framework achieves high certified recall across different datasets and threat models.} Table~\ref{tab-main-eval} demonstrates that \framework achieves high certified recalls. For example, \framework-Swin (certify class) achieves a 68.0\% certified recall for the far-patch model on the VOC dataset. That is, for 68.0\% of the objects in the test set of the VOC dataset, no patch hiding attacker using a 1\%-pixel square patch that is far away from the object can bypass our defense (i.e., hide the object). We can also see similarly high numbers across different object detectors, datasets, and threat models.

\textbf{\framework has a similarly high clean performance as vanilla object detectors.} Comparing APs of vanilla models and \framework in Table~\ref{tab-main-eval}, we can see that \framework achieves a similar clean AP as the vanilla object detectors -- the AP drops are only $\sim$1\%. The high clean performance can foster the real-world deployment of our defense. We note that the clean performance of \framework that certifies class labels is slightly worse than that without class certification. This is because the vanilla object detector can make mistakes between similar object classes (e.g., motorbike vs. bicycle, car vs. bus) when the object is partially masked. Despite the small clean AP drop, we note that \framework achieves a stronger robustness notion by certifying class labels.

\textbf{\framework achieves significant performance improvements compared with DetectorGuard~\cite{xiang2021detectorguard}.} We also compare defense performance between \framework and DetectorGuard~\cite{xiang2021detectorguard}. 
First, we can see that \framework achieves a significant improvement in certified recalls. For example, \framework-Swin (not certify class) improves the certified recall by 2$\times$ across three different location models on VOC (16.5\%-39.6\% absolute CertR improvements). The relative improvement on COCO is even larger: \framework-YOLOR improves the certified recall by 6$\times$ against the over-patch attacker. We note the significant improvement holds even when we require \framework to certify class label: we have achieved much higher certified recalls for an even stronger robustness notion. All these results demonstrate the strength of \framework.

Second, \framework also has similarly high clean performance as DetectorGuard. When we do not certify class label, \framework has slightly higher clean APs than DetectorGuard. When we require \framework to protect class labels, the clean APs are slightly lower. Moreover, we want to note that DetectorGuard is an attack-detection defense: it alerts and abstains from making predictions when it detects an attack. This design can cause non-trivial false alerts on the clean images and downgrade the user experience. In contrast, \framework is an alert-free defense and thus has more advantages in real-world deployment.

\textbf{Summary.} In this subsection, we demonstrate that \framework achieves high certified recalls while maintaining high clean APs as vanilla undefended models. Through a comparison with the only prior work DetectorGuard~\cite{xiang2021detectorguard}, we further demonstrate \framework's state-of-the-art defense performance against patch hiding attacks.

\begin{figure*}
\centering
\begin{minipage}[b]{0.38\linewidth}
    \includegraphics[width=\linewidth]{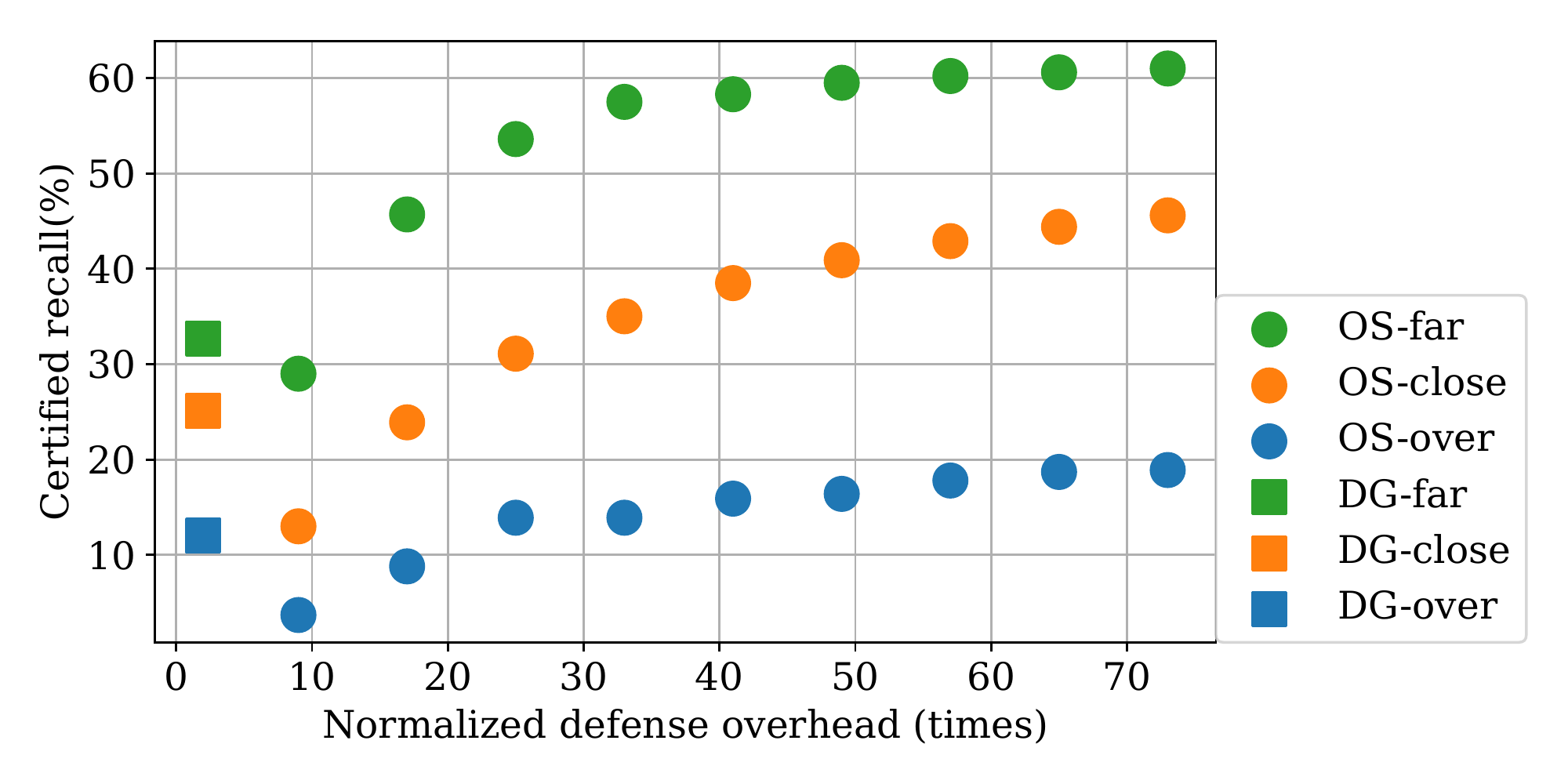}
    \vspace{-2em}
    \caption{Trade-off between defense overhead and certified robustness (OS: \framework; DG: DetectorGuard~\cite{xiang2021detectorguard})}
    \label{fig-efficiency-voc}
\end{minipage}%
\quad
\begin{minipage}[b]{0.28\linewidth}
    \includegraphics[width=\linewidth]{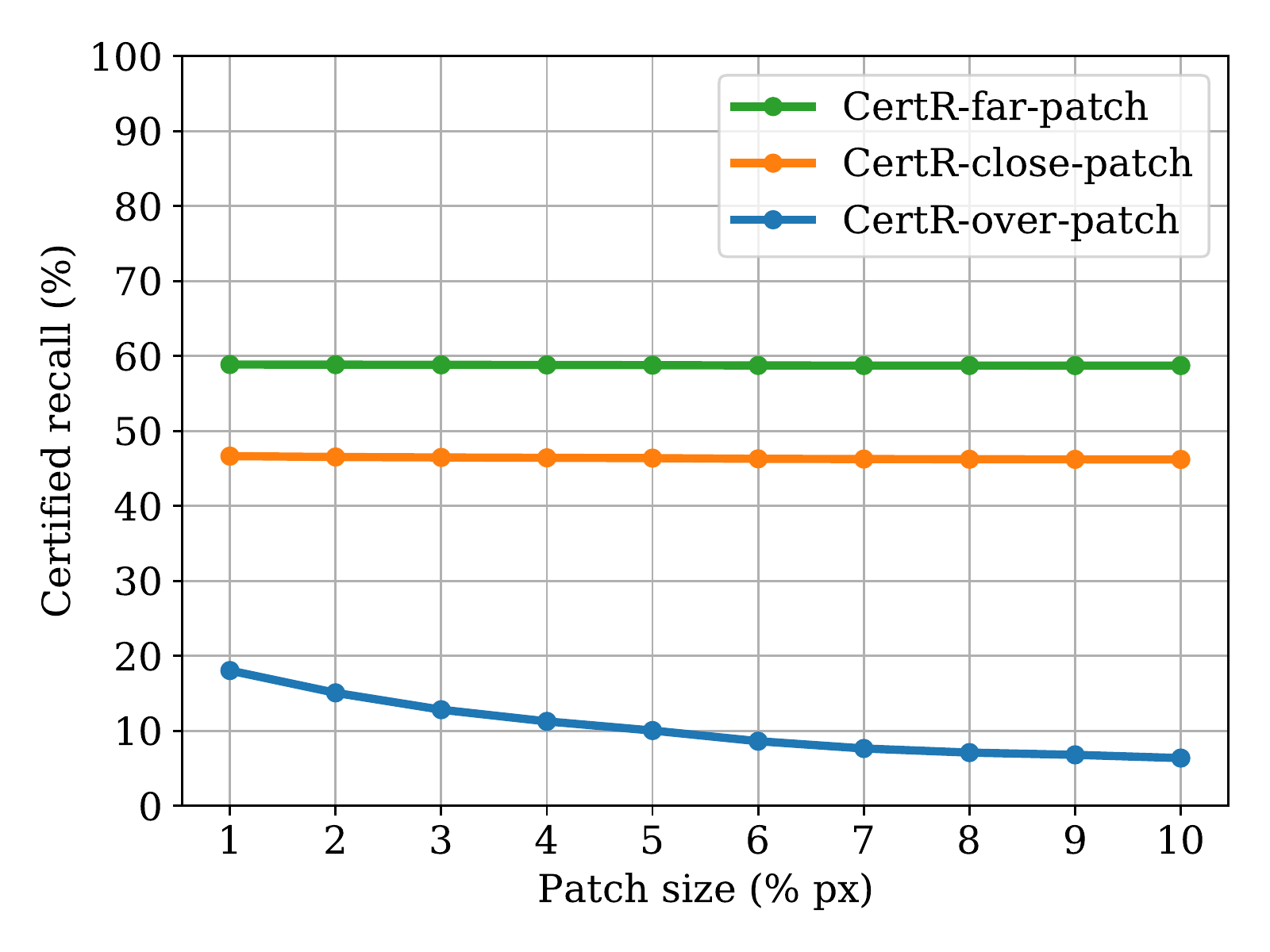}
    \vspace{-2em}
    \caption{Robustness against different patch sizes ($k=30$)}
    \label{fig-patch-voc}
\end{minipage}%
\quad
\begin{minipage}[b]{0.28\linewidth}
    \includegraphics[width=\linewidth]{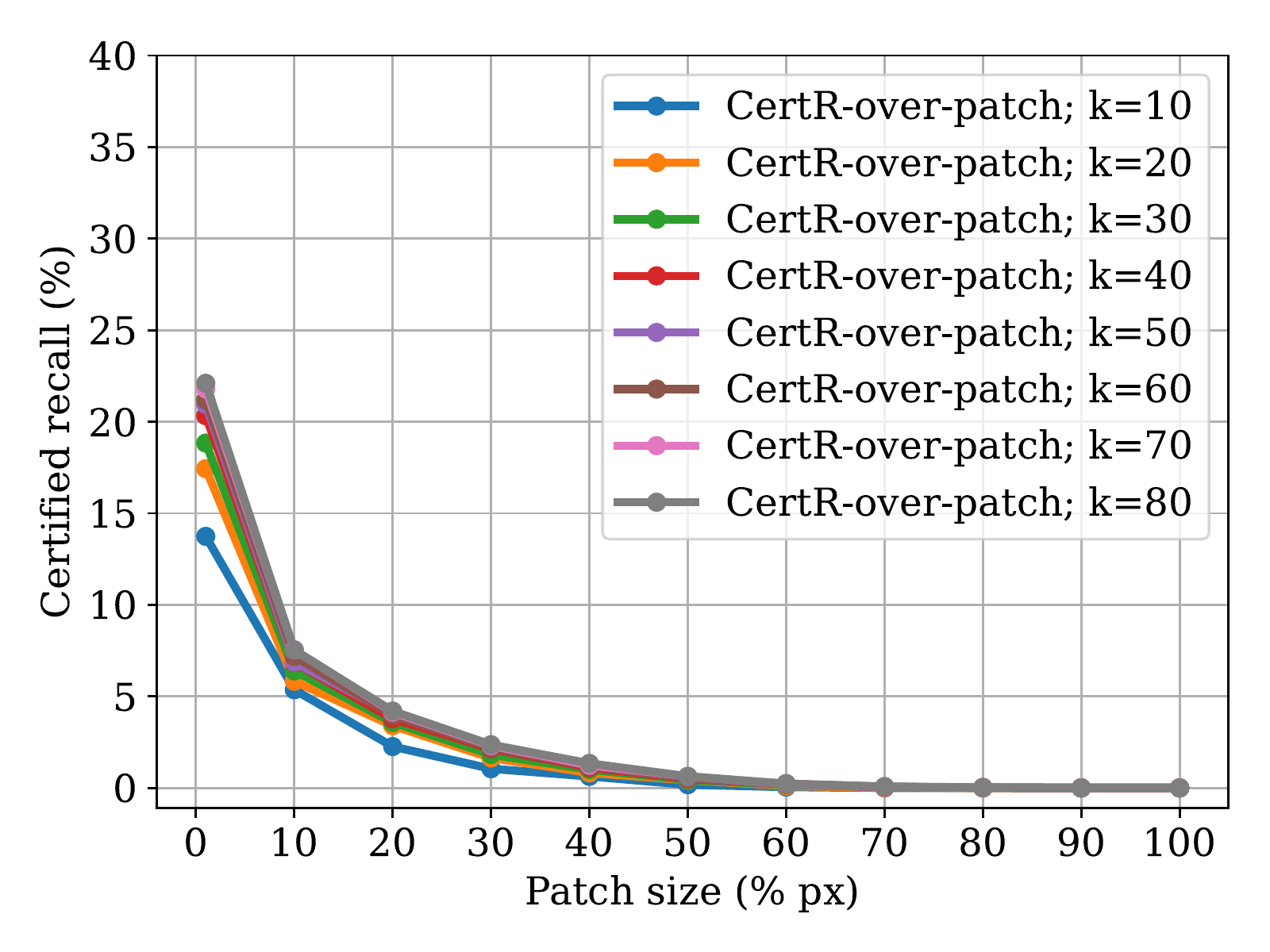}
    \vspace{-2em}
    \caption{Robustness against different over-patch sizes with different $k$}
    \label{fig-patch-k-voc}
\end{minipage}%
\end{figure*}

\subsection{Detailed Analysis of \framework}\label{sec-eval-detailed}
In this subsection, we perform detailed analyses using the YOLOR detector and the VOC dataset. We will report defense performance at different clean recalls, discuss the effect of different defense parameters, and analyze the defense overhead. We report similar analysis results for COCO in Appendix~\ref{apx-eval}.

\textbf{Clean precision vs. clean recall.} In Figure~\ref{fig-prec-voc}, we plot the clean precision-recall curves for vanilla YOLOR and \framework-YOLOR. First, we can see that two curves are close to each other, explaining similar clean APs reported in Table~\ref{tab-main-eval}. Second, \framework-YOLOR has slightly lower precision than vanilla YOLOR when the precision value is higher than 85\%; this explains the slight AP drop in Table~\ref{tab-main-eval}. Third, we additionally plot the curve for \framework without the secure box pruning module. We can see that this variant has a very low precision, which demonstrates the necessity and effectiveness of our box pruning module. Finally, we note that the precision of both vanilla YOLOR and \framework-YOLOR starts to drop quickly as the clean recall increases over 80\%. Therefore, we choose to study model robustness at a clean recall of 0.8 (CertR@0.8) for VOC in Table~\ref{tab-main-eval}, when the model has both high clean precision and high clean recall. 

\textbf{Certified recall vs. clean recall.} In Figure~\ref{fig-cr-voc}, we report certified recall at different clean recall values (recall that we only report CertR@0.8 in Table~\ref{tab-main-eval}). As shown in the figure, the certified recall increases as clean recall increases. This is expected since the more objects we can detect in the clean setting, the more objects we can try to provide certifiable robustness for. Moreover, we note that the gap between clean recall and certified recall is stable as we vary the clean recall values. This implies that \framework is compatible with detectors at different clean recall levels. How to further close this gap is an important future research question.

\textbf{Effect of the number of lines $k$.} In this analysis, we vary the number of lines $k$ used for the mask set generation, and plot the defense performance 
in Figure~\ref{fig-k-voc}. As shown in the figure, when we use a larger $k$, the robustness (CertR) gradually improves because we have a finer granularity of masks to bound the corrupted image region (recall Figure~\ref{fig-mask} in Section~\ref{sec-defense-mask}). Meanwhile, we can see a slight drop ($<0.5\%$) in clean AP as we increase $k$. This is because we have more masked images and more masked boxes, which leads to some unsuccessfully pruned boxes and hurts the precision of object detection. Furthermore, we find that the gain in certified robustness becomes minimal when $k$ is larger than 30. Therefore, we choose $k=30$ in our default setting to avoid excessive computational overhead.

\textbf{Effect of the masked box confidence threshold $\gammam$.} In Figure~\ref{fig-mconf-voc}, we study the effect of masked box confidence threshold $\gammam$ (with a fixed $\gammab$). Recall that we only consider masked boxes whose prediction confidence exceeds the threshold $\gammam$. Figure~\ref{fig-mconf-voc} demonstrates that the threshold $\gammam$ greatly affects the clean precision and the certified recall of \framework. When we use a small threshold $\gammam$, we have more masked boxes and have a better chance for successful robustness certification. However, a lower threshold leads to less confident and less precise predictions, which results in more incorrect boxes and decreases the clean precision.

\textbf{Effect of box filtering threshold $\tau$.} In Figure~\ref{fig-tau-voc}, we analyze the effect of box filtering threshold $\tau$. 
As we use a smaller $\tau$, the AP of \framework improves since there are fewer boxes left after the box filtering operation. However, the certified robustness is downgraded with a smaller $\tau$ (we note that CertR drops are larger if we consider a larger T; see Appendix~\ref{apx-more-discussion} for more results). This is because the lower bound proved in Lemma~\ref{lemma-ioa} gets lower with a smaller $\tau$, making the robustness certification harder to succeed.

\textbf{\framework performance for different certification thresholds $T$.} In our default setting, we set the certification threshold $T=0$. That is, we consider the defense is robust if we can detect even just a tiny part of the object. This setup follows DetectorGuard~\cite{xiang2021detectorguard} and enables a fair comparison in Table~\ref{tab-main-eval}. Here, we study the defense performance when we require the defense to detect at least $T$ of the object. We report the results in Figure~\ref{fig-certify-voc}. We can see that the CertR for over-patch is greatly affected by the certification threshold $T$. This is because we allow the adversary to place a patch at any location over the object. A worst-case attacker can put the patch at the center of the object to minimize the IoA guarantee (sometimes even occluding the entire small objects). 
Furthermore, we can see that the robustness for close-patch and far-patch are generally stable until the threshold $T$ hits a large value. This demonstrates that \framework provides stronger robustness when the patch does not overlap with objects. In Appendix~\ref{apx-more-discussion}, we further discuss the practical implications of different T.

\textbf{Defense overhead.} In Figure~\ref{fig-efficiency-voc}, we plot the defense overheads (normalized by the runtime of the base vanilla object detector) versus certified recalls for DetectorGuard~\cite{xiang2021detectorguard} and \framework. As shown in the figure, when the computational budget is small (lower than 20$\times$), DetectorGuard has better robustness (with a small overhead of 2$\times$). However, as we have more computational resources, \framework's performance gradually improves and eventually outperforms DetectorGuard by a large margin. This trade-off between defense overhead and defense performance should be carefully balanced when deploying the \framework defense. Moreover, we note that our approach can be trivially parallelized with multiple GPUs (performing vanilla predictions for different masked images simultaneously). In Appendix~\ref{apx-more-discussion}, we provide a quantitative analysis of absolute wall-clock runtime. We will show that using 8 GPUs can give $6.6\times$ speedup. Together with other implementation-level optimizations, we can run \framework on VOC with a latency of 40ms (25fps).

\subsection{Different Patch Sizes and Shapes}\label{sec-eval-shape-size}

In this subsection, we analyze defense performance against different patch sizes and shapes. Note that the performance is evaluated using the same defense setup (recall the patch-agnostic property).

\textbf{\framework performance against different patch sizes.} In Figure~\ref{fig-patch-voc}, we plot the defense performance against patches with different sizes (from 1\% to 10\% image pixels). As we use a large patch size, the robustness against the over-patch model gradually drops. This is expected since a larger patch has a greater chance to occlude the salient part of the object. Intriguingly, we find that the robustness for close-patch and far-patch exhibits a different behavior: the certified recalls barely change when faced with a larger patch. This analysis further demonstrates that our defense has constrained the adversarial effect to a local region: it is harder for the attacker to hide objects that do not overlap with the patch. This property of \framework can be helpful in practice when the attacker is not always able to place the patch over the victim object.

Furthermore, we report CertR for over-patch whose size ranges from 1-100\% when we use different $k$ in Figure~\ref{fig-patch-k-voc}. We can see that the CertR curves for different $k$ are highly similar across different patch sizes. This further demonstrates the patch-agnostic property of \framework: the default $k=30$ used in the paper is not implicitly optimized for small patches (e.g., occupying 1\% image pixels).




\begin{table}[t]
    \centering
    
        \caption{Certified recalls (\%) against one 1\%-pixel patch of different rectangle shapes}\label{tab-shape-voc}
            \vspace{-1em}
   \resizebox{\linewidth}{!} {\small
    \begin{tabular}{l|c|c|c|c|c|c|c|c|c}
\toprule
 
aspect ratio& 16:1 &8:1&4:1&2:1&1:1&1:2&1:4&1:8&1:16\\
\midrule
    far-patch      & 58.8&58.8 &58.9&58.9&58.9&58.9&58.9&58.9&58.9\\
close-patch        & 46.7&46.7&46.7&46.7&46.7&46.7&46.6&46.7&46.7\\
over-patch         & 21.3&19.2&19.6&18.8&18.0&18.1&16.8&18.3&18.3\\
\bottomrule
    \end{tabular}}

\end{table}

\textbf{\framework performance against different patch shapes.} In this analysis, we study the defense performance against different patch shapes. In Table~\ref{tab-shape-voc}, we report certified recalls for different rectangular shapes that take up 1\% of the image pixels (the aspect ratio ranging from 16:1 to 1:16). The results demonstrate that \framework is effective against different patch shapes: CertRs for far-patch and close-patch barely change; CertRs for over-patch only change slightly. We note that these results are obtained \textit{with the same set of defense parameters}, which further validates the patch-agnostic property of our masking defense.

Furthermore, we note that robustness certified in Table~\ref{tab-shape-voc} directly applies to other shapes that can be completely covered by the rectangles. For example, since a 1\%-pixel square can cover a $(\pi/4)$\%-pixel circle, certified robustness for a 1\%-pixel square also holds for a $(\pi/4)$\%-pixel circle.

\section{Discussion}\label{sec-discussion-limitation}
In this section, we discuss the limitations and future work directions of \framework.

\textbf{Robustness against multiple patches.} In this paper, we focus on the setting of \textit{one} adversarial patch with unknown content, shape, size, and location because it is an unresolved research question. However, the design of \framework is general: we only require that some masks from the mask set $\cM$ can remove all adversarial pixels. If we have a new mask set that can mask out all (multiple) patches, we can simply plug this new mask set $\cM$ into \framework. 

To provide a proof-of-concept, we experiment with two 0.5\%-pixel patches on the VOC dataset. First, we generate a mask set $\cM^\prime$ (for one patch) as discussed in Section~\ref{sec-defense-mask}. Second, we generate a new mask set $\cM$ that contains all possible two-mask combinations from the mask set $\cM^\prime$. Formally, we have $\cM=\{\bfm_0\odot \bfm_1 | (\bfm_0,\bfm_1)\in \cM^\prime\times\cM^\prime\}$. Third, we use $\cM$ to instantiate \framework. We report the defense performance in Table~\ref{tab-two-patch}.\footnote{We use a mask set $\cM^\prime$ that has 40 masks to generate the two-mask set $\cM$ for \framework. Since the number of all two patch locations can also be too large to evaluate, e.g., more than $(600\times1000)^2>10^{11}$ for images with $600\times1000$ pixels, we only select 200 VOC test images and select $(1/50)^2$ of all possible two-patch locations for robustness certification.} We can see that our defense has high clean performance and achieves non-trivial certified recall against two-patch attacks. 



\begin{table}[t]
    \centering
        \caption{Defense performance against two 0.5\%-pixel square patches for 200 VOC test images}
    \label{tab-two-patch}
        \vspace{-1em}
       \resizebox{\linewidth}{!}{
    \begin{tabular}{c|c|c|c|c}
    \toprule
 \multirow{2}{*}{\diagbox{Model}{Metric}} &\multirow{2}{*}{$\text{AP}_{0.5}$}   &  \multicolumn{3}{c}{Certified recall (@0.8)}\\

  &&{far-patch}& {close-patch}& {over-patch}  \\
   \midrule
    YOLOR&  94.4\%&--&--&--\\
    \framework-YOLOR& 94.4\%&51.7\%&24.6\%& 5.1\%\\
         \bottomrule
    \end{tabular}}

\end{table}


\textbf{Improving \framework runtime.} \framework needs to perform vanilla object detection on $4k$ masked images. As analyzed in Figure~\ref{fig-k-voc} and Figure~\ref{fig-efficiency-voc}, $k$ needs to be large enough to achieve good defense robustness. As a result, the \framework defense incurs a non-negligible overhead. We note that it is worthwhile to spend more computation for high certifiable robustness for applications whose robustness is important. For example, for the security-critical video content analysis, we can apply \framework to offline video to ensure robustness. Nevertheless, it is also important to study how to reduce defense overhead with both algorithm and implementation-level improvements. An algorithm-level optimization for real-time systems could be applying \framework to a subset of frames to balance the efficiency and robustness. Implementation-level optimizations could include parallelizing inference on masked images with multiple GPUs and resizing input images; we provide quantitative examples in Appendix~\ref{apx-more-discussion}.


\textbf{Improving robustness against over-patch.} In our evaluation, we follow DetectorGuard~\cite{xiang2021detectorguard} to use three patch location models to analyze \framework against different attack capabilities. Despite the large \textit{relative} improvements from DetectorGuard~\cite{xiang2021detectorguard}, we acknowledge that \framework's \textit{absolute} CertR for over-patch remains limited. Further enhancing robustness against the challenging over-patch attackers is an important future research objective.

\textbf{Further exploration of similarity score functions $\mathbb{S}$ and robustness notions.} In our \framework design, we use a ``robust" similarity score function $\mathbb{S}$ to prune redundant boxes, and we note that using different similarity scores $\mathbb{S}$ can provide different types of robustness notions. In this paper, we use IoA as the similarity score function $\mathbb{S}$ and focus on the concept of ``IoA robustness" (i.e., $\textsc{IoA}(\bfbgt,\bfb^\prime)=|\bfbgt\cap\bfb^\prime|/|\bfbgt|>T$). This is because IoA captures how much of the ground-truth box is detected and aligns with the objective of defending against patch hiding attacks (following DetectorGuard~\cite{xiang2021detectorguard}). In Appendix~\ref{apx-iou}, we discuss one alternative \framework instance: we use IoU as $\mathbb{S}$ and achieve ``IoU robustness" for far-patch attackers: we aim to predict a box $\bfb^\prime$ for each ground-truth box $\bfbgt$ such that $\textsc{IoU}(\bfbgt,\bfb^\prime)=|\bfbgt\cap\bfb^\prime|/|\bfbgt\cup\bfb^\prime|>T$. We demonstrate that, against a far-patch attacker, we can achieve certified recall of $\sim$50\% for VOC and $\sim$40\% for COCO with an IoU certification threshold of 0.5. In Appendix~\ref{apx-taxonomy}, we further provide a taxonomy of different robustness notions against patch hiding attacks. 

\textbf{Accounting for physical-world attack constraints.} The physically realizable nature of patch attacks imposes a threat to real-world object detectors and motivates the design of our defense. In \framework, however, we did not model physically realizable constraints such as printability and lighting conditions, but simply assumed that the attacker can introduce arbitrary pixel values. As a result, \framework's certification is over-conservative against physical-world attacks. How to further leverage physical-world constraints and improve defense robustness and efficiency could also be an interesting future work direction.


\section{Related Work}\label{sec-related-work}

\subsection{Adversarial Patch Attacks}
\textbf{Image classification.}
The adversarial patch attack was first introduced for\textit{ image classification}. Brown et al.~\cite{brown2017adversarial} demonstrated that, by constraining all adversarial pixels within a local restricted region, an attacker can carry out the patch attack in the physical world. The physically realizable nature of the patch attack imposed a huge threat to real-world computer vision systems. Follow-up papers further studied variants of patch attacks against image classifiers with different threat models~\cite{karmon2018lavan,yang2020patchattack,liu2019perceptual,liu2020bias,doan2021tnt}.

\textbf{Object detection.} 
Numerous patch attacks against \textit{object detection} have been proposed. Liu et al.~\cite{liu2019dpatch} proposed the first patch attack against object detectors. 
Lu et al.~\cite{lu2017adversarial}, Chen et al.~\cite{chen2018shapeshifter}, Eykholt et al.~\cite{eykholt2018physical}, and Zhao et al.~\cite{zhao2019seeing} proposed different physical attacks against traffic sign recognition. Thys et al.~\cite{thys2019fooling}, Xu et al.~\cite{xu2020adversarial},  and Wu et al.~\cite{wu2019making} studied how to use adversarial patches to evade person detection. Recently, Hu et al.~\cite{hu2021naturalistic} and Tan et al.~\cite{tan2021legitimate} propose natural-looking patch hiding attacks against object detectors.

\subsection{Defenses against Adversarial Patches}
\noindent\textbf{Image classification.} 
To counter the threat of adversarial patch attacks, there have been a large number of heuristic-based image classification defenses proposed in recent years~\cite{hayes2018visible,naseer2019local,wu2019defending,rao2020adversarial,mu2021defending}. 
Unfortunately, many of these defenses are found broken when there is an adaptive attacker who knows about the defense setup~\cite{chiang2021adversarial}. 
To provide a strong provable robustness guarantee for patch attacks, the research community has proposed a number of certifiably robust defenses~\cite{chiang2020certified,zhang2020clipped,levine2020randomized,mccoyd2020minority,xiang2021patchguard,metzen2021efficient,xiang2021patchguard2,xiang2021patchcleanser,han2021scalecert,salman2021certified} for image classification models. In this paper, \framework focuses on \textit{a harder task of object detection}.

\textbf{Object detection.} 
How to secure object detectors is a challenging and under-studied research question. Saha et al.~\cite{saha2020role} studied how to constrain the use of spatial context in YOLOv2~\cite{redmon2017yolo9000} and improved the robustness against a patch at the image corner. Metzen et al.~\cite{metzen2021meta} studied meta-learning techniques to improve model robustness. Ji et al.~\cite{ji2021adversarial} added adversarial patches to the training dataset and taught the model to detect patches. Liang et al~\cite{liang2021we} proposed two heuristic-based defenses to detect a patch hiding attack. Chiang et al.~\cite{chiang2021adversarial} designed a defense model to detect and remove the adversarial pixels. Despite their contributions to robust object detection research, these defenses are all based on heuristics and do not have any formal security guarantee. 

In contrast, Xiang et al.~\cite{xiang2021detectorguard} proposed DetectorGuard as the only certifiably robust defense against patch hiding attacks, which aimed to provide a robustness guarantee for certain certified objects against any adaptive attacker within the threat model. DetectorGuard designed an objectness explaining strategy to build certifiably robust object detectors using off-the-shelf certifiably robust image classifiers. However, DetectorGuard only achieved limited certified robustness (recall Table~\ref{tab-main-eval}). Xiang et al.~\cite{xiang2021detectorguard} pointed out that the bottleneck for the defense performance is the imperfection of existing certifiably robust image classifies: the incorrect classification outputs from the robust image classifier resulted in a heavy trade-off between robustness and clean performance.  
In this paper, we design \framework solely based on vanilla undefended object detectors and easily bypass the bottleneck of imperfect image classifiers. In Section~\ref{sec-eval}, we have demonstrated the significant improvements ($\sim$2-6$\times$) in robustness performance over DetectorGuard. Moreover, DetectorGuard is an attack-detection defense that has troublesome false alerts in the clean setting while we design \framework as an alert-free defense. Finally, DetectorGuard cannot protect the class labels due to its design limitations while \framework achieves high certified robustness for bounding box labels in addition to securing the class label (recall Table~\ref{tab-main-eval} in Section~\ref{sec-eval-main}).

\textbf{Masking-based defenses against adversarial patches.} We note that the idea of masking out adversarial patches has been studied in existing defenses for image classification~\cite{hayes2018visible,mu2021defending,mccoyd2020minority,xiang2021patchguard,xiang2021patchguard2,xiang2021patchcleanser} and object detection~\cite{chiang2021adversarial}. However, they either lack certifiable robustness~\cite{hayes2018visible,mu2021defending,chiang2021adversarial}, or require additional information on patch shapes/sizes~\cite{mccoyd2020minority,xiang2021patchguard,xiang2021patchguard2,xiang2021patchcleanser}. Our \framework proposes the first patch-agnostic masking strategy that achieves certifiable robustness.

\subsection{Other Adversarial Example Attacks and Defenses}
Adversarial example attacks and defenses for computer vision tasks with different threat models have been extensively studied~\cite{barreno2010security,szegedy2013intriguing,biggio2013evasion,goodfellow2014explaining,papernot2016limitations,meng2017magnet,xu2017feature,carlini2017towards,madry2017towards,papernot2016distillation,raghunathan2018certified,wong2017provable,lecuyer2019certified,cohen2019certified,salman2019provably,gowal2018effectiveness,mirman2018differentiable,xiang2019generating}. We focus on the adversarial patch attacks because they are physically-realizable and impose an urgent threat to real-world computer vision systems.

\section{Conclusion}
In this paper, we propose \framework as a certifiably robust defense against patch hiding attacks. \framework is a two-step defense framework: we first perform patch-agnostic masking to neutralize the adversarial effect (without knowing patch size, shape, and location); we next perform secure box pruning for a precise and robust prediction output. We can certify that for certain objects, \framework can always detect the object no matter what an adaptive attacker within the threat model does. Our certifiable evaluation demonstrates a significant improvement ($\sim$10\%-40\% absolute and $\sim$2-6$\times$ relative) in certified robustness over the only prior work DetectorGuard~\cite{xiang2021detectorguard}, as well as the high clean performance of \framework ($\sim$1\% drops compared with vanilla undefended models). 

\section*{Acknowledgements}
We are grateful to the anonymous shepherd and reviewers from IEEE S\&P 2023 program committee for their insightful comments and helpful suggestions. We would like to thank Shawn Shan, Sihui Dai, and Vikash Sehwag for providing early feedback on the manuscript draft. This work was supported in part by the National Science Foundation under grant CNS-2131859, the ARL’s Army Artificial Intelligence Innovation Institute (A2I2), Schmidt DataX award, and Princeton E-ffiliates Award.

\bibliographystyle{IEEEtran}
\bibliography{reference.bib}
\appendices

\section{Additional Details of Implementation and Evaluation}
\label{apx-setup}
In this section, we discuss the setup of confidence thresholds $\gammam,\gammab$, the patch size for robustness evaluation, and the DBSCAN clustering algorithm used in box unionization. We release our source code at \url{https://github.com/inspire-group/ObjectSeeker} for reproducibility.

\subsection{Average Precision and Confidence Thresholds}\label{apx-setup-ap}
\textbf{AP in conventional object detection research.} As discussed in Section~\ref{sec-eval-setup}, different confidence thresholds would give different precision and recall for an object detector; thus, it is not representative enough to simply evaluate model performance at one fixed confidence threshold. To overcome this challenge, conventional object detection research uses Average Precision (AP) as the main evaluation metric: we vary the confidence threshold from zero to one, record the precision-recall pairs at different thresholds, and finally calculate AP as the averaged precision value across different recall values. AP can also be considered as the Area under Curve (AUC) of the precision-recall curve and thus provides a global view of the model performance. Furthermore, note that we consider a detected box a true-positive only when its IoU with the ground-truth box exceeds a certain threshold, and TP is used for calculating precision and recall. In our evaluation, we set this IoU threshold to 0.5 and report $\text{AP}_{0.5}$.

\textbf{Confidence thresholds and AP calculation in \framework.} As presented in Algorithm~\ref{alg-inference}, we only consider boxes whose confidence values exceed certain thresholds; we have two separate confidence thresholds $\gammam,\gammab$ for masked boxes and base boxes, respectively. Here, we provide additional implementation details and insights of this design.

In our implementation of \framework, we set the masked box threshold $\gammam$ as a function of the base box threshold $\gammab$. Specifically, we have $\gammam = \max(\alpha,\gammab + (1-\gammab)\cdot \beta ), \alpha,\beta\in[0,1]$. First, this ensures that low-quality and low-confidence (smaller than $\alpha$) masked boxes will be discarded. Second, we will use a higher $\gammam$ for a higher $\gammab$ to ensure high precision of \framework. 
To calculate AP for \framework, we vary the base confidence threshold $\gammab$ from 0 to 1 with a step of 0.01, change the $\gammam$ correspondingly, and record the precision and recall values of \framework. Then we calculate the averaged precision at different recall values in the conventional way.

We further note that different object detectors (e.g., YOLO vs. Faster RCNN) have different confidence value distributions. For example, the confidence values of most boxes predicted by YOLO could be lower than 0.95 while most Faster RCNN boxes could be more confident than 0.95.
As a result, we need to use the clean recall (instead of the confidence threshold) as a universal normalizer when evaluating averaged precision and certified recall. Moreover, we need to adjust $\alpha,\beta$ for different object detectors and different datasets when deploying the \framework defense. In our experiments, we set $\alpha=0.8,\beta=0.8$ for YOLOR on VOC, $\alpha = 0.7,\beta=0.5$ for YOLOR on COCO, $\alpha = 0.8,\beta=0.5$ for Swin on VOC, and $\alpha=0.9,\beta=0.8$ for Swin on COCO.

Finally, we note that, since clean recall is the universal normalizer of confidence threshold $\gammab$, we analyzed \framework performance at ``normalized" clean recalls instead of ``raw" $\gammab$ in Section~\ref{sec-eval}.

\begin{figure}[t]
    \centering
    \includegraphics[width=0.66\linewidth]{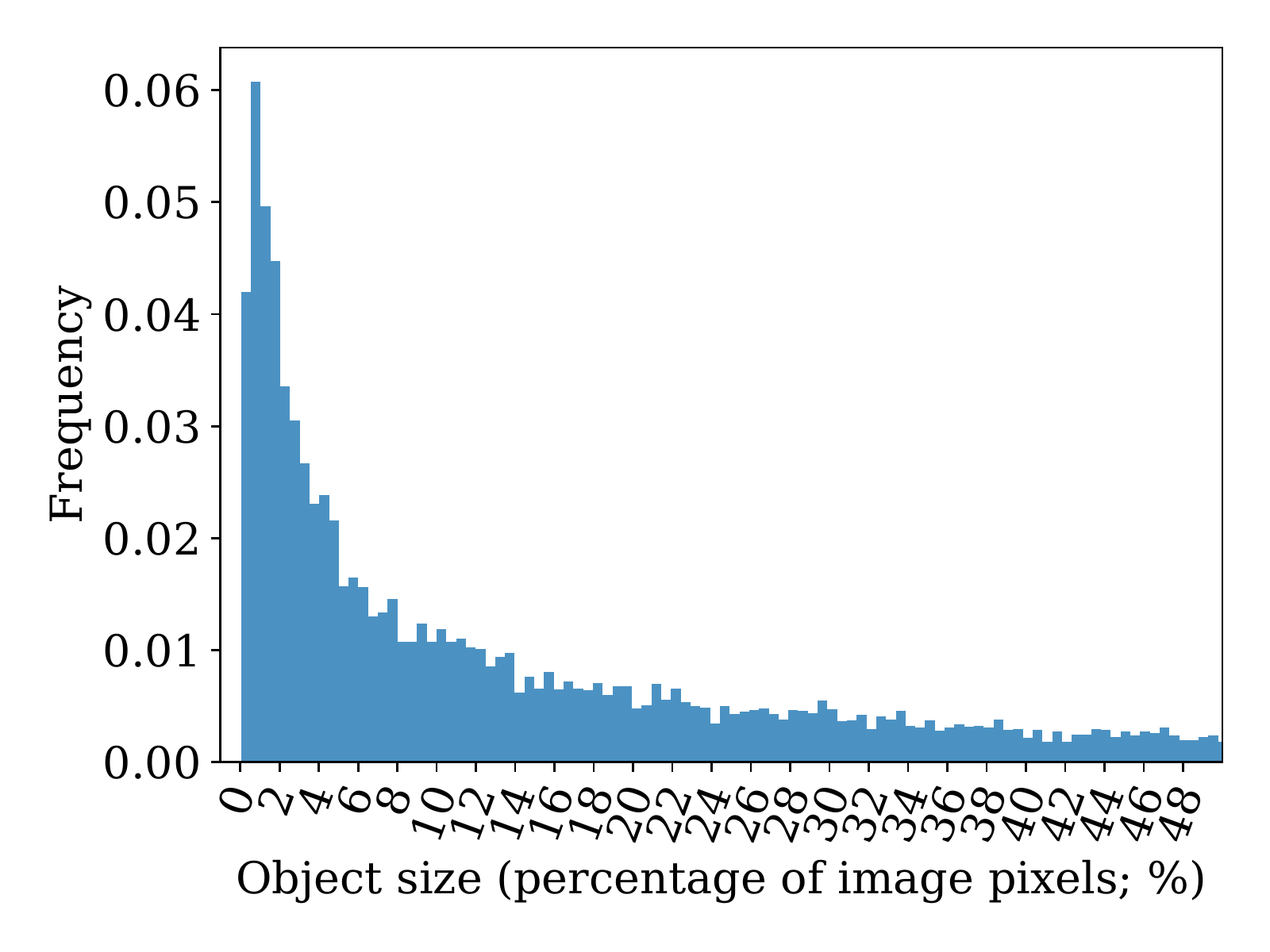}
    \vspace{-1em}
    \caption{Histogram of object sizes of VOC (in percentage of image pixels)}
    \label{fig-size-voc}
\end{figure}

\begin{figure}[t]
    \centering
    \includegraphics[width=0.66\linewidth]{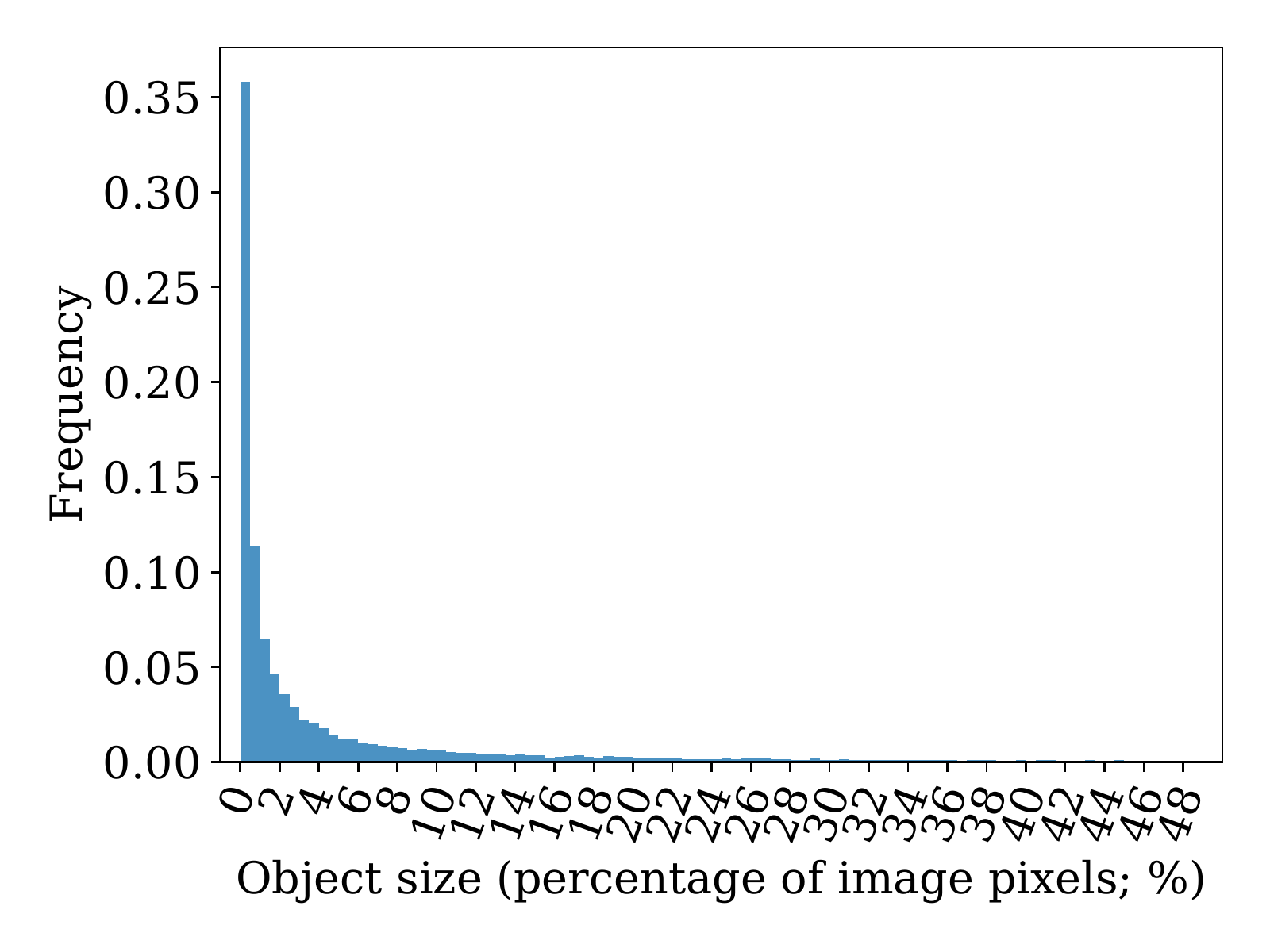}
    \vspace{-1em}
    \caption{Histogram of object sizes of COCO (in percentage of image pixels)}
    \label{fig-size-coco}
\end{figure}

\subsection{Patch Sizes for Evaluation} \label{apx-setup-patch}

\textbf{Patch sizes.} We generally follow DetectorGuard~\cite{xiang2021detectorguard} to choose patch sizes. DetectorGuard~\cite{xiang2021detectorguard} considers a 32$\times$32 square patch on 416$\times$416 images, which consists of 0.6\% image pixels. In this paper, we choose a larger patch consisting of 1\% image pixels to account for a stronger attacker. We note that DPatch~\cite{liu2019dpatch} demonstrated that even a small patch (less than 0.5\% pixels) at the image corner can achieve an effective attack. Therefore, high certifiable robustness against 1\%-pixel patch anywhere on the image is a non-trivial achievement.

Furthermore, in Figure~\ref{fig-size-voc} and Figure~\ref{fig-size-coco}, we further analyze the object sizes on test images of the VOC and COCO datasets. We find that 9.7\% of VOC objects and 47.2\% COCO of objects are smaller than 1\% pixels. Note that we allow the patch to be anywhere on the image; as a result, an over-patch can simply block these small objects and make the defense impossible. This further explains why the certified recall for over-patch is relatively low.

\subsection{DBSCAN Details} \label{apx-setup-dbscan}
As discussed in Section~\ref{sec-defense-merge}, we use DBSCAN~\cite{ester1996density} to cluster unfiltered masked boxes for box unionizing. DBSCAN is an efficient distance-based clustering algorithm with two parameters $\epsilon,n$. It labels a point as a core point when there are at least $n$ other points whose distances with it are smaller than $\epsilon$. All core points and their corresponding neighbors are considered as clusters, and the remaining points are labeled as outliers. In our implementation, we calculate the ``distance" between two boxes $\bfbm,\bfbb$ as $1-\max(\textsc{IoA}(\bfbm,\bfbb),\textsc{IoA}(\bfbb,\bfbm))$. We take the maximum because $\textsc{IoA}(\cdot,\cdot)$ is an asymmetric operator; we add a negative sign to change the similarity score to distance score. We set $\epsilon=0.1,n=1$ (thus no outliers). We do not report defense performance with different DBSCAN parameters because it only has a small effect on the clean performance ($\sim$0.2\%) and no effect on the robustness guarantee.

\section{IoU as Similarity Score Function $\mathbb{S}$}\label{apx-iou}

\begin{algorithm}[t]
    \centering
    \caption{\framework inference algorithm with IoU}\label{alg-inference-iou}
    \begin{algorithmic}[1]
    \renewcommand{\algorithmicrequire}{\textbf{Input:}}
    \renewcommand{\algorithmicensure}{\textbf{Output:}}
    \Require Image $\mathbf{x}$, image size $(W,H)$, base object detector $\mathbb{F}$, the number of lines $k$ (for masking), masked boxes confidence threshold $\gammam$, base boxed confidence threshold $\gammab$, box pruning threshold $\tau_{\text{IoU}},\tau_{\text{IoA}}$
    \Ensure  Robust detection results $\cB_{\text{robust}}$ 
    \Procedure{\framework}{}
    \State $\cM\gets\textsc{MaskSet}(k,W,H)$
    \State $\cB_{\text{mask}}^{\text{no}},\cB_{\text{mask}}^{\text{o}} \gets \varnothing,\varnothing$\label{ln-inferenceiou-init}
    \For{$\bfm \in \cM$}
    \State $\cB^{\text{no}}_\bfm,\cB^{\text{o}}_\bfm \gets \textsc{Split}(\mathbb{F}(\bfx \odot \bfm,\gammam),\bfm)$\label{ln-inferenceiou-mask-prediction}
    \State $\cB_{\text{mask}}^{\text{no}},\cB_{\text{mask}}^{\text{o}}\gets\cB_{\text{mask}}^{\text{no}}\cup\cB^{\text{no}}_\bfm,\cB_{\text{mask}}^{\text{o}}\cup\cB^{\text{o}}_\bfm$\label{ln-inferenceiou-gather-box}
    \EndFor
    \State $\cB_{\text{base}} \gets \mathbb{F}(\bfx,\gammab)$\label{ln-inferenceiou-base}
    \State ${\cB}^{\text{no-filtered}}_{\text{mask}}\gets\{\bfbm \in \cB^{\text{no}}_{\text{mask}} |\nexists\ \bfbb\in\cB_{\text{base}}: \textsc{IoU}(\bfbm,\bfbb)>\tau_{\text{IoU}}\}$\label{ln-inferenceiou-no-filter}
    \State ${\cB}^{\text{no-pruned}}_{\text{mask}}\gets\textsc{NMS}({\cB}^{\text{no-filtered}}_{\text{mask}},\tau_{\text{IoU}})$\label{ln-inferenceiou-no-union}
    \State ${\cB}^{\text{o-filtered}}_{\text{mask}}\gets\{\bfbm \in \cB^{\text{o}}_{\text{mask}} | \nexists\ \bfbb\in\cB_{\text{base}}: \textsc{IoA}(\bfbm,\bfbb)>\tau_{\text{IoA}}\}$ \label{ln-inferenceiou-o-filter}
    \State ${\cB}^{\text{o-pruned}}_{\text{mask}}\gets\{ \bigcup_{\hat{\bfb}\in\hat{\cB}}\hat{\bfb}  \ |\ \hat{\cB} \in \textsc{DBSCAN}({\cB}^{\text{o-filtered}}_{\text{mask}}) \}$\label{ln-inferenceiou-o-union}
   \State $\cB_{\text{robust}}\gets\cB_{\text{base}}\cup{\cB}^{\text{no-pruned}}_{\text{mask}}\cup{\cB}^{\text{o-pruned}}_{\text{mask}}$\label{ln-inferenceiou-output}
    \State \Return $\cB_{\text{robust}}$ 
    \EndProcedure
    
\end{algorithmic}
\end{algorithm}

In Section~\ref{sec-defense}, we discussed the use of box similar score $\mathbb{S}$ and noted that different $\mathbb{S}$ can provide different robustness notions. In this section, we discuss a variant of \framework that uses IoU as the similarity score $\mathbb{S}$ and can certify IoU robustness for the \textit{far-patch model}. 

The core idea of this \framework variant is to split boxes detected on the masked image (masked boxes) into two groups and apply different pruning strategies (with different $\mathbb{S}$) to two groups. The first group contains the boxes that are far away from and do not overlap with the masks (termed as \textit{non-overlapping boxes}); the second group considers the boxes that are close to or overlap with the masks (termed as \textit{overlapping boxes}). 

When the mask is far away from the given object, the detection of this object is barely affected. As a result, detected non-overlapping masked boxes are almost the same and thus have high pair-wise IoU with each other. We can then easily identify and prune redundant boxes by looking at the IoU. In Lemma~\ref{lemma-iou} and Lemma~\ref{lemma2-iou}, we can further prove that the IoU-based pruning strategy can provide a certifiable guarantee for IoU certification. 
After pruning the non-overlapping boxes, we use a similar IoA pruning strategy discussed in the main body to prune overlapping boxes. These boxes can be useful for certifying IoA robustness. We note that we cannot use IoU to prune overlapping boxes because a good number of overlapping boxes only detect part of the object. Therefore, their pair-wise IoU can be too low for effective pruning.

\textbf{Pseudocode.} We present the algorithm pseudocode in Algorithm~\ref{alg-inference-iou}. We first generate the mask set $\cM$ and initialize two empty sets $\cB_{\text{mask}}^{\text{no}},\cB_{\text{mask}}^{\text{o}}$ for holding non-overlapping and overlapping masked boxes (Line~\ref{ln-inferenceiou-init}). Next, we iterate over every mask $\bfm\in\cM$. For each mask, we perform object detection on the masked image $\mathbb{F}(\bfx\odot\bfm,\gammam)$ and split the detected boxes into non-overlapping boxes $\cB_{\bfm}^{\text{no}}$ and overlapping boxes $\cB_{\bfm}^{\text{o}}$ (Line~\ref{ln-inferenceiou-mask-prediction}). We then add $\cB_{\bfm}^{\text{no}}$ and $\cB_{\bfm}^{\text{o}}$ to $\cB_{\text{mask}}^{\text{no}}$ and $\cB_{\text{mask}}^{\text{o}}$, respectively (Line~\ref{ln-inferenceiou-gather-box}).
\begin{table*}[t]
    \centering
    \caption{Defense performance of \framework with IoU robustness (Algorithm~\ref{alg-inference-iou})}  \label{tab-iou-eval}
    \vspace{-1em}
{ \small
\begin{tabular}{l|c|c|c|c|c}
    \toprule
     \multicolumn{2}{c|}{Dataset} & \multicolumn{2}{c|}{PASCAL VOC}&\multicolumn{2}{c}{MS COCO}\\
    \midrule
 \multirow{2}{*}{\diagbox{Defense}{Metric}}&Certify &\multirow{2}{*}{$\text{AP}_{50}$} & {IoU-CertR@0.8 ($T=0.5$)}   &\multirow{2}{*}{$\text{AP}_{50}$} & {IoU-CertR@0.6 ($T=0.5$)}  \\
   &class?&&{far-patch} & &{far-patch}  \\
    \midrule
 
 \framework-YOLOR&\multirow{2}{*}{\cmark}&92.8\%&48.3\%&69.2\%&37.8\%\\
 \framework-Swin &&92.0\%&34.9\%&68.6\%&28.6\%\\
      \bottomrule
    \end{tabular}}
  
\end{table*}
After gathering all masked boxes, we further get base boxes $\cB_{\text{base}}$ prediction on the original image (Line~\ref{ln-inferenceiou-base}) and perform secure box pruning on $\cB_{\text{mask}}^{\text{no}}$ (Line~\ref{ln-inferenceiou-no-filter}-\ref{ln-inferenceiou-no-union}) and $\cB_{\text{mask}}^{\text{o}}$ (Line~\ref{ln-inferenceiou-o-filter}-\ref{ln-inferenceiou-o-union}) separately. For non-overlapping boxes $\cB_{\text{mask}}^{\text{no}}$, we first perform box filtering with IoU as the box similarity score $\mathbb{S}$ (Line~\ref{ln-inferenceiou-no-filter}). Next, we perform non-maximum suppression for box clustering and box representing (Line~\ref{ln-inferenceiou-no-union}). The non-maximum suppression works as follows. First, it picks the box $\bfb_0$ with the highest confidence value, finds all boxes $\bfb_1$ satisfying $\textsc{IoU}(\bfb_0,\bfb_1)>\tau_{\text{IoU}}$, forming a cluster with these boxes. Second, it repeats the first step on remaining unclustered boxes until all boxes are clustered. Third, for each box cluster, it takes the box with the highest confidence as the representative. For overlapping boxes $\cB_{\text{mask}}^{\text{o}}$, we perform box filtering (Line~\ref{ln-inferenceiou-o-filter}) and box unionizing (Line~\ref{ln-inferenceiou-o-union}) with IoA, similar to what we discussed in Section~\ref{sec-defense-merge}. 

Finally, we combine base boxes  $\cB_{\text{base}}$, pruned non-overlapping boxes ${\cB}^{\text{no-pruned}}_{\text{mask}}$ and overlapping boxes ${\cB}^{\text{o-pruned}}_{\text{mask}}$ together as the final prediction output $\cB_{\text{robust}}$ (Line~\ref{ln-inferenceiou-output}).

\textbf{Certification.} The certification with IoU robustness is similar to what we discussed for IoA in Section~\ref{sec-defense-certification}. From Line~\ref{ln-inferenceiou-no-filter}-\ref{ln-inferenceiou-no-union} of Algorithm~\ref{alg-inference-iou}, a masked box $\bfbm$ will only be removed when there is another box $\bfbb$ that has a large IoU with $\bfbm$. Therefore, given a masked box $\bfbm$ and a ground-truth box $\bfbgt$, we only need to prove the lower bound of $\textsc{IoU}(\bfbgt,\bfbb)$ for any box $\bfbb$ that can remove $\bfbm$ (i.e., $\textsc{IoU}(\bfbm,\bfbb)>\tau$).

\begin{lemma}\label{lemma-iou}
For any ground-truth box $\bfbgt$, detected masked box $\bfbm$, and box $\bfbb$ with $\textsc{IoU}(\bfbm,\bfbb)>\tau$, we have:

$$
\resizebox{\hsize}{!}{$\textsc{IoU}(\bfbgt,\bfbb) > \frac{\tau B+(\tau-1)\cdot C}{A+B+C},\ \forall\ \bfbb \st \textsc{IoU}(\bfbm,\bfbb)>\tau$}
$$
where $A=|\bfbgt\setminus\bfbm|,B=|\bfbgt\cap\bfbm|,C=|\bfbm\setminus\bfbgt|$.
\end{lemma}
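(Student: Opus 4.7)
The plan is to bound the numerator $|\bfbgt \cap \bfbb|$ from below and the denominator $|\bfbgt \cup \bfbb|$ from above in terms of the auxiliary quantities $I := |\bfbm \cap \bfbb|$ and $U := |\bfbm \cup \bfbb|$, then use the hypothesis $I/U > \tau$ (equivalently $U < I/\tau$) to eliminate $U$ and replace $I$ with its smallest admissible value. The structural resemblance to Lemma~\ref{lemma-ioa} suggests reusing its numerator bound and adding a matching upper bound on the union.

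First, I would derive two set-theoretic inequalities. For the numerator, the same chain as in Lemma~\ref{lemma-ioa} gives
$|\bfbgt \cap \bfbb| \geq |\bfbgt \cap \bfbb \cap \bfbm| = |\bfbm \cap \bfbb| - |(\bfbm \cap \bfbb) \setminus \bfbgt| \geq I - |\bfbm \setminus \bfbgt| = I - C.$
For the denominator, I would use the disjoint decomposition $\bfbgt \cup \bfbm \cup \bfbb = (\bfbgt \setminus \bfbm) \sqcup (\bfbm \cup \bfbb)$ to get
$|\bfbgt \cup \bfbb| \leq |\bfbgt \cup \bfbm \cup \bfbb| = A + U.$
Combining the two yields $\textsc{IoU}(\bfbgt, \bfbb) \geq (I - C)/(A + U).$

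Next, I would eliminate $U$ via the strict inequality $U < I/\tau$, which gives $(I-C)/(A+U) > (I-C)/(A+I/\tau)$ whenever $I > C$. I would then show the right-hand side is monotonically increasing in $I$ by computing
$\tfrac{d}{dI}\tfrac{I-C}{A+I/\tau} = \tfrac{A + C/\tau}{(A+I/\tau)^2} > 0.$
Since $U \geq |\bfbm| = B + C$ forces the strict bound $I > \tau(B+C)$, substituting this smallest admissible value gives
$\textsc{IoU}(\bfbgt, \bfbb) > \frac{\tau(B+C) - C}{A + B + C} = \frac{\tau B + (\tau-1)C}{A+B+C},$
as claimed.

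The main obstacle is handling the edge case $I \leq C$, where the intermediate bound $(I-C)/(A+U)$ becomes non-positive and the strict-inequality manipulation above breaks down. Fortunately, in this regime the constraint $I > \tau(B+C)$ combined with $I \leq C$ forces $\tau(B+C) < C$, i.e.\ $\tau B + (\tau-1) C < 0$, so the claimed lower bound is itself negative and the trivial inequality $\textsc{IoU}(\bfbgt, \bfbb) \geq 0$ suffices to close the case. Aside from this case split, the rest of the argument is a clean chain of set-algebraic manipulations plus a single monotonicity check.
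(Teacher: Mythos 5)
Your proof is correct, but it takes a genuinely different route from the paper's. The paper decomposes $\bfbb$ into four disjoint parts relative to $\bfbgt$ and $\bfbm$, argues which two parts the worst-case adversary saturates, and then solves a two-variable linear program in $\alpha=|\bfbb\cap(\bfbgt\cap\bfbm)|$ and $\beta=|\bfbb\setminus(\bfbgt\cup\bfbm)|$ to obtain the optimum $t^*=\max\bigl(0,\frac{\tau B+(\tau-1)C}{A+B+C}\bigr)$; a side benefit is that the optimizer is exhibited explicitly, showing the bound is tight. You instead bound numerator and denominator separately ($|\bfbgt\cap\bfbb|\geq I-C$, reusing the Lemma~\ref{lemma-ioa} chain, and $|\bfbgt\cup\bfbb|\leq A+U$), eliminate $U$ via $U<I/\tau$, and finish with a one-variable monotonicity argument plus the constraint $I>\tau(B+C)$; this is more elementary (no optimization setup) and makes the case split $\tau B+(\tau-1)C<0$ explicit, which the paper folds into the $\max(0,\cdot)$. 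One small inaccuracy: the decomposition $\bfbgt\cup\bfbm\cup\bfbb=(\bfbgt\setminus\bfbm)\cup(\bfbm\cup\bfbb)$ is not disjoint in general, since $\bfbb$ may intersect $\bfbgt\setminus\bfbm$, so the claimed equality $|\bfbgt\cup\bfbm\cup\bfbb|=A+U$ should be the subadditivity inequality $|\bfbgt\cup\bfbm\cup\bfbb|\leq A+U$; this is the direction you actually need, so the argument survives unchanged, but the justification should be stated as an inequality rather than a disjoint union.
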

\begin{proof}
In this proof, we are going to find the worst-case $\bfbb$ that satisfies the condition of $\textsc{IoU}(\bfbm,\bfbb)>\tau$ but gives the lowest $\textsc{IoU}(\bfbgt,\bfbb)$.

First, let us divide the box $\bfbb$ into four disjoint parts using set operations.
\begin{align}
    \bfbb=&(\bfbb\cap(\bfbgt\cup\bfbm))\cup(\bfbb\setminus(\bfbgt\cup\bfbm))\nonumber\\
    =&(\bfbb\cap(\bfbgt\cap\bfbm))\cup(\bfbb\cap(\bfbm\setminus\bfbgt))\cup(\bfbb\cap(\bfbgt\setminus\bfbm))\nonumber\\
    &\cup(\bfbb\setminus(\bfbgt\cup\bfbm))\nonumber
\end{align}
To reason the worst-case $\textsc{IoU}(\bfbgt,\bfbb)$, we can set the second part $|\bfbb\cap(\bfbm\setminus\bfbgt)|$ to $|(\bfbm\setminus\bfbgt)|=C$ because increasing its area increases $\textsc{IoU}(\bfbm,\bfbb)$ but not $\textsc{IoU}(\bfbgt,\bfbb)$. We set the third $|\bfbb\cap(\bfbgt\setminus\bfbm)|=0$ because decreasing its area decreases $\textsc{IoU}(\bfbgt,\bfbb)$ but not $\textsc{IoU}(\bfbm,\bfbb)$. 

Next, we set the remaining two parts as $|\bfbb\cap(\bfbgt\cap\bfbm)|=\alpha,|\bfbb\setminus(\bfbgt\cup\bfbm)|=\beta$. We can then write our constraint sets as
\begin{align}
    \textsc{IoU}(\bfbm,\bfbb) = \frac{|\bfbm\cap\bfbb|}{|\bfbm\cup\bfbb|}=\frac{C+\alpha}{B+C+\beta}\geq\tau
    \\\alpha\geq0,\beta\geq0
\end{align}
We can further write the target as:
\begin{align}
   t:= \min_{(\alpha,\beta)}\textsc{IoU}(\bfbgt,\bfbb)=\frac{\alpha}{A+B+C+\beta}
\end{align}
Now it is a linear programming problem for two variables $\alpha,\beta$. We can solve it and get the optimal solution as:
\begin{align}
    t^*=\max(0,\frac{\tau B+(\tau-1)\cdot C}{A+B+C}):=\mathbb{L}_{\textsc{IoU}}(\bfbgt,\bfb_\bfm,\tau)\\
    A=|\bfbgt\setminus\bfbm|,B=|\bfbgt\cap\bfbm|,C=|\bfbm\setminus\bfbgt|\nonumber
    \end{align}
when $\beta=0,\alpha =\max(0,\tau B+(\tau-1)\cdot C)$.
\end{proof}

We use $\mathbb{L}_{\textsc{IoU}}(\bfbgt,\bfb_\bfm,\tau)$ to denote this new bound for IoU. Next, we present the following lemma to demonstrate that $\mathbb{L}_{\textsc{IoU}}>T$ is the sufficient condition of being a pruning-safe masked box.

\begin{lemma}\label{lemma2-iou}
Given a ground-truth object $\bfbgt$ and the IoU pruning threshold $\tau$, if there is one masked box $\bfbm\in\cB^{\text{no}}_{\text{mask}}$ satisfying that $\mathbb{L}_{\textsc{IoU}}(\bfbgt,\bfbm,\tau)>T$, this box is a pruning-safe masked box for object $\bfbgt$ in \textsc{IoU-BoxPrune}, i.e., $\exists\ \bfb^\prime\in\cB_{\text{robust}}\st\textsc{IoU}(\bfbgt,\bfb^\prime)>T$.
\end{lemma}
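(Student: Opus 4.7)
The plan is to mirror the two-stage case analysis used for Lemma~\ref{lemma-ioa-prune}, tracking $\bfbm$ through the base-box filtering step (Line~\ref{ln-inferenceiou-no-filter} of Algorithm~\ref{alg-inference-iou}) and then through NMS-based unionizing (Line~\ref{ln-inferenceiou-no-union}), and exhibiting in each branch a concrete witness box $\bfb^\prime\in\cB_{\text{robust}}$ with $\textsc{IoU}(\bfbgt,\bfb^\prime)>T$. The new ingredient relative to the IoA proof is Lemma~\ref{lemma-iou}, which converts an IoU-$\tau$ relation between some box $\bfbb$ and our fixed $\bfbm$ into the lower bound $\mathbb{L}_{\textsc{IoU}}(\bfbgt,\bfbm,\tau)$ on $\textsc{IoU}(\bfbgt,\bfbb)$. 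Since pruning-safety must hold against every adversarial choice of the remaining inputs, each branch must be discharged uniformly in the other boxes.

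First, I would case split on whether $\bfbm$ survives the base-box filtering. If it is removed, there is some $\bfbb\in\cB_{\text{base}}$ with $\textsc{IoU}(\bfbm,\bfbb)>\tau$, so Lemma~\ref{lemma-iou} immediately yields $\textsc{IoU}(\bfbgt,\bfbb)>\mathbb{L}_{\textsc{IoU}}(\bfbgt,\bfbm,\tau)>T$, and $\bfbb\in\cB_{\text{base}}\subseteq\cB_{\text{robust}}$ by the final union in Line~\ref{ln-inferenceiou-output}, closing this branch with $\bfb^\prime=\bfbb$.

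Second, if $\bfbm$ survives into $\cB_{\text{mask}}^{\text{no-filtered}}$, I would split further on the NMS outcome. Either $\bfbm$ is selected as the representative of its own cluster, in which case $\bfbm\in\cB_{\text{mask}}^{\text{no-pruned}}\subseteq\cB_{\text{robust}}$ and it suffices to verify the trivial arithmetic inequality $\textsc{IoU}(\bfbgt,\bfbm)\geq\mathbb{L}_{\textsc{IoU}}(\bfbgt,\bfbm,\tau)$; writing $A=|\bfbgt\setminus\bfbm|$, $B=|\bfbgt\cap\bfbm|$, $C=|\bfbm\setminus\bfbgt|$ as in Lemma~\ref{lemma-iou}, this reduces to $B\geq \tau B+(\tau-1)C$, i.e., $(1-\tau)(B+C)\geq 0$, which holds because $\tau\leq 1$. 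Otherwise a strictly higher-confidence box $\bfbb^\prime$ absorbs $\bfbm$ during NMS, in which case the NMS clustering rule guarantees $\textsc{IoU}(\bfbm,\bfbb^\prime)>\tau$, and Lemma~\ref{lemma-iou} again gives $\textsc{IoU}(\bfbgt,\bfbb^\prime)>\mathbb{L}_{\textsc{IoU}}(\bfbgt,\bfbm,\tau)>T$, with $\bfbb^\prime\in\cB_{\text{mask}}^{\text{no-pruned}}\subseteq\cB_{\text{robust}}$.

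The main obstacle I expect is this NMS branch: unlike the IoA setting, where the representative is the mathematical union of the cluster (which can only increase IoA), NMS picks a \emph{different} box as the representative, so $\bfbm$ itself need not appear in the output. The key observation that unblocks the proof is that the NMS clustering threshold and the base-filtering threshold are both $\tau$, so whichever box ends up ``dominating'' $\bfbm$ — a base box during filtering or a peer mask box during NMS — automatically inherits a high IoU with $\bfbgt$ via Lemma~\ref{lemma-iou}. Because this argument is quantified over every adversarial choice of $\cB_{\text{base}}$ and of $\cB_{\text{mask}}^{\text{no}}\setminus\{\bfbm\}$, it establishes that $\bfbm$ is pruning-safe in the sense of Definition~\ref{dfn-pruning-safe}, completing the proof.
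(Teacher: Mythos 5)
Your proposal is correct and follows essentially the same route as the paper's proof: a three-way case analysis (filtered by a base box, absorbed by a peer box during NMS, or surviving as its own representative), with Lemma~\ref{lemma-iou} supplying the bound in the first two cases. Your explicit arithmetic check that $\textsc{IoU}(\bfbgt,\bfbm)\geq\mathbb{L}_{\textsc{IoU}}(\bfbgt,\bfbm,\tau)$ reduces to $(1-\tau)(B+C)\geq 0$ is a welcome addition the paper only asserts.
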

\begin{proof}
The detected masked box $\bfbm\in\cB^{\text{no}}_{\text{mask}}$ will go through box filtering and box unionizing to generate the final output. 
There are three possible cases.
\begin{enumerate}  \setlength\itemsep{0em}
\item $\bfbm$ is filtered in the box filtering process (Line~\ref{ln-inferenceiou-no-filter}). Then there is a base box $\bfbb\in\cB_{\text{base}}$ satisfying $\textsc{IoU}(\bfbm,\bfbb)>\tau$. From Lemma~\ref{lemma-iou}, we know there is a box $\bfb^\prime=\bfbb\in\cB_{\text{base}}\subset\cB_{\text{robust}}$ satisfies $\textsc{IoU}(\bfbgt,\bfb^\prime)> \mathbb{L}_{\textsc{IoU}}(\bfbgt,\bfbm,\tau)>T$. 
\item $\bfbm$ is removed in the box unionizing step (Line~\ref{ln-inferenceiou-no-union}). This implies that there is another masked box $\bfbm^\prime\in\cB^{\text{no-filtered}}_{\text{mask}}$ satisfying $\textsc{IoU}(\bfbm,\bfbm^\prime)>\tau$. From Lemma~\ref{lemma-iou}, we know there is a box $\bfb^\prime=\bfbm^\prime\in\cB^{\text{no-pruned}}_{\text{mask}}\subset\cB_{\text{robust}}$ satisfies $\textsc{IoU}(\bfbgt,\bfb^\prime)> \mathbb{L}_{\textsc{IoU}}(\bfbgt,\bfbm,\tau)>T$. 
\item $\bfbm$ is not removed and becomes a part of $\cB^{\text{no-pruned}}_{\text{mask}}\subset\cB_{\text{robust}}$. Then we know there is a box $\bfb^\prime=\bfbm\in\cB_{\text{robust}}$ such that $\textsc{IoU}(\bfbgt,\bfb^\prime) = |{\bfbgt} \cap \bfbm|/|\bfbgt\cup\bfbm| \geq \mathbb{L}_{\textsc{IoA}}(\bfbgt,\bfbm,\tau)>T$.
\end{enumerate} 
\vspace{-1.6em}
\end{proof}

With Lemma~\ref{lemma-iou} and Lemma~\ref{lemma2-iou}, the certification is straightforward: we only need to replace the certification condition with the new bound $ \mathbb{L}_{\textsc{IoU}}(\bfbgt,\bfb_\bfm,\tau)>T$ in Line~\ref{ln-certification-condition} of Algorithm~\ref{alg-certification}. 

\textbf{Implementation and performance evaluation.} In our implementation, we consider a box is a non-overlapping box if the smallest distance between the masked box and the mask along x and y axes is larger than 5\% of the range of the corresponding axis (i.e., height or width). We use the same setup as in Section~\ref{sec-eval} to evaluate Algorithm~\ref{alg-inference-iou}. We set $\tau_{\text{IoU}}=0.8$ and the IoU certification threshold $T_{\text{IoU}}=0.5$ and report the defense performance in Table~\ref{tab-iou-eval}. As shown in the table, Algorithm~\ref{alg-inference-iou} has similarly high clean AP as Algorithm~\ref{alg-inference} (recall Table~\ref{tab-main-eval}), and more importantly, achieves the first certified IoU robustness against patch hiding attacks.

\section{Taxonomy of Robustness Notions against Hiding Attacks}\label{apx-taxonomy}

In the main body of this paper, we focus on IoA robustness; in Appendix~\ref{apx-iou}, we further presented a \framework variant that has IoU robustness against far-patch attackers. In this section, we aim to provide a taxonomy of different robustness notions against hiding attacks to shed a light on future research. We summarize four major robustness notions in Table~\ref{tab-taxonomy}, which are categorized based on two important robustness factors as discussed next.

\textbf{Factor 1: attack detection vs. robust prediction.} The first factor is the defense format: attack detection versus robust prediction. An attack-detection defense aims to detect an attack: it alerts and abstains from making predictions when it detects an attack. That is, we allow the defense to output a special symbol $\bot$ for cases when it detects an attack. In contrast, a robust-prediction defense does not involve the abstention symbol $\bot$: it has to always mask robust predictions. Clearly, robust-prediction defenses achieve a stronger robustness notion -- a robust-prediction defense can directly reduce to an attack-detection defense that never alerts. We note that DetectorGuard~\cite{xiang2021detectorguard} is an attack-detection defense (Notion I) while \framework aims to build a robust-prediction defense (Notion II and IV).

\begin{table}[t]
    \caption{Taxonomy of robustness notions}
    \label{tab-taxonomy}
    \centering
        \vspace{-1em}
    \resizebox{\linewidth}{!}
    {\begin{tabular}{c|c|c}
    \toprule
         & Attack detection & Robust prediction \\
         \midrule
    \multirow{2}{*}{IoA robustness} & \textbf{Notion I} & \textbf{Notion II} \\
      & DetectorGuard~\cite{xiang2021detectorguard} & \framework (Section~\ref{sec-defense}) \\
    \midrule
    \multirow{2}{*}{IoU robustness} & \textbf{Notion III} & \textbf{Notion IV}\\
    &--&\framework (Appendix~\ref{apx-iou})\\
    \bottomrule
    \end{tabular}}

\end{table}

\textbf{Factor 2: IoA robustness vs. IoU robustness.} The second factor is about the guarantee for the box quality: IoA robustness versus IoU robustness. In the main body of this paper, we consider IoA robustness because it aligns with the defense objective against patch hiding attacks: we aim to detect at least part of the object, i.e., $\textsc{IoA}(\bfbgt,\bfb^\prime)=|\bfbgt\cap\bfb^\prime|/|\bfbgt|>T$.

However, a defense with IoA robustness might end up predicting large bounding boxes that cover the entire image. When this happens, the IoA robustness is satisfied (IoA equals to 1), but the defense output is not ideal: we only know that there are objects of certain classes but do not know the object locations. Therefore, we are motivated to consider the concept of IoU robustness, i.e., $\textsc{IoU}(\bfbgt,\bfb^\prime)=|\bfbgt\cap\bfb^\prime|/|\bfbgt\cup\bfb^\prime|>T$, which adds additional constraints on the sizes of predicted boxes. In Appendix~\ref{apx-iou}, we discuss a variant of \framework that can certify IoU robustness against far-patch attackers.

\textbf{Remark: non-triviality of IoA robustness and high clean performance.} Despite the subtle issues of IoA robustness discussed above, we note that building defense with IoA robustness and high clean performance is non-trivial. Yes, if defenders know that there is going to be an attack, they can trivially output large boxes for IoA robustness. However, in practice, we do not know if the input image is a benign normal image or an adversarially patched image. If defenders always output large boxes, the performance on clean images will be bad: note that we use the conventional IoU to evaluate clean performance (when no attacks happen); IoA is only for robustness evaluation. Moreover, IoA robustness requires the correctness of box class labels. Achieving label correctness is also non-trivial.

\textbf{Remark: the connection between Notion II and Notion I.} As shown in Table~\ref{tab-taxonomy}, both Notion I and Notion II use IoA robustness, and they differ in the defense formats of attack detection versus robust prediction. Here, we can demonstrate the equivalence of two notions in terms of certified robustness.

First, we can build a Notion I defense with a Notion II defense. This is directly implied by the definition: any Notion II defense is a valid Notion I defense that relinquishes the freedom of issuing alerts. 

Second, counterintuitively, we can also build a Notion II defense with a Notion I defense. The strategy is that, if the Notion I defense detects an attack, instead of issuing an alert, it outputs large bounding boxes of all different classes to cover the entire image. Since these large boxes have IoAs of 1 for any object of the same object class, the defense is considered IoA-robust.\footnote{We note that an extremely large box has a small IoU with the ground-truth box; therefore, this strategy of outputting large boxes does not apply to Notion III and Notion IV where we consider IoU robustness.}

\textit{Note 1: empirical advantage of Notion II defenses.} Despite the equivalence in certifiable robustness with Notion I defenses, Notion II defenses still have advantages in terms of empirical performance. For example, when there is a false alert in the clean setting, a Notion I based defense would either alert or output useless large boxes while a Notion II defense can still predict reasonable boxes (e.g. boxes that obtain high IoU with the ground-truth objects).

\textit{Note 2: the necessity of box unionizing in \framework.} Given the reduction strategies between Notion I and Notion II defenses, one might question the necessity and value of the box unionizing module in our \framework design (since it can be replaced with an attack alert module or a large box outputting module with no cost of certifiable robustness). Here, we want to reiterate that \framework is a defense framework and is compatible with different box similarity score functions $\mathbb{S}$ such as IoA and IoU, while the reduction problem discussed above is only for the IoA robustness. When we use IoU as the score function $\mathbb{S}$ and consider IoU robustness, the box unionizing is necessary since we can no longer output large boxes discussed for IoA. Another benefit of having box unionizing is the empirical advantage discussed in the paragraph above.

\textbf{Remark: IoA and IoU robustness in \framework.} As shown in Table~\ref{tab-taxonomy}, \framework is flexible and can be instantiated with either IoA or IoU. We focus on IoA robustness in the main body for the following reasons. First, IoA robustness is non-trivial and interesting to study (recall the first remark in this section); it intuitively aligns with the objective of mitigating hiding attacks. Second, focusing on IoA enables a fair comparison with DetectorGuard~\cite{xiang2021detectorguard}, whose robustness guarantee  is limited to IoA robustness with $T=0$. Third, though we have competitive IoU-CertR numbers with IoA-CertR against \textit{far-patch} (recall Appendix~\ref{apx-iou}), certifiable IoU robustness against over-patch and close-patch is significantly more challenging to achieve. We will need better pruning strategies to instantiate our framework for better IoU robustness in the future.

\textbf{Factor 3: Protect class labels or not.} In addition to two robustness factors presented in Table~\ref{tab-taxonomy}, we can further categorize robustness notions based on whether the defense protects the class label or not. For example, DetectorGuard~\cite{xiang2021detectorguard} is fundamentally limited in its design not to be able to certify class labels. In contrast, \framework is flexible for the class label certification.

\begin{figure*}
\centering
\begin{minipage}[b]{0.32\linewidth}
\includegraphics[width=\linewidth]{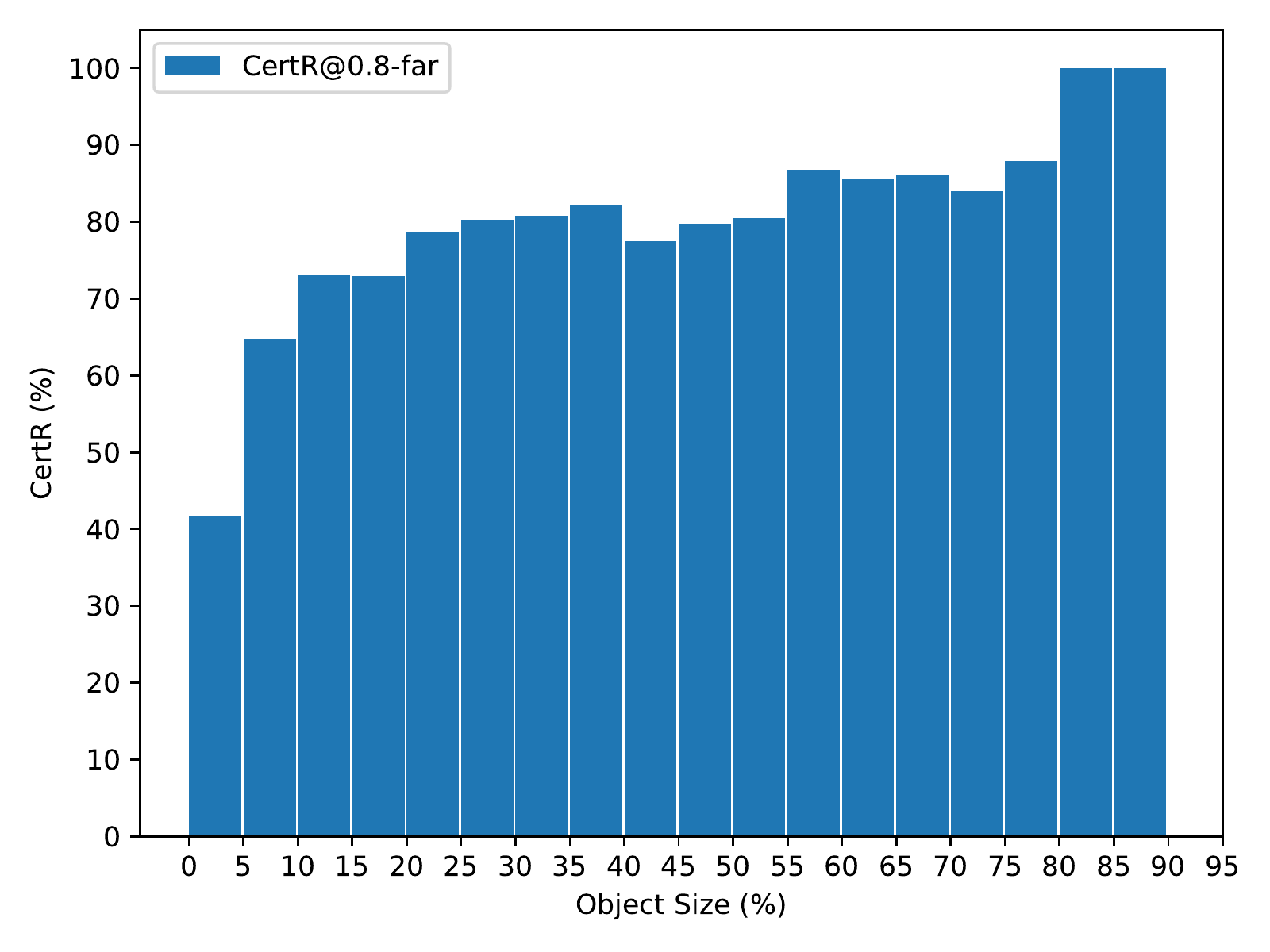}
\vspace{-2em}
\end{minipage}%
\quad
\begin{minipage}[b]{0.32\linewidth}
\includegraphics[width=\linewidth]{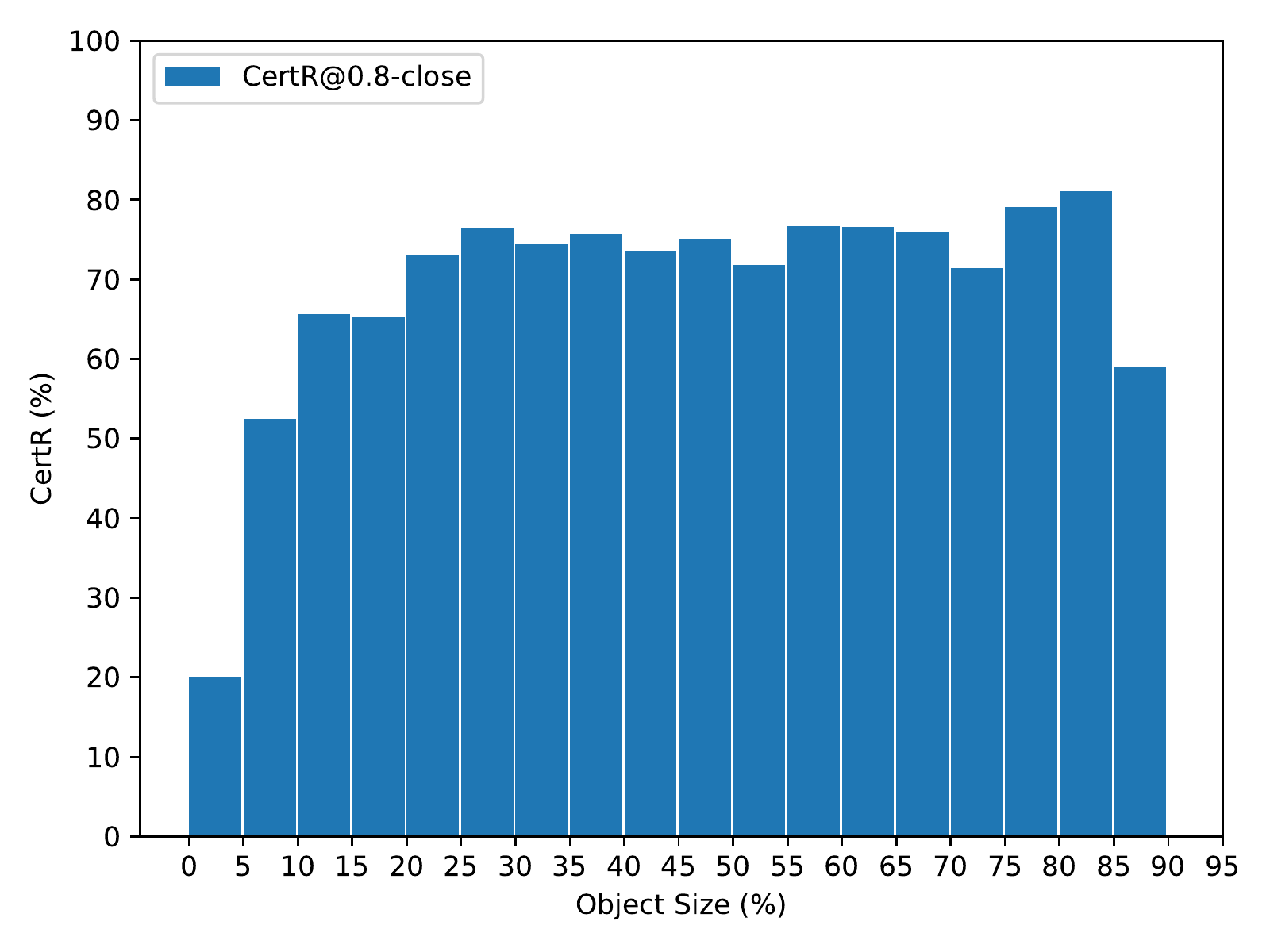}
\vspace{-2em}
\end{minipage}%
\quad
\begin{minipage}[b]{0.32\linewidth}
\includegraphics[width=\linewidth]{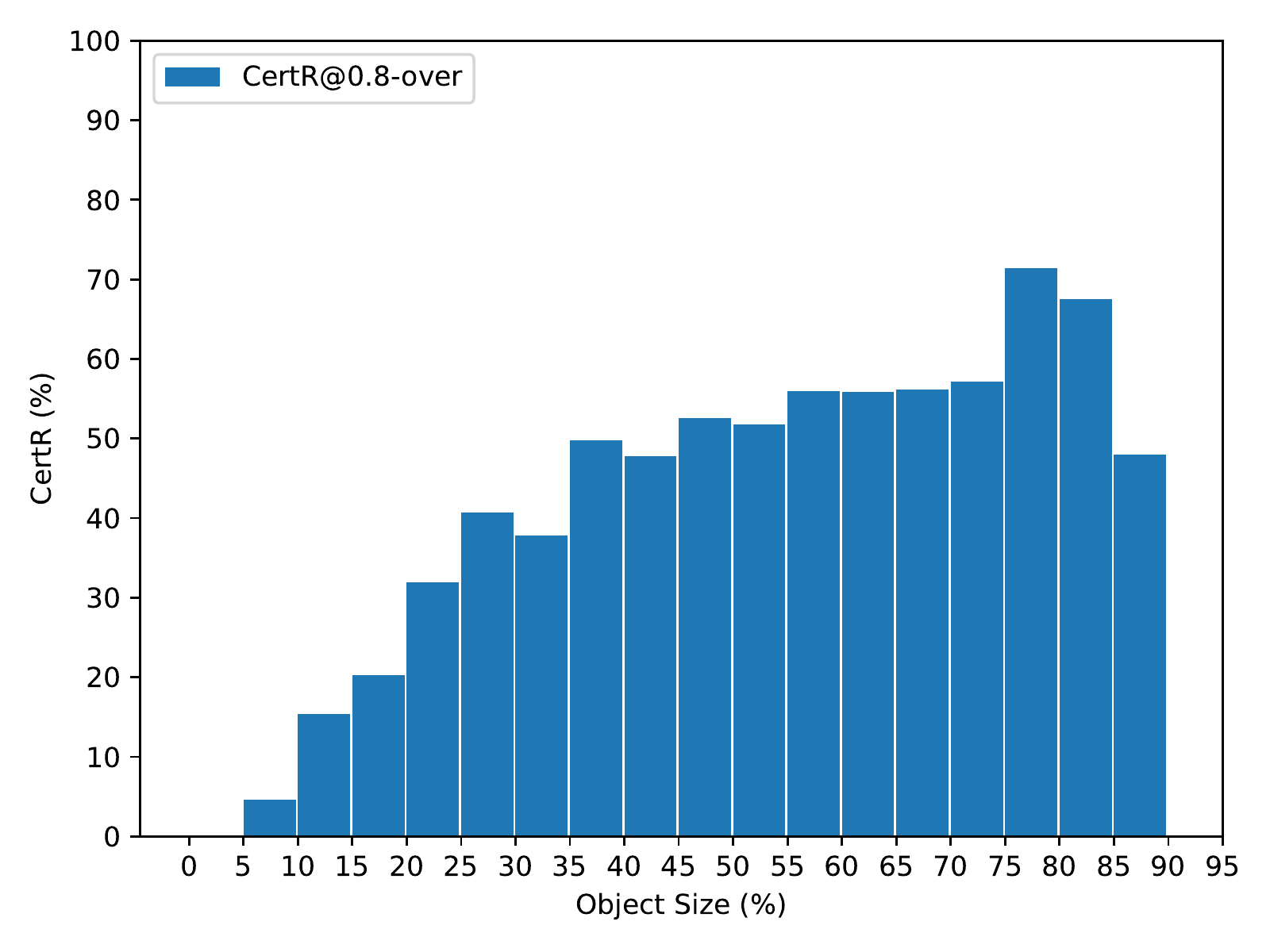}
\vspace{-2em}
\end{minipage}%
\caption{\framework robustness for VOC objects of different sizes (left to right: far-patch, close-patch, over-patch)}
\label{fig-obj-size-voc}
\end{figure*}

\begin{figure*}
\centering
\begin{minipage}[b]{0.32\linewidth}
\includegraphics[width=\linewidth]{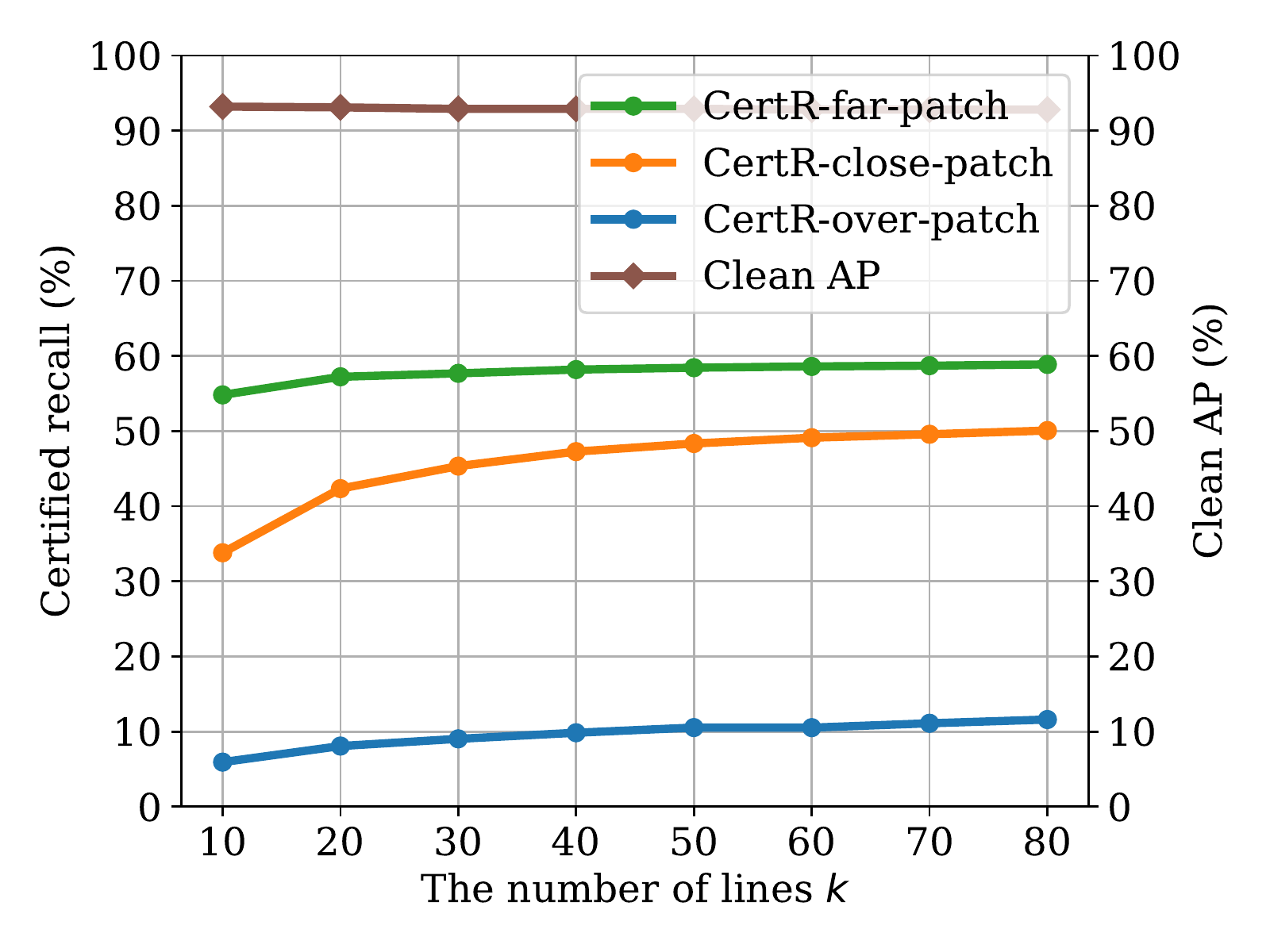}
\vspace{-2em}
\end{minipage}%
\quad
\begin{minipage}[b]{0.32\linewidth}
\includegraphics[width=\linewidth]{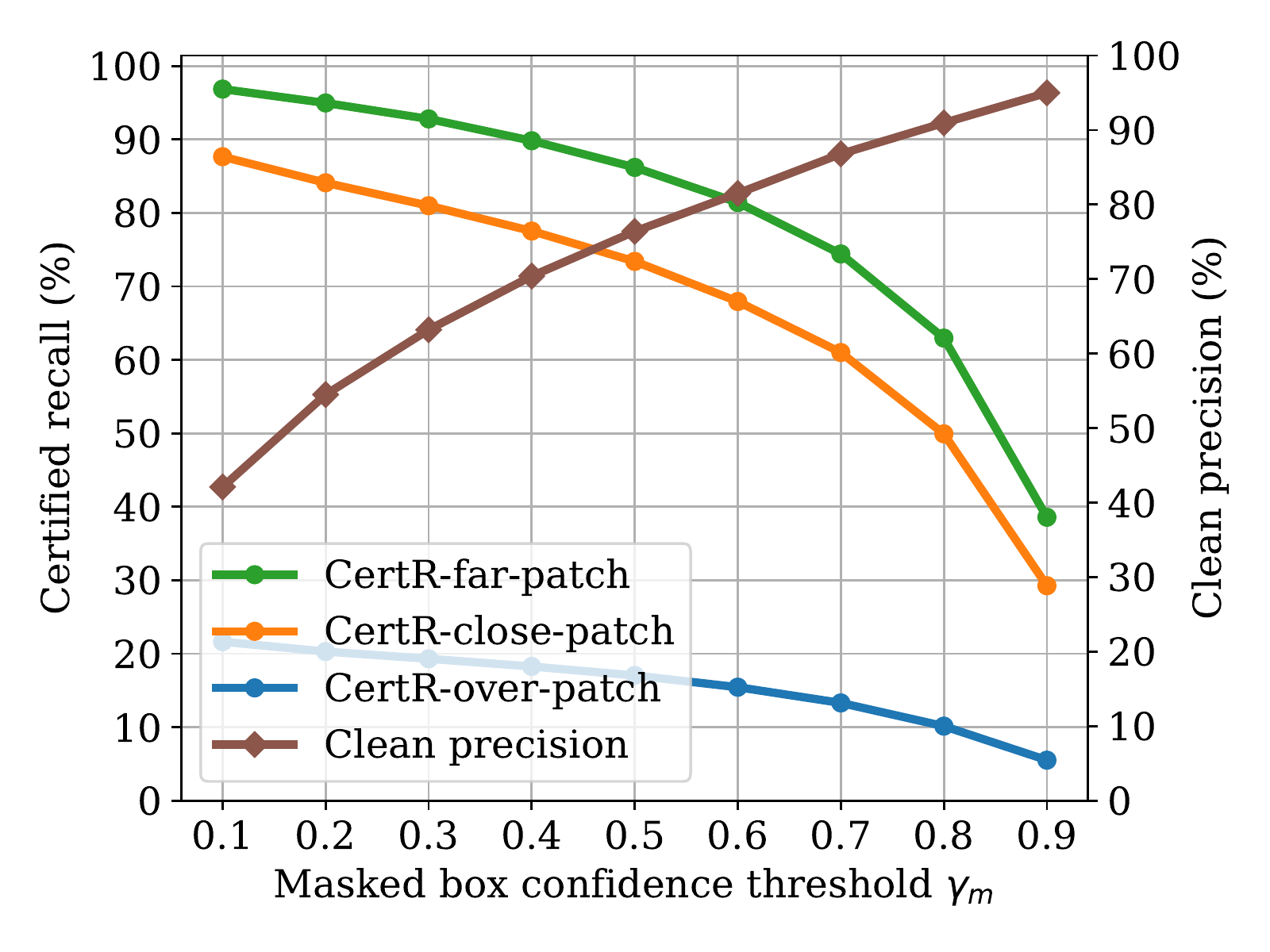}
\vspace{-2em}
\end{minipage}%
\quad
\begin{minipage}[b]{0.32\linewidth}
\includegraphics[width=\linewidth]{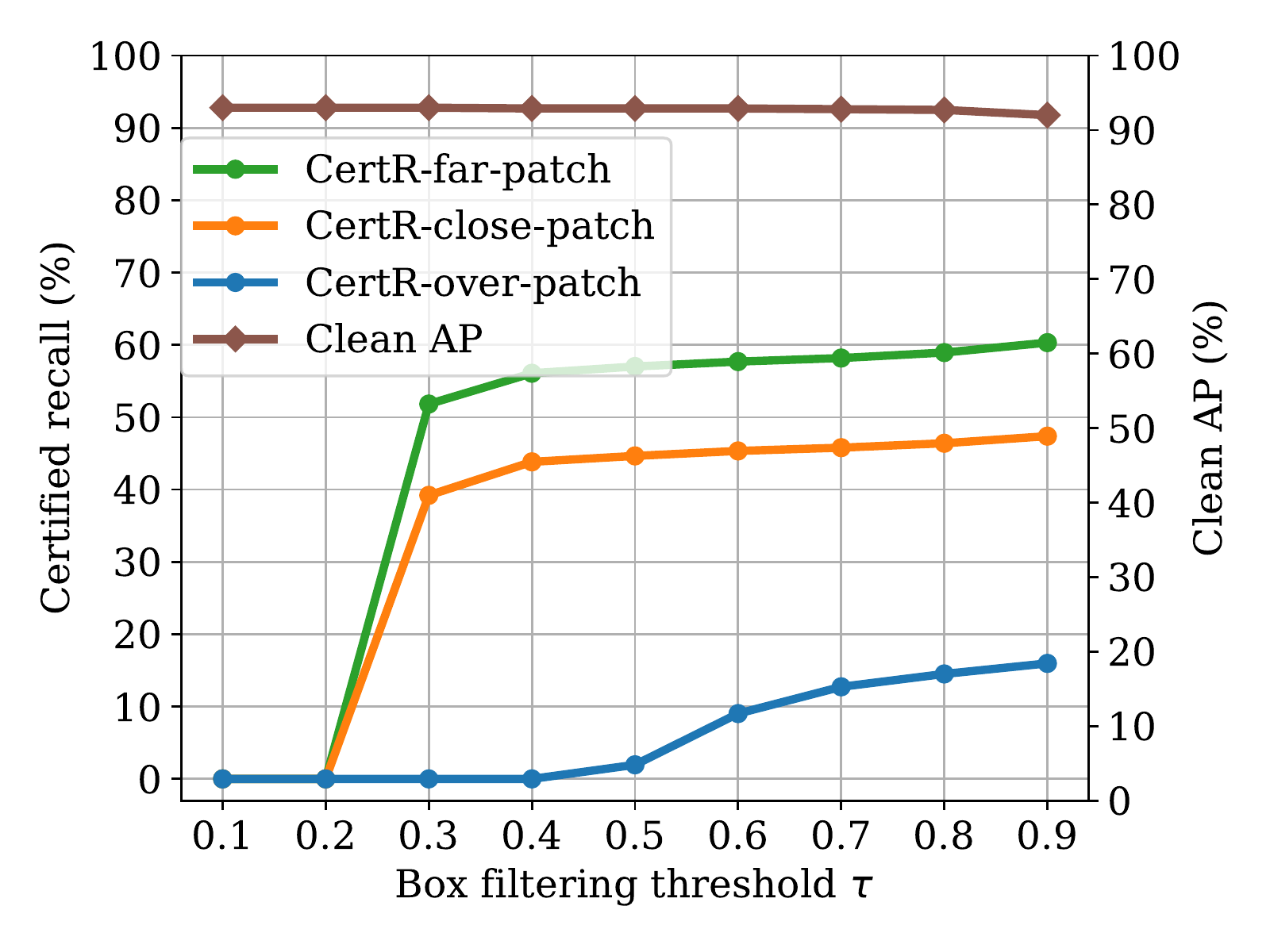}
\vspace{-2em}
\end{minipage}%
\caption{The effects of different hyperparameters on \framework with larger $T=0.2$ (VOC; left to right: $k$, $\gammam$, $\tau$)}
\label{fig-larger-T-voc}
\end{figure*}

\begin{figure}[t]
    \centering
    \includegraphics[width=0.8\linewidth]{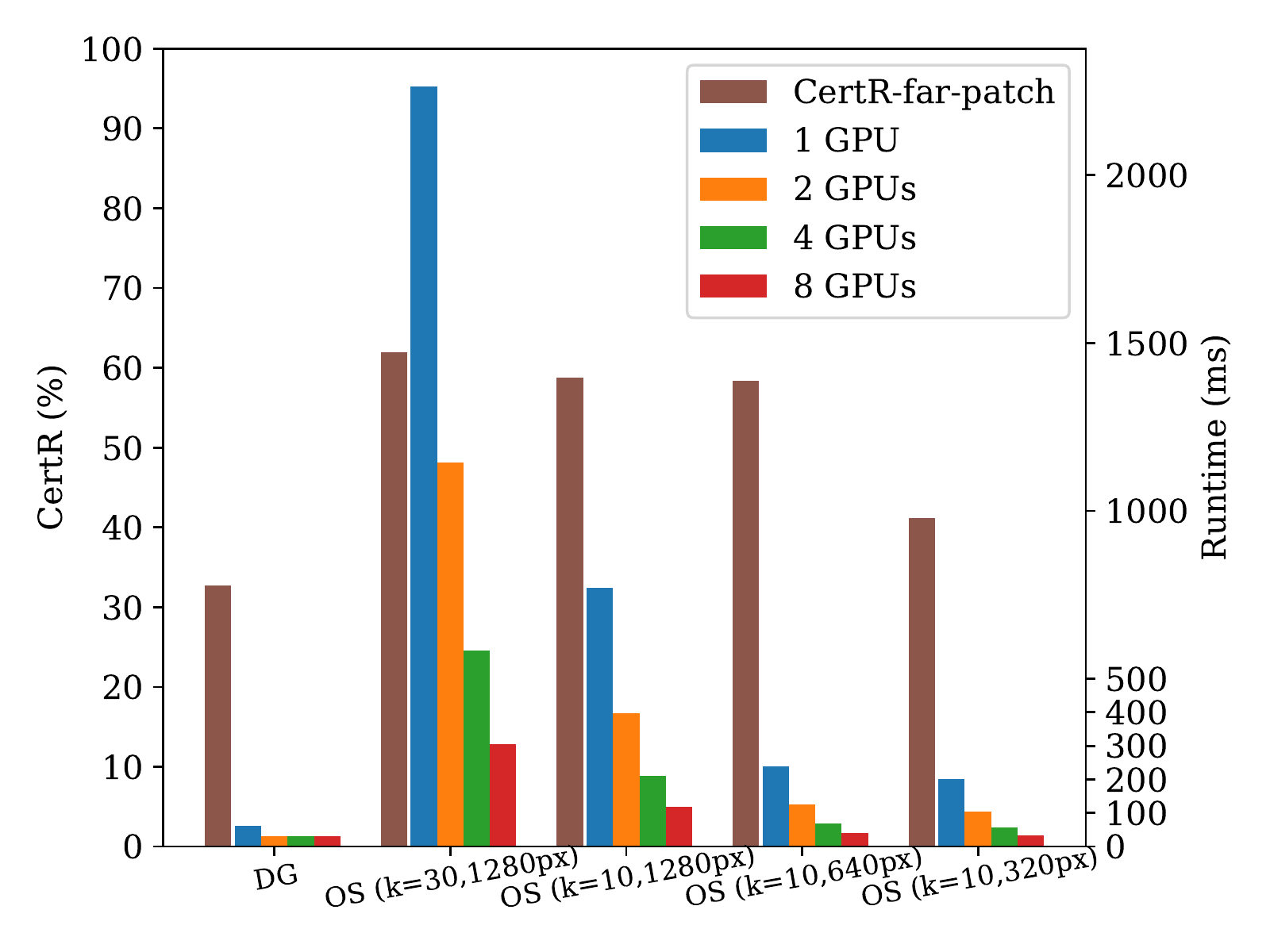}
    \vspace{-1.5em}
    \caption{Wall-lock per-image runtime of DetectorGuard (DG)~\cite{xiang2021detectorguard} and ObjectSeeker (OS) on VOC}
    \label{fig-runtime}
\end{figure}

\textbf{Future of robust object detection.} As a summary of this section, we believe that the ultimate defense objective is a robust-prediction defense with IoU robustness (Notion IV). The defense problem can then be interpreted as a list decoding problem: we aim to output a list of bounding boxes such that a subset of bounding boxes have high IoUs with all ground-truth boxes. Furthermore, to build even stronger defenses, we can also consider a stronger attacker who also wants to increase FP errors. We can deploy an empirical defense via classifying all predicted bounding boxes and removing boxes with inconsistent labels~\cite{xiang2021detectorguard}.

\section{Additional Experimental Discussions}\label{apx-more-discussion}
In this section, we provide quantitative discussions on objects of different sizes, absolute defense runtime, and certification threshold $T$.

\textbf{\framework's robustness for objects of different sizes.} In this analysis, we aim to understand the relationship between robustness and object sizes. We divide VOC objects into different groups based on their sizes (occupying 0-5\%, 5-10\%, $\cdots$, 95-100\% image pixels) and plot their CertR in Figure~\ref{fig-obj-size-voc}. As shown in the figure, larger objects tend to have higher robustness. This is because vanilla object detectors have a better chance to detect larger objects on masked images. We note that the CertR for over-patch is more sensitive to object sizes, partially due to that an over-patch can sometimes occlude the major part of small objects and make robust object detection hard or even impossible.

\textbf{Additional discussion on absolute runtime.} In Figure~\ref{fig-efficiency-voc}, we demonstrated that we could balance the trade-off between efficiency and robustness by tuning the parameter $k$. In this section, we provide further discussions on implementation-level optimizations for absolute runtime. Specifically, we consider using different input image sizes and different numbers of GPUs. In Figure~\ref{fig-runtime}, we report runtime results for DetectorGuard~\cite{xiang2021detectorguard} and \framework using different images sizes, different numbers of NVIDIA RTX A4000 GPUs, and different $k$ (for \framework). First, we can see that using multiple GPUs can significantly reduce wall-clock runtime, given that the inference on masked images is trivially parallelizable. For example, if we set $k=10$ and use an image size of 1280px, using 8 GPUs can reduce runtime from 770.6ms 117.4ms (6.6$\times$ speedup). In contrast, DetectorGuard's runtime improvement with multiple GPUs is limited (up to 2$\times$). Second, we can see that reducing the image size can also significantly improve runtime. For example, if we resize images from 1280px (the default value used in the paper) to 640px, the runtime for \framework with $k=10$ on 8 GPUs improves from 117.4ms to 39.6ms (3.0$\times$ speedup), while the robustness is only slightly affected (from 58.8\% to 58.4\%). However, we note that further reducing the image size (to 320px) can greatly hurt the robustness (to 41.2\%). We note that DetectorGuard~\cite{xiang2021detectorguard}'s robustness module uses fixed image size and has limited benefit from image resizing.

In summary, Figure~\ref{fig-runtime} demonstrates the feasibility to reduce absolute runtime via implementation-level optimizations. We can have a latency of 40.0ms (25fps) on VOC images using $k=10$, an image size of 640px, and 8 GPUs, while maintaining high CertR. In practice, we should carefully configure the \framework defense to meet computation constraints and robustness objectives, as discussed in Section~\ref{sec-discussion-limitation}.

\begin{figure*}
\centering
\begin{minipage}[b]{0.32\linewidth}
\includegraphics[width=\linewidth]{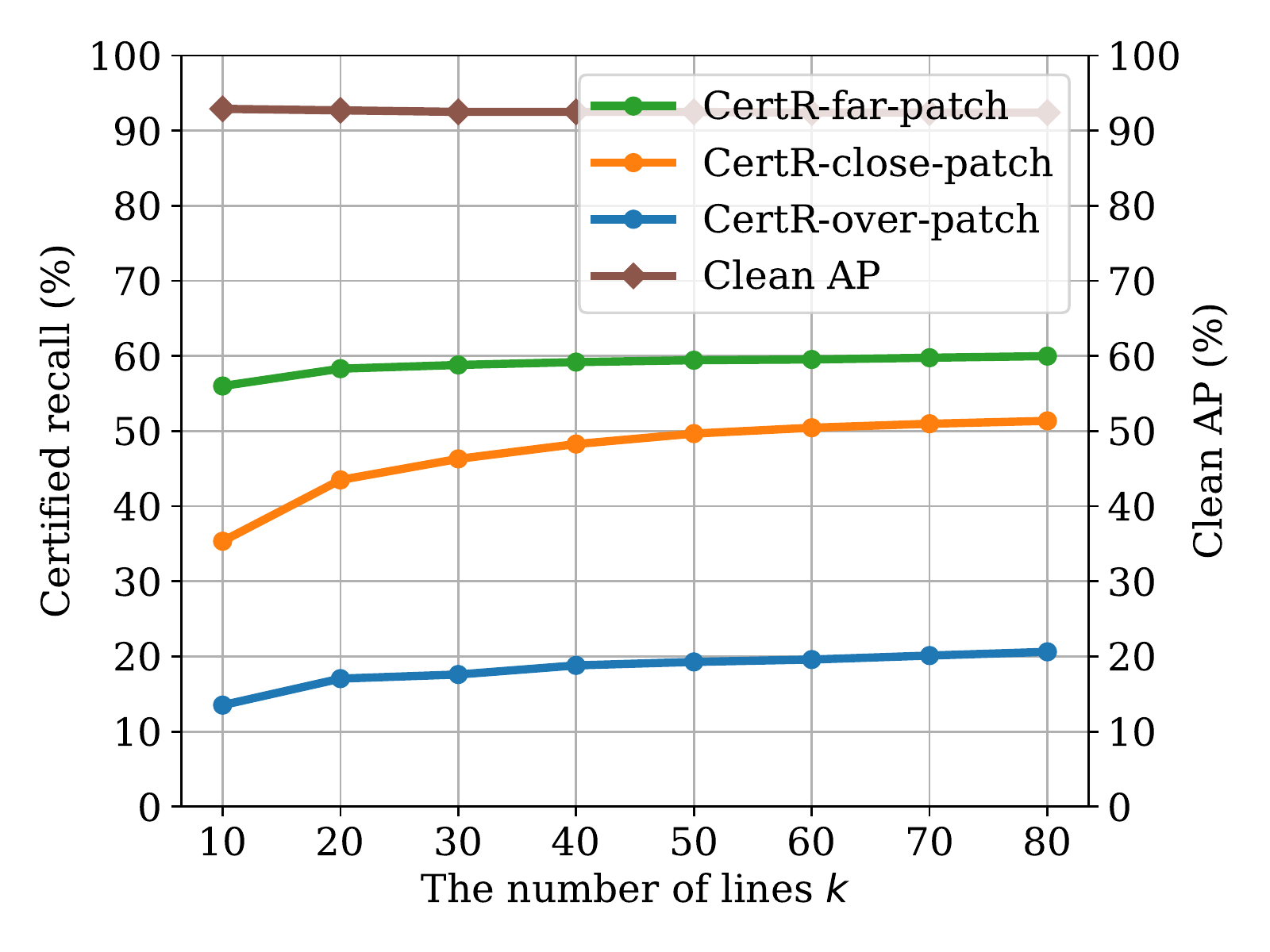}
\vspace{-1em}
\end{minipage}%
\quad
\begin{minipage}[b]{0.32\linewidth}
\includegraphics[width=\linewidth]{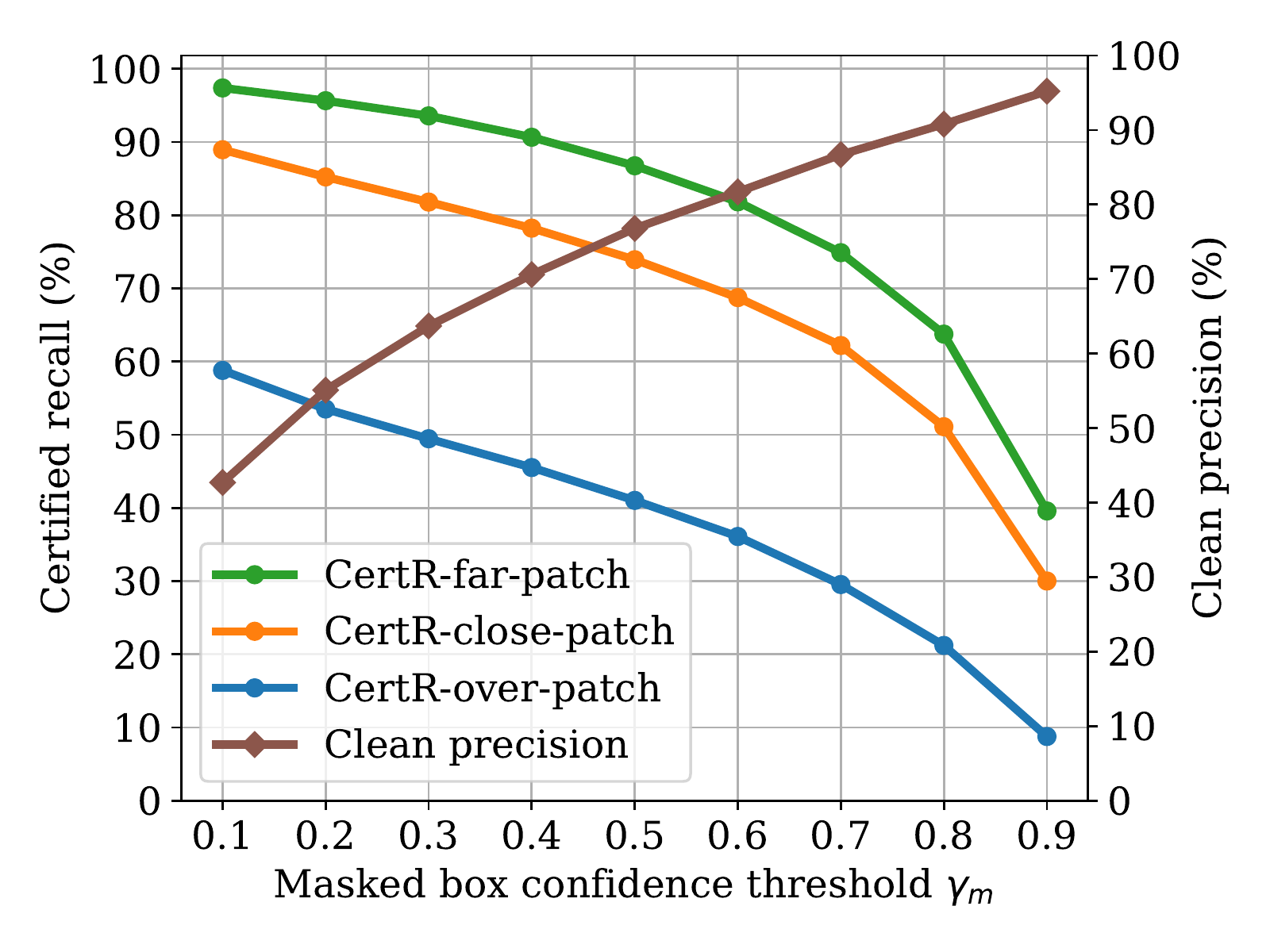}
\vspace{-1em}
\end{minipage}%
\quad
\begin{minipage}[b]{0.32\linewidth}
\includegraphics[width=\linewidth]{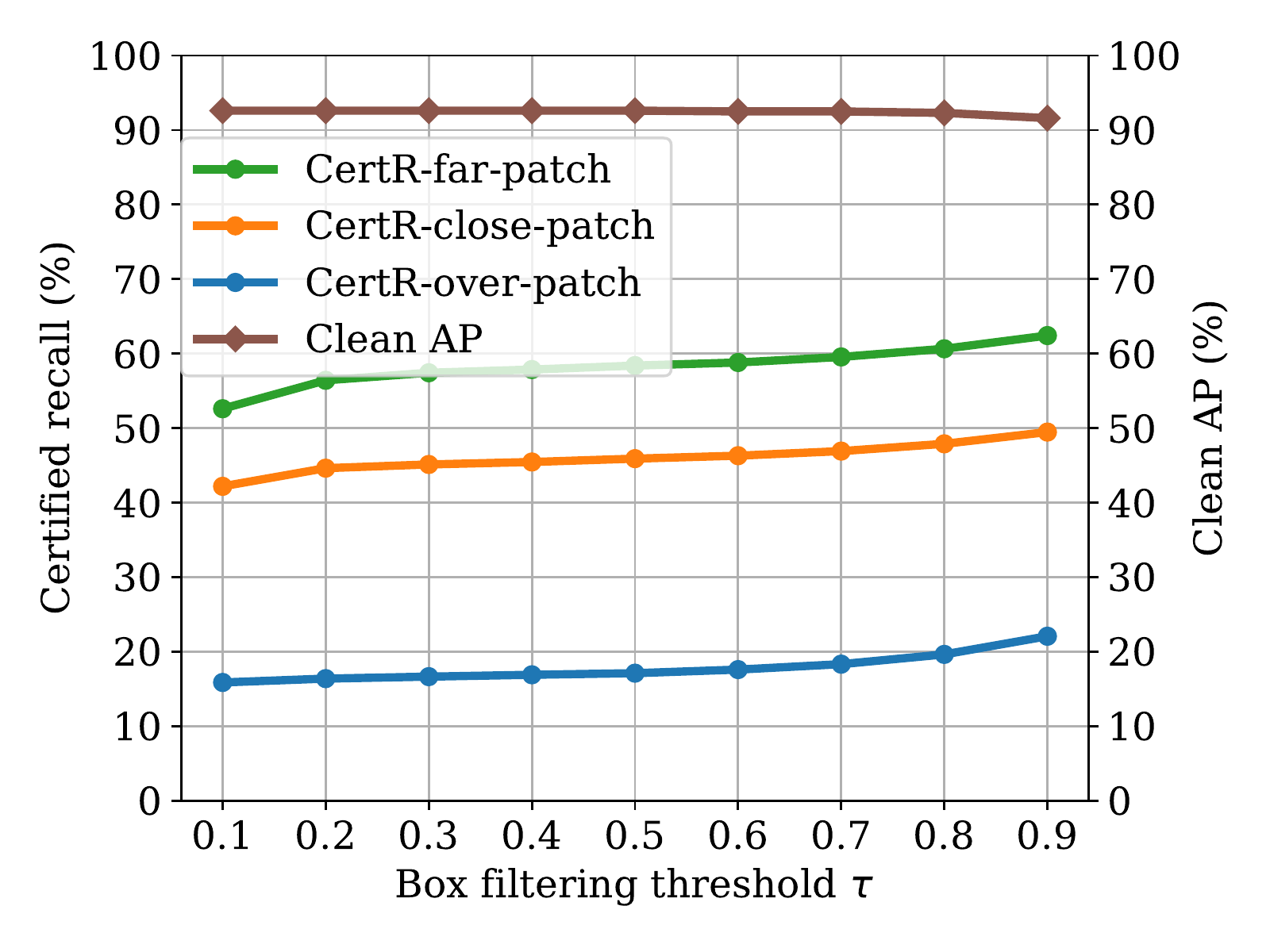}
\vspace{-1em}
\end{minipage}%

\begin{minipage}[b]{0.32\linewidth}
\includegraphics[width=\linewidth]{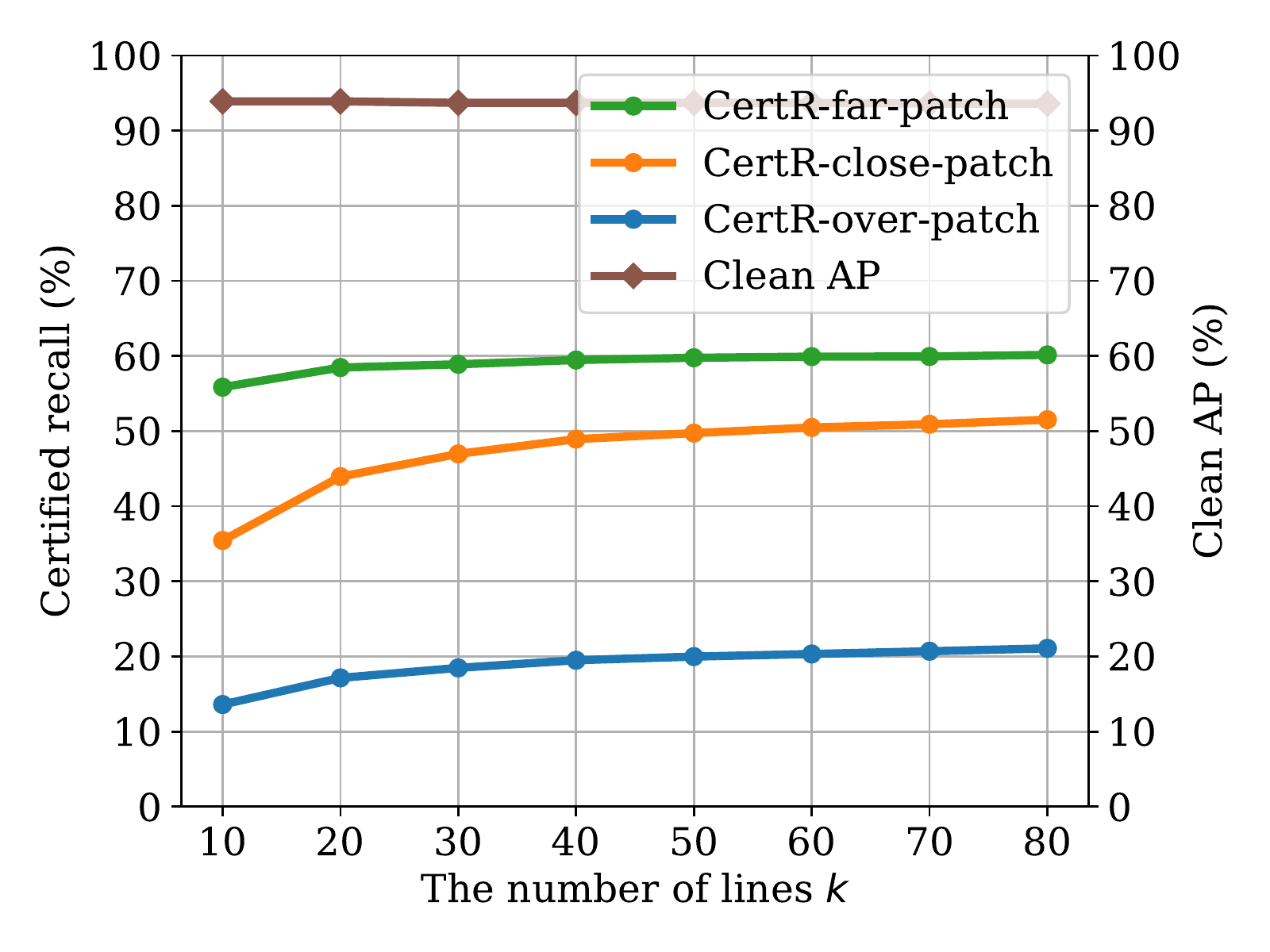}
\vspace{-2em}
\end{minipage}%
\quad
\begin{minipage}[b]{0.32\linewidth}
\includegraphics[width=\linewidth]{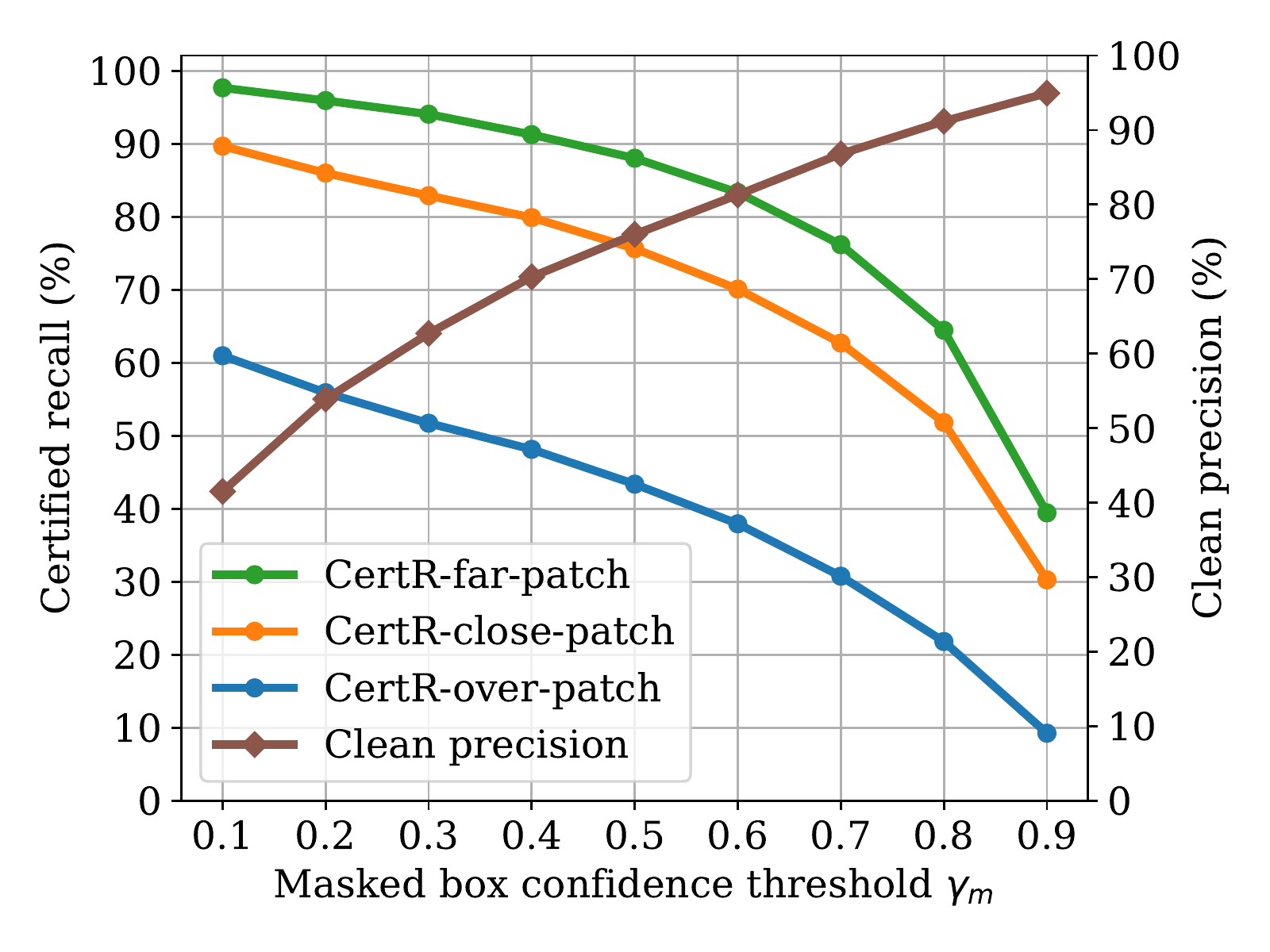}
\vspace{-2em}
\end{minipage}%
\quad
\begin{minipage}[b]{0.32\linewidth}
\includegraphics[width=\linewidth]{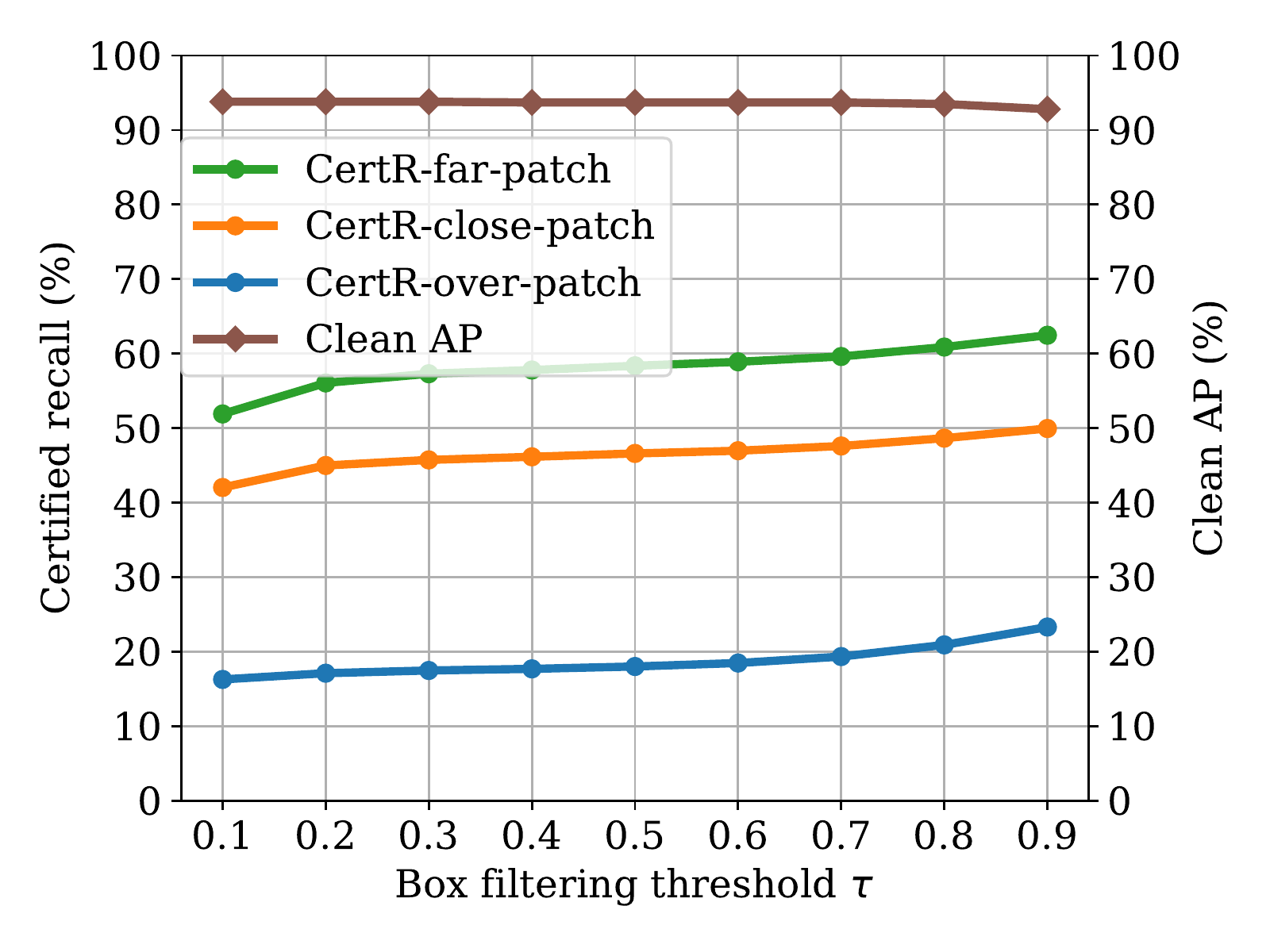}
\vspace{-2em}
\end{minipage}%
\caption{The effects of different hyperparameters on two disjoint data subsets (YOLOR and VOC). Upper row: subset 1; lower row: subset 2. From left to right: $k,\gammam,\tau$.}
\label{fig-split}
\end{figure*}

\textbf{Additional discussions on certification threshold $T$.} As discussed in Section~\ref{sec-formulation-defense} and Section~\ref{sec-defense-certification}, the threshold $T$ determines the strength of certification. In Section~\ref{sec-eval}, we set the default $T=0$ to enable a fair comparison with DetectorGuard~\cite{xiang2021detectorguard}. We note that Definition~\ref{dfn-certifiable-ioa} requires $\text{IoA}>T$ (strict inequality); thus, using $T=0$ is non-trivial as it requires that we can at least detect any tiny part of the object. We also note that DetectorGuard is limited to $T=0$ while \framework is compatible with non-zero T (we reported CertR with different T in Figure~\ref{fig-certify-voc} in Section~\ref{sec-eval-detailed}). Here, in Figure~\ref{fig-larger-T-voc}, we re-analyze the effects of the three most important defense hyperparameters ($k,\gammam,\tau$) on CertR with a larger T of 0.2. As shown in the figure, the trends of curves for $k$ and $\gammam$ largely stay the same as their counterparts of $T=0$ (Figures~\ref{fig-k-voc} and \ref{fig-mconf-voc}); the only difference is that the curves shift down a bit due to the stronger certification requirement. For the box filtering threshold $\tau$, we can see that CertR drops drastically when we use smaller $\tau$, compared to Figure~\ref{fig-tau-voc}. This is because the certification bound $ \mathbb{L}_{\textsc{IoA}}(\bfbgt,\bfbm,\tau) = ({|\bfbm|\cdot \tau - |\bfbm\setminus\bfbgt|})/{|\bfbgt|}$ becomes too low with a small $\tau$, and thus makes it hard to certify for a non-zero $T$.

\textit{Remark: the choice of $T$ in practice.} The semantic meaning of T for IoA robustness is how much of the object can be detected. In practice, the choice of $T$ for robustness \textit{evaluation} should depend on the application. For example, if we want to perform object counting or traffic sign detection/recognition, $T=0$ could be good enough. However, if we consider a robot safely navigating through a large number of large obstacles (without any collision), we might want to use a larger non-zero T. Moreover, it is also reasonable to consider different T at the same time. For example, we can report CertR for different T as done in Figure~\ref{fig-certify-voc}. Another option is to report the averaged CertR across different T. The reported number is approximately the area under the curve (AUC) for Figure~\ref{fig-certify-voc} (34.5\% for far-patch; 25.7\% for close-patch; 4.35\% for over-patch).

\textbf{Additional discussion on dataset splits and hyperparameter selection.} In this analysis, we discuss the hyperparameter selection process. First, we note that \framework does not involve any special training; we directly apply \framework to vanilla object detectors. Therefore, we only need to select hyperparameters to instantiate \framework. To select hyperparameters, we plot the curves similar to Figures~\ref{fig-k-voc} and \ref{fig-tau-voc} using the validation set, and then pick a reasonable point on the curve as the default hyperparameter.

Second, in Figure~\ref{fig-split}, we randomly split the test data into two disjoint subsets and report analysis results for hyperparameters $k,\gammam,\tau$. We can see that the plots for different hyperparameters look almost identical for two disjoint subsets, and these plots are almost identical to similar to Figures~\ref{fig-k-voc}, \ref{fig-mconf-voc}, and \ref{fig-tau-voc} in Section~\ref{sec-eval-detailed}. This demonstrates that the hyperparameters used in the paper are not overfitted to the test set. 

\section{Additional Evaluation for More Datasets}\label{apx-eval}

\begin{table*}[t]
    \centering
    \caption{Performance of vanilla undefended models, \framework, and DetectorGuard~\cite{xiang2021detectorguard} on KITTI~\cite{kitti}}  \label{tab-eval-kitti}
    \vspace{-1em}
    \resizebox{\linewidth}{!}
{ \small
\begin{tabular}{l|c|c|c|c|c|c|c|c|c|c|c}
    \toprule
   & &\multicolumn{5}{c|}{YOLOR~\cite{yolor}}&\multicolumn{5}{c}{Swin~\cite{liu2021swin}}\\
    \midrule
    
 \multirow{2}{*}{\diagbox{Model}{Metric}}&Certify &\multirow{2}{*}{$\text{AP}_{0.5}$} & \multirow{2}{*}{FAR}   &  \multicolumn{3}{c|}{Certified recall (@0.8)} &\multirow{2}{*}{$\text{AP}_{0.5}$} & \multirow{2}{*}{FAR}   &  \multicolumn{3}{c}{Certified recall (@0.8)}\\
 
  &class?&&&{far-patch}& {close-patch}& {over-patch} &&&{far-patch}& {close-patch}& {over-patch} \\
    \midrule
 Vanilla (undefended) & -- & 93.2\%&--&--&--&--&89.5\%&--&--&--&-- \\
 \framework&{\cmark}&92.9\%&--&82.3\% &48.9\% & 10.3\%&89.4\%&--&68.8\%&43.7\%&10.5\%\\
 \framework&{\xmark}&93.0\%&--&82.6\%& 49.0\%& 10.3\%&89.4\%&--&69.0\%& 43.8\%&10.6\%\\
  DetectorGuard~\cite{xiang2021detectorguard}&{\xmark}&93.0\%&0.2\%&24.2\%&8.3\%&0.7\%&89.4\%&0.1\%&23.9\%&8.3\%&0.7\%\\
  
      \bottomrule
    \end{tabular}}
  
\end{table*}

\begin{figure*}
\centering
\begin{minipage}[b]{0.32\linewidth}
\includegraphics[width=\linewidth]{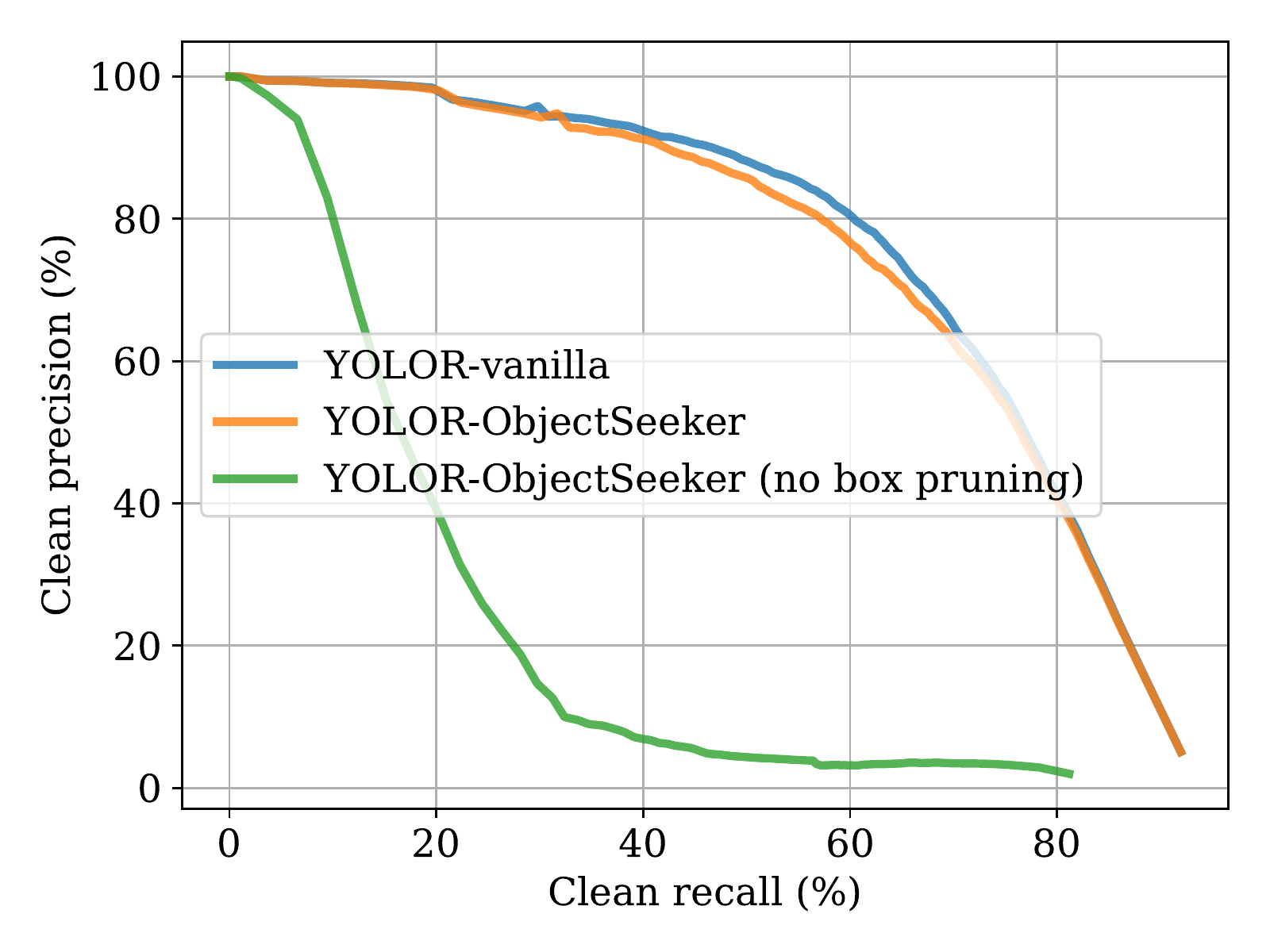}
\vspace{-2em}
    \caption{Clean precision vs. clean recall (COCO)}
    \label{fig-prec-coco}
\end{minipage}%
\quad
\begin{minipage}[b]{0.32\linewidth}
\includegraphics[width=\linewidth]{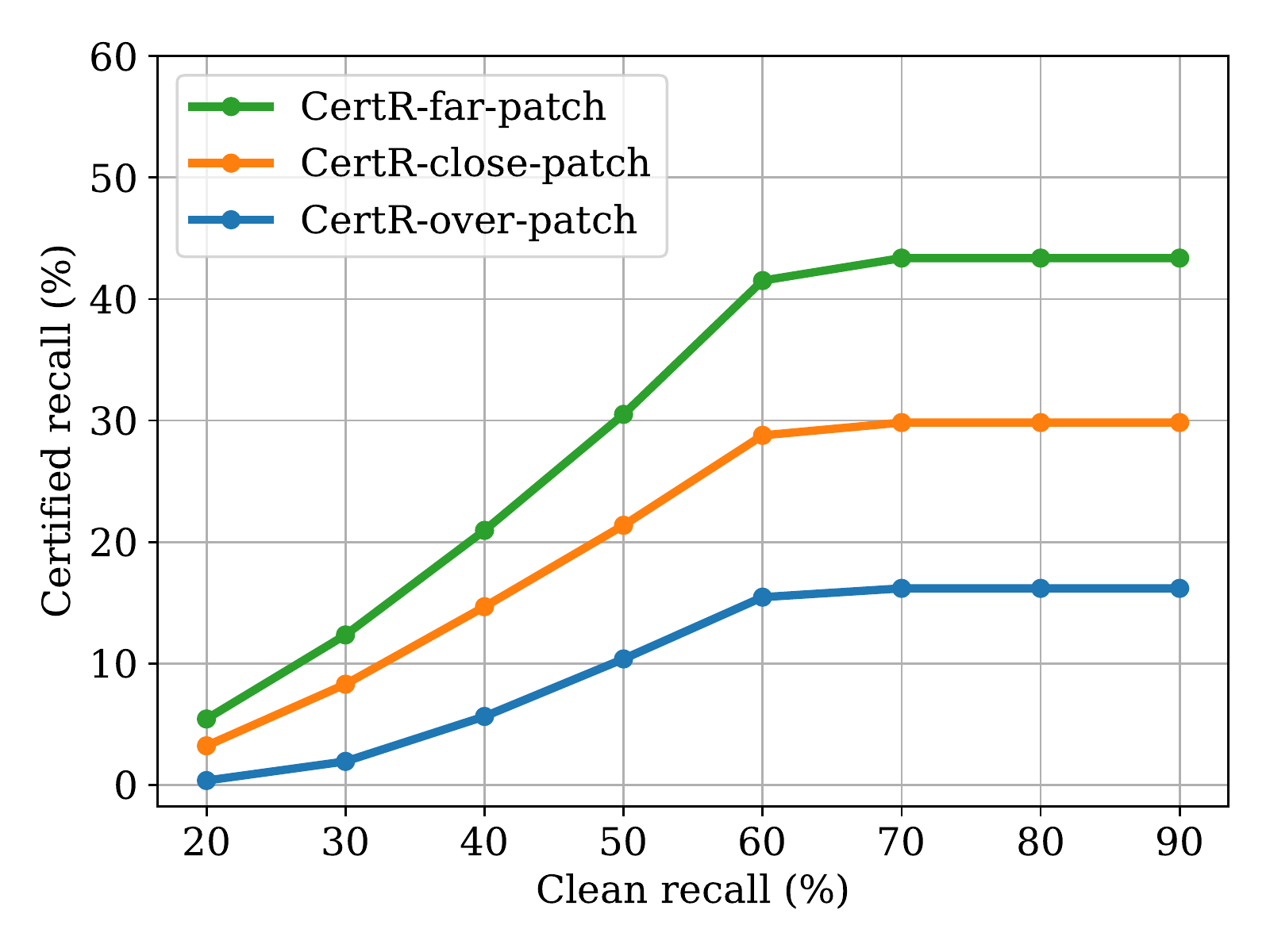}
\vspace{-2em}
    \caption{Certified recall vs. clean recall (COCO)}
    \label{fig-cr-coco}
\end{minipage}%
\quad
\begin{minipage}[b]{0.32\linewidth}
\includegraphics[width=\linewidth]{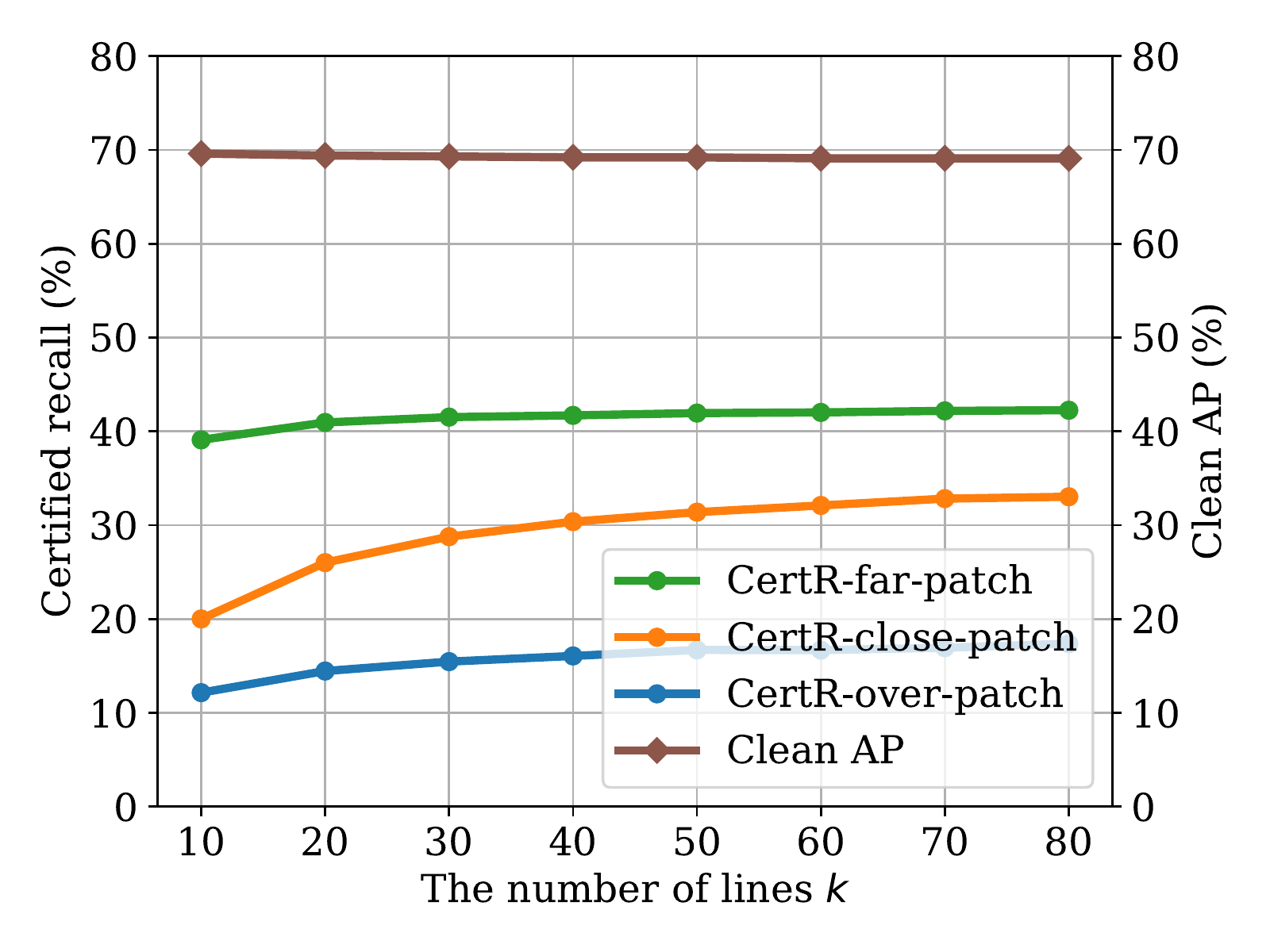}
\vspace{-2em}
    \caption{Effect of the number of lines $k$ (COCO)}
    \label{fig-k-coco}
\end{minipage}%
\end{figure*}

\begin{figure*}
\centering
\begin{minipage}[b]{0.32\linewidth}
\includegraphics[width=\linewidth]{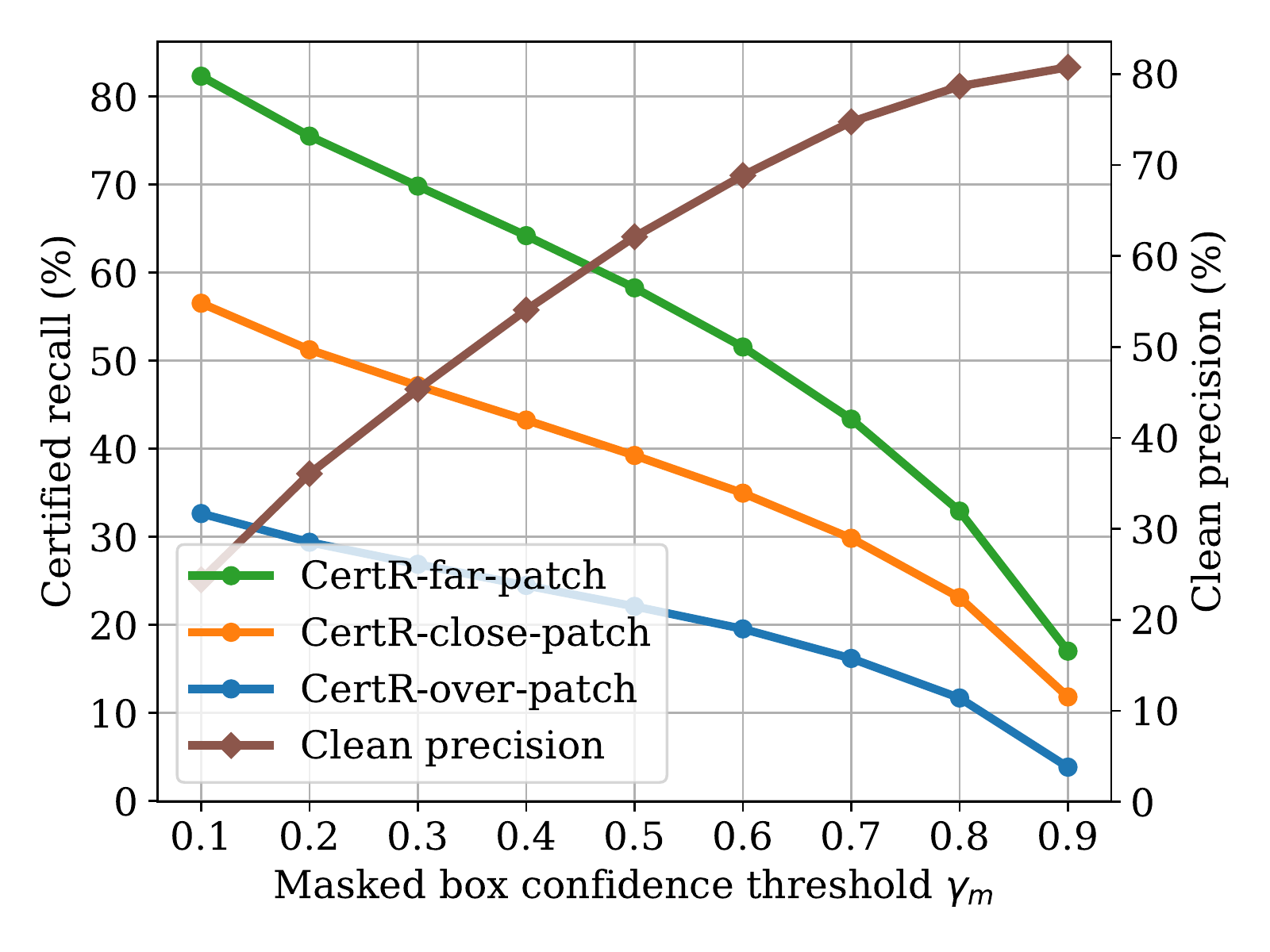}
\vspace{-2em}
    \caption{Effect of masked box confidence threshold $\gammam$ (COCO)}
    \label{fig-mconf-coco}
\end{minipage}%
\quad
\begin{minipage}[b]{0.32\linewidth}
\includegraphics[width=\linewidth]{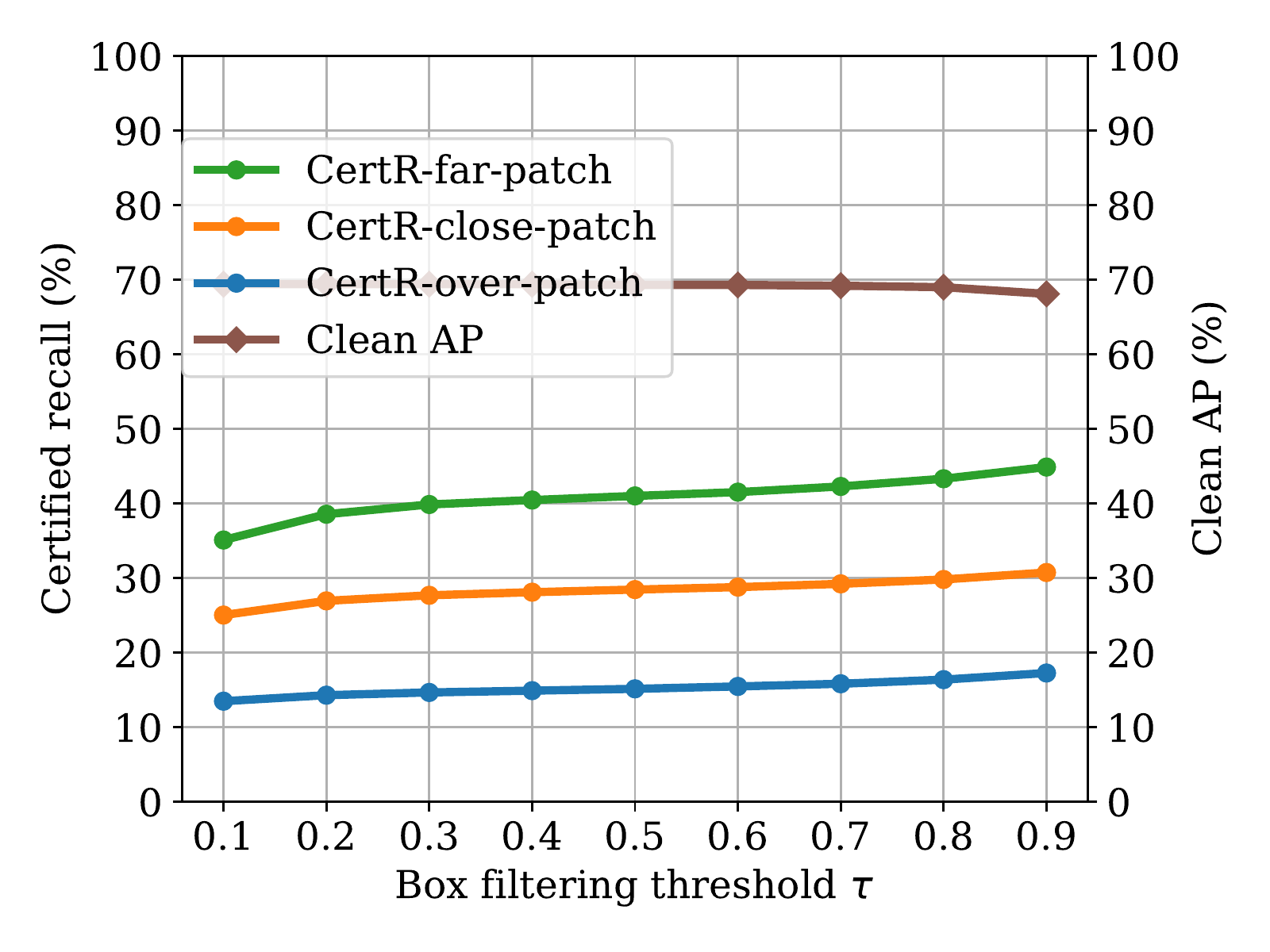}
\vspace{-2em}
    \caption{Effect of box filtering threshold $\tau$ (COCO)}
    \label{fig-tau-coco}
\end{minipage}%
\quad
\begin{minipage}[b]{0.32\linewidth}
\includegraphics[width=\linewidth]{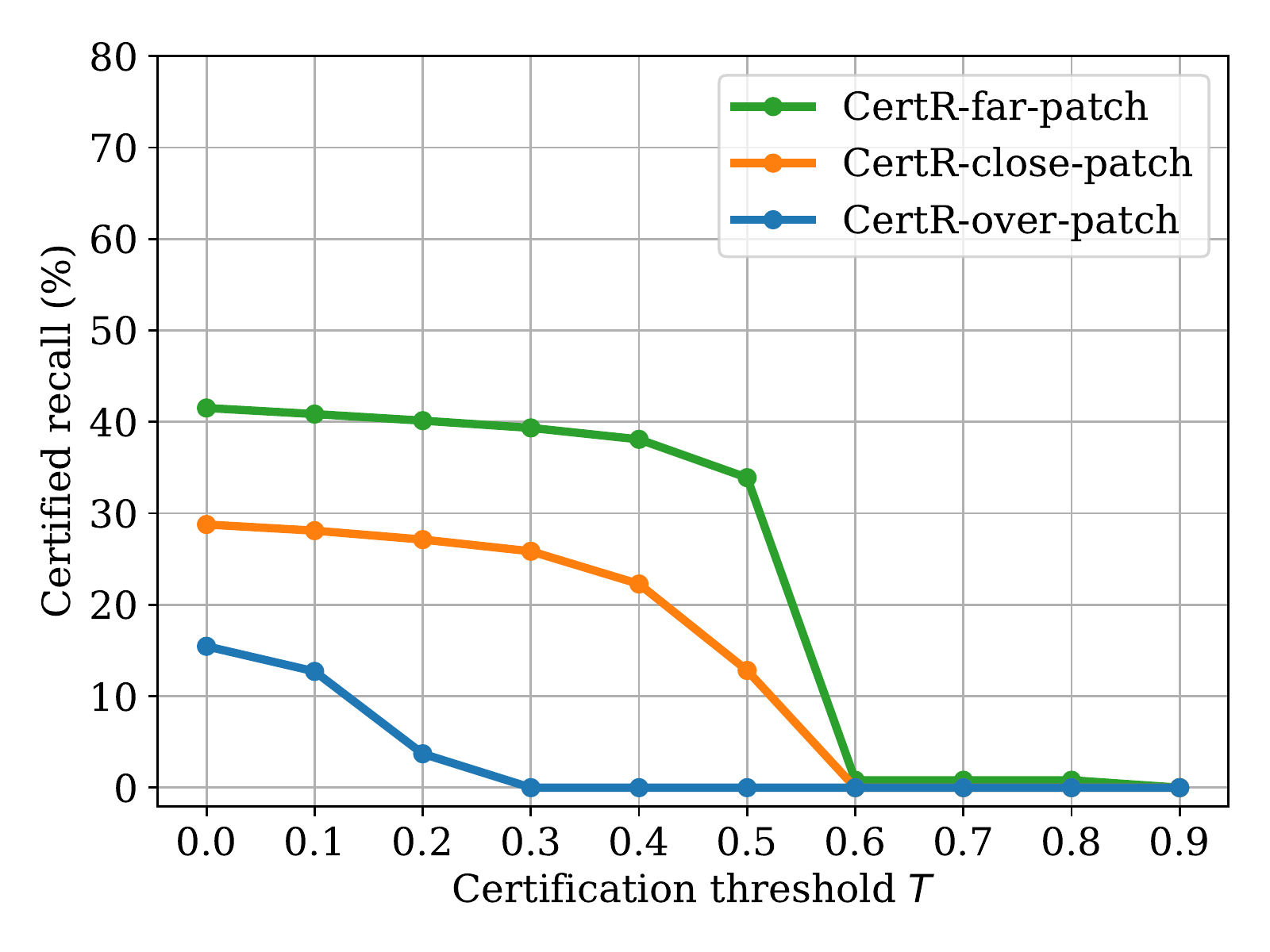}
\vspace{-2em}
    \caption{\framework performance for different certification thresholds $T$ (COCO)}
    \label{fig-certify-coco}
\end{minipage}%
\end{figure*}

\begin{figure}[t]
    \centering
    \includegraphics[width=0.8\linewidth]{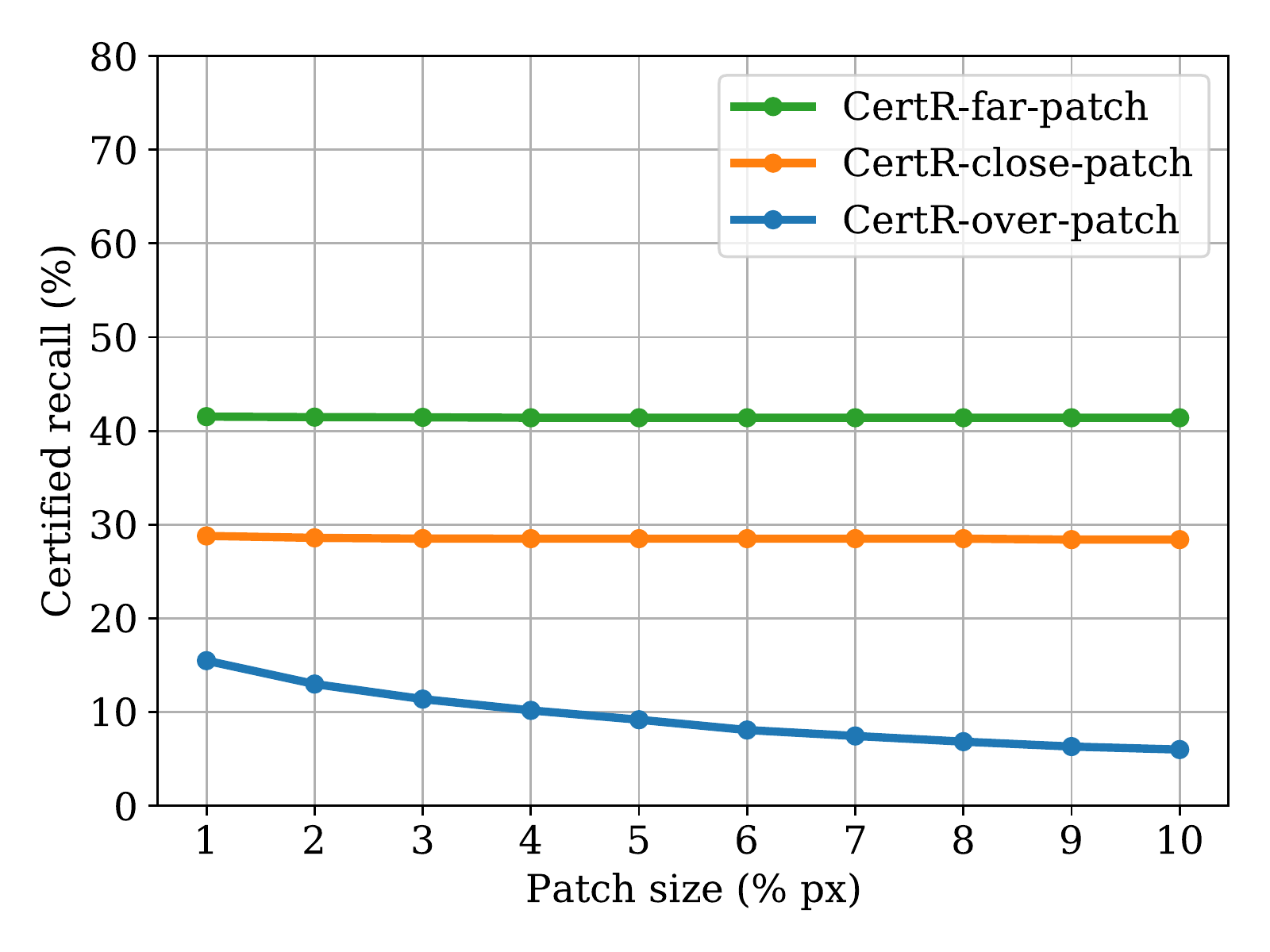}
    \vspace{-1em}
    \caption{\framework performance against different patch sizes (COCO)}
    \label{fig-patch-coco}
\end{figure}

In this section, we report additional experiment results for different datasets to further demonstrate the general applicability of \framework. We first report defense performance for an additional dataset KITTI~\cite{kitti}, we then include detailed analysis for YOLOR and COCO (similar to what we have in Section~\ref{sec-eval-detailed} for YOLOR and VOC).

\textbf{Additional Evaluation Results for KITTI.}
In Section~\ref{sec-eval}, we report evaluation results on the VOC~\cite{voc} and COCO~\cite{coco} datasets and demonstrate significant improvements in certified recalls over DetectorGuard~\cite{xiang2021detectorguard}. In this section, we report defense performance for an additional dataset KITTI~\cite{kitti}.

KITTI is a dataset for autonomous driving applications, which contains both 2D camera images and 3D point clouds. Following DetectorGuard~\cite{xiang2021detectorguard}. We use 80\% of its 7481 2D images for training and the remaining 20\% for validation. We merge all classes into three classes: \texttt{car} (all different classes of vehicles), \texttt{pedestrian}, \texttt{cyclist}. We use the same set of defense parameters of VOC to instantiate \framework for KITTI. We then report the defense performance of \framework and DetectorGuard~\cite{xiang2021detectorguard} in Table~\ref{tab-eval-kitti}. As shown in the table, \framework has a similarly high clean performance as vanilla undefended models and achieves significantly higher certified recall than DetectorGuard~\cite{xiang2021detectorguard}; the observation is similar to that of Table~\ref{tab-main-eval} in Section~\ref{sec-eval-main}. We note that the CertR@0.8 of \framework-YOLOR for far-patch becomes larger than 80\% (the clean recall value); this is possible because we are setting the certification threshold $T=0$, which is an easier condition to satisfy compared to the condition for clean evaluation (requiring IoU is larger than 0.5).

\textbf{Additional experiment results for COCO.} In Figure~\ref{fig-prec-coco} and Figure~\ref{fig-cr-coco}, we plot the clean precision-recall curve and the CertR-recall curve for COCO. The observation is similar to that for VOC in Section~\ref{sec-eval-detailed}. We note that the certified recall of COCO does not further increase as clean recall exceeds 70\%. This is because, the masked box confidence threshold $\gammam = \max(\alpha,\gammab + (1-\gammab)\cdot \beta)$ starts to take the value $\alpha$ when $\gammab$ takes a low value. 

In Figure~\ref{fig-k-coco}, Figure~\ref{fig-mconf-coco}, and Figure~\ref{fig-tau-coco}, we report defense performance with different defense parameters $k,\gammam,\tau$. In Figure~\ref{fig-certify-coco} and Figure~\ref{fig-patch-coco}, we further report certified robustness with a larger certification threshold $T$ and against a larger patch. The observations from these figures are similar to those reported for VOC in Section~\ref{sec-eval-detailed}. This further demonstrates that \framework works well for both easier and harder object detection tasks.

\end{document}